\documentclass{article} 
\usepackage{iclr2026_conference,times}

\usepackage{amsmath,amsfonts,bm}

\def\eqref#1{equation~\ref{#1}}

\def\1{\bm{1}}

\DeclareMathAlphabet{\mathsfit}{\encodingdefault}{\sfdefault}{m}{sl}
\SetMathAlphabet{\mathsfit}{bold}{\encodingdefault}{\sfdefault}{bx}{n}

\usepackage{hyperref}
\usepackage{url}
\usepackage{makecell}
\usepackage{amsmath} 
\usepackage{enumitem}
\usepackage{graphicx}
\usepackage{wrapfig}  
\usepackage{array}
\usepackage{algorithm}
\usepackage{algorithmic}
\usepackage{booktabs}       
\usepackage{amsfonts}       
\usepackage{subfig}         
\usepackage{nicefrac}       
\usepackage{microtype}      
\usepackage{xcolor}         
\usepackage{listings} 
\usepackage{amsthm}
\newtheorem{definition}{Definition}[section]
\newtheorem{proposition}{Proposition}

\hypersetup{
    colorlinks,
    linkcolor={red!50!black},
    citecolor={blue!50!black},
    urlcolor={blue!80!black}
}

\title{Federated Learning with Profile Mapping \\ under Distribution Shifts and Drifts}

\author{Mohan Li\thanks{Equal contribution.} , Dario Fenoglio\footnotemark[1] , Martin Gjoreski \& Marc Langheinrich \\
Università della Svizzera italiana, Switzerland\\
}

\iclrfinalcopy 
\begin{document}

\maketitle

\begin{abstract}
Federated Learning (FL) enables decentralized model training across clients without sharing raw data, but its performance degrades under real-world data heterogeneity.
Existing methods often fail to address distribution shift across clients and distribution drift over time, or they rely on unrealistic assumptions such as known number of client clusters and data heterogeneity types, which limits their generalizability.
We introduce \textsc{Feroma}, a novel FL framework that explicitly handles both distribution shift and drift without relying on client or cluster identity.
\textsc{Feroma} builds on client distribution profiles—compact, privacy-preserving representations of local data—that guide model aggregation and test-time model assignment through adaptive similarity-based weighting.  
This design allows \textsc{Feroma} to dynamically select aggregation strategies during training, ranging from clustered to personalized, and deploy suitable models to unseen, and unlabeled test clients without retraining, online adaptation, or prior knowledge on clients' data.  
Extensive experiments show that compared to 10 state-of-the-art methods, \textsc{Feroma} improves performance and stability under dynamic data heterogeneity conditions---an average accuracy gain of up to 12 percentage points over the best baselines across 6 benchmarks---while maintaining computational and communication overhead comparable to FedAvg.  
These results highlight that distribution-profile-based aggregation offers a practical path toward robust FL under both data distribution shifts and drifts. Code is available at \url{https://github.com/dariofenoglio98/FEROMA}.
\end{abstract}


\section{Introduction}
\label{sec.intro}
Federated Learning (FL)~\citep{mcmahan2017} has become a promising paradigm for training models collaboratively across distributed clients without sharing their private data. However, one of the central challenges in FL is the presence of heterogeneous client data, which can significantly degrade model performance if not properly handled~\citep{FLadvances}. In real-world deployments, clients rarely hold independent and identically distributed (IID) data~\citep{noniid_1,noniid_2}.  
They often exhibit two forms of heterogeneity:  
\emph{distribution shift}, where different clients possess distinct data distributions~\citep{CFL,apfl,fedrc}, and  
\emph{distribution drift}, where a single client’s data distribution also evolves over time~\citep{feddrift,concept_drift,concept_drift_2}.  
These dynamics challenge the notion of a single global model that performs uniformly well across all clients throughout training and deployment.

Vanilla FedAvg~\citep{mcmahan2017} struggles to converge efficiently due to its uniform aggregation of all client updates, regardless of their data distributions.
Under distribution shift (\autoref{fig:intro_shift_drift} Left), clients whose data distributions are underrepresented in the global model receive limited benefit, resulting in persistently low accuracy and slow convergence.
This issue is exacerbated under distribution drift (\autoref{fig:intro_shift_drift} Right), where local data distributions evolve over time (In \autoref{fig:intro_shift_drift}, solid lines denote average accuracy across clients, while transparent lines correspond to individual client trajectories). Such drift may occur during training—causing instability and divergence in the global model—or during test time, degrading performance as clients encounter data that no longer aligns with the distribution observed during training.

Existing methods for handling heterogeneous data in FL typically fall into three categories: Clustered FL (CFL)~\citep{CFL,fedrc,feddrift,ifca,FedEM,FeSEM},
Personalized FL (PFL)~\citep{apfl,pfedme,per_survey_1,per_survey_2}, and Test-time Adaptive FL (TTA-FL)~\citep{atp,afl_1,afl_2}.  
CFL methods group clients with similar data distributions based on model parameters or training metrics, which can effectively address distribution shift. However, they often lack adaptability under training and/or test-time drifts, require prior knowledge on the number of clusters or assumptions about data distribution\citep{fedrc,ifca}, and incur significant training overhead due to the use of computationally intensive clustering techniques, as well as the transmission, evaluation, or training of multiple models per client~\citep{CFL,feddrift}. PFL methods optimize a personalized model for each client using its local data distribution, thereby mitigating the negative effects of inter- and intra-client distribution dissimilarity.
While this yields strong performance locally—particularly when ample training data is available per client—the resulting models often become highly client-specific, sacrificing robustness in favour of personalization and offering limited generalization to unseen distributions or new clients.
TTA-FL approaches are designed to handle test-time drifts, but usually rely on online adaptation or additional client interaction, which limits their practicality and efficiency in deployment. In addition, they often overlook training-time drift and shift, which can lead to unstable or slowed convergence and and degraded model performance across clients.
Despite their strengths, these existing approaches are often tailored to specific non-IID types~\citep{CFL,apfl,fedrc, feddrift,ifca,FedEM,FeSEM,pfedme} and may struggle to balance generalization, adaptability, and efficiency—factors that are increasingly important for practical FL deployments under real-world heterogeneous conditions.

\begin{wrapfigure}{r}{0.5\textwidth}
  \vspace{-20pt}
  \centering
  \includegraphics[width=0.5\textwidth]{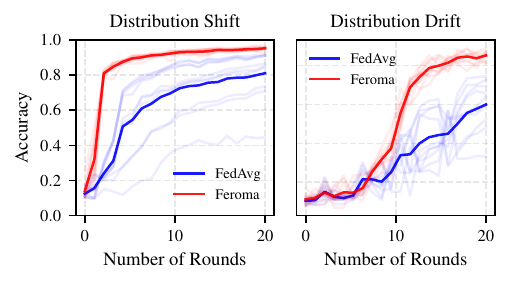}
  \caption{Comparison between FedAvg and \textsc{Feroma} under (Left) distribution shift across clients, and (Right) under distribution drift every 2 rounds.}
  \vspace{-15pt}
  \label{fig:intro_shift_drift}
\end{wrapfigure}

To bridge this gap, we introduce \textbf{Fe}derated Learning with Distribution P\textbf{ro}file \textbf{Ma}pping (\textbf{\textsc{Feroma}}), an FL framework that moves the focus from client or cluster identity to the underlying data \emph{distribution profile}. To the best of our knowledge, \textsc{Feroma} is the first general-purpose FL framework explicitly designed to address both distribution shift and distribution drift, during training as well as test time. \textsc{Feroma} extracts a lightweight, differentially private statistical profile from each client’s local data and maps it to previously observed profiles from last training round. This mapping guides model aggregation through similarity-based weighting.
Based on these profiles, \textsc{Feroma} automatically selects the most suitable aggregation strategy for each round through adaptive similarity thresholds.  
This design enables \textsc{Feroma} to remain both flexible and scalable under dynamic, real-world FL settings, without relying on any prior knowledge.  
Moreover, \textsc{Feroma} assigns trained models to unseen clients based on profile similarity, enabling robust model selection during test-time.
We summarize the key contributions of \textsc{Feroma} as follows:
\begin{itemize}[nosep,leftmargin=*]
  \item \textbf{A unified and adaptive aggregation framework.} \textsc{Feroma} dynamically selects the best aggregation strategy—ranging from clustered to personalized or global—based on client distribution profiles, and naturally extends to test-time adaptation without retraining.
  \item \textbf{Effective under both distribution shift and drift.} \textsc{Feroma} handles static and dynamic data heterogeneity by leveraging round-wise distribution mapping, and scales efficiently to a large number of clients. We evaluate \textsc{Feroma} on four standard and two real-world datasets, showing consistent gains over 10 SOTA baselines across a wide range of scenarios: including four types of distribution shift (with low, medium, and high severity) and varying drift frequencies.
  \item \textbf{Lightweight and efficient design.} \textsc{Feroma} introduces minimal communication and computation overhead, with both server- and client-side costs comparable to FedAvg, enabling practical deployment in resource-constrained environments. We provide both theoretical bounds and empirical measurements
  to validate its practicality in resource-constrained federated settings.
\end{itemize}

\section{Background}
\label{sec.back}

\paragraph{FL under IID assumption.} 
FL systems \citep{mcmahan2017} consist of a collection of $K \in \mathbb{N}$ distributed clients, denoted as $\mathcal{K} = \{1, 2, \dots, K\}$, coordinated by a central server. These clients collaboratively train a shared machine learning model while keeping their local data private. Under IID assumptions, FL typically assumes that each client $k \in \mathcal{K}$ holds an IID dataset sampled from a common joint distribution $P(X, Y)$. Throughout, we use \(P\) 
to denote joint distributions over \((X,Y)\). Specifically, client $k$ draws a finite dataset $D^{(k)} = \{(x^{(k)}_{i}, y^{(k)}_{i})\}_{i=1}^{s^{(k)}}$ of size $s^{(k)} \in \mathbb{N}$; once sampled, $D^{(k)}$ is treated as fixed throughout training. We write $x^{(k)} \in \mathbb{R}^{s^{(k)} \times z}$ and $y^{(k)} \in \{0,1\}^{s^{(k)} \times u}$ for the matrices collecting the inputs and labels in $D^{(k)}$,
where $z$ is the number of features per sample and $u$ the number of output classes. During each communication round $t$, every client $k$ performs local training by optimizing its model parameters $\theta^{(k)}_{t}$ to maximize the likelihood over its private dataset $D^{(k)}$, as shown in \autoref{eq:1}. Once local updates are computed, clients transmit their parameters to the central server. 
The server then aggregates the received updates—typically via simple weighted averaging \citep{mcmahan2017}—to produce a new global model $\theta_{t+1}$, as defined in \autoref{eq:2}. This updated global model is broadcast back to the clients to initialize the next training round.
\begin{align}
\text{(Local training)} \quad &
\theta^{(k)}_{t} = \arg\max_{\theta} \mathcal{L}(\theta \mid x^{(k)}, y^{(k)}) \tag{1} \label{eq:1} \\
\text{(Aggregation)} \quad &
\theta_{t+1} = \sum_{k=1}^K p_k \cdot \theta^{(k)}_{t}
\quad\text{where}\quad
p_k \;=\;\frac{s^{(k)}}{\sum_{j=1}^K s^{(j)}},
\;\sum_{k=1}^K p_k = 1. \tag{2} \label{eq:2}
\end{align}

\begin{figure}
  \centering
  \includegraphics[width=1\textwidth]{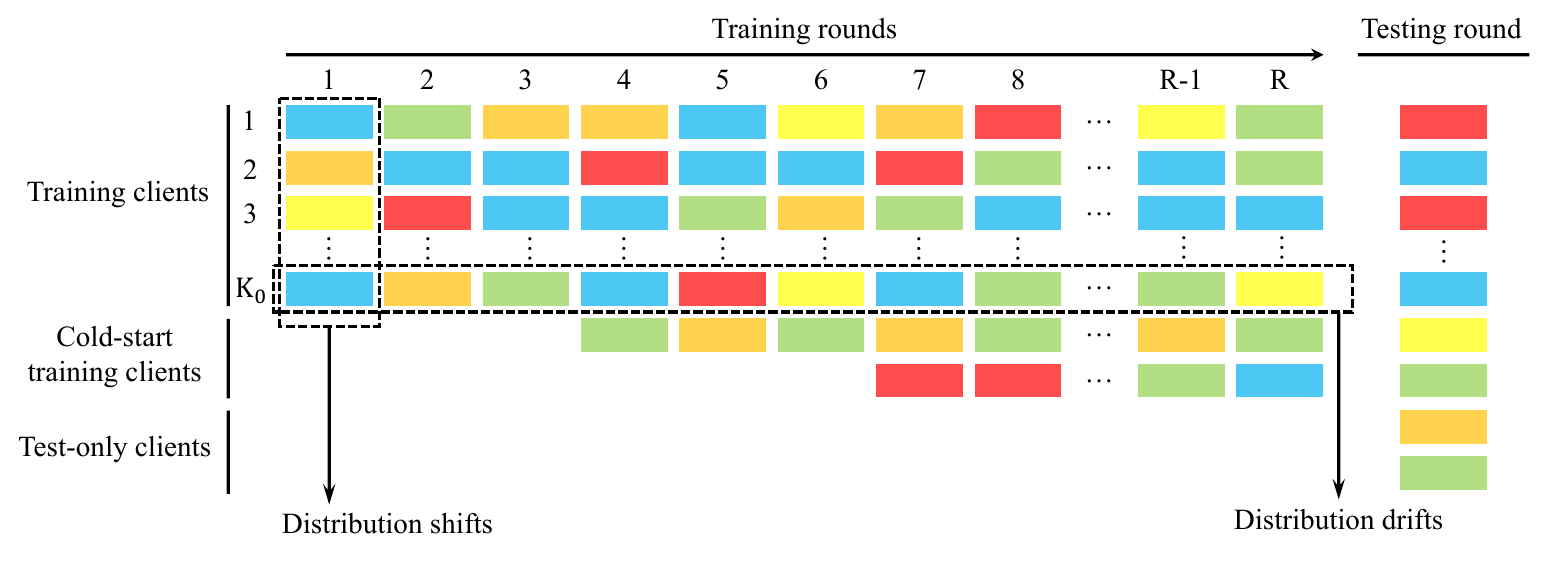}
  \vspace{-20pt} 
  \caption{
  \textbf{Distribution shifts and drifts in FL.} Colors indicate distinct local data distributions. 
  Changes across clients reflect \emph{distribution shifts}; changes over rounds reflect \emph{distribution drifts}.
  }
  \label{fig1.snd}
  \vspace{-10pt}
\end{figure}

\vspace{-7pt}\paragraph{FL under distribution shifts.} In practical FL scenarios, clients commonly face \textit{inter-client distribution shift}, where data distributions vary across clients—referred to here as distribution shift. As illustrated in \autoref{fig1.snd} (vertical variation within each round), for any two clients $k_1, k_2 \in \mathcal{K}$, it may hold that $P^{(k_1)} \ne P^{(k_2)}$, violating the standard IID assumption $P^{(k)}=P$ for all $k$. Such shifts may result from variations in user behavior, environment, or adversarial activity, and are typically grouped into four types~\citep{FLadvances}:
\begin{itemize}[nosep,leftmargin=*]
  \item \textit{Feature distribution skew (covariate shift)}: Marginal distributions $P(X)$ vary across clients.
  \item \textit{Label distribution skew (prior probability shift)}: Marginal distributions $P(Y)$ vary across clients.
  \item \textit{Concept shift (same $X$, different $Y$)}: Conditional distributions $P(Y \mid X)$ vary across clients.
  \item \textit{Concept shift (same $Y$, different $X$)}: Conditional distributions $P(X \mid Y)$ vary across clients.
\end{itemize}
While distribution shift is well-studied in centralized learning~\citep{concept_drift,wilds_shift,ddgda,driftsurf}, it remains underexplored in FL—where most works target specific shift types and demand extra communication/computation costs~\citep{CFL,apfl,fedrc,ifca,FedEM,FeSEM,pfedme,per_survey_1,per_survey_2}, due to the server's limited visibility into decentralized client data. This challenge is further compounded by resource constraints on client devices, which limit both the training capacity and the complexity of deployable models. As a result, global models often fail to generalize across non-identically distributed client populations.

\paragraph{FL under distribution drifts.} In addition to distribution shift, FL systems may also encounter \textit{intra-client distribution drift}, which we refer to as distribution drift. As illustrated in \autoref{fig1.snd} (horizontal variation across rounds), this occurs when the local data distribution of a single client changes over time. Formally, for any client \(k\) that participates at both rounds \(t_1\) and \(t_2\) (i.e., \(k \in \mathcal{K}_{t_1}\cap \mathcal{K}_{t_2}\)),
the local distribution may drift over time:
\( P_{t_1}^{(k)} \ne P_{t_2}^{(k)}\). Drift may manifest during training—due to changes in user behavior, sensor conditions, or data collection environments—or during testing. In the latter case, drift arises either because the same client exhibits evolving behavior at inference time or due to the presence of entirely new clients (e.g., \textit{test-only} clients in \autoref{fig1.snd}) whose data distributions were not observed during training and for which label information is unavailable. Such temporal drift significantly increases training and testing complexity and necessitates adaptive strategies that can respond to evolving data. While distribution drift has been extensively studied in centralized settings~\citep{SurveyConceptDrift2019a,SACCOS2022,DDGDA2022,DriftSurf2021}, only a few studies have investigated training-time drift~\citep{feddrift, chenClassifierClustering2024} or test-time drift~\citep{atp, tan_is_2023} in FL. However, to the best of our knowledge, the joint presence of training and test-time drift has never been explored in existing FL literature—despite its critical impact on model performance and accuracy in real-world deployments.

\section{\textsc{Feroma}}
\label{sec.method}
To tackle the challenges posed by dynamic and heterogeneous federated environments, we propose \textsc{Feroma}, a lightweight framework that adapts to both distribution shift and drift. This section begins by formalizing the problem setting (\autoref{pg:problem_def}), then outlines the two core components of the \textsc{Feroma} pipeline, illustrated in \autoref{fig2.feroma}: \emph{distribution profile extraction} (\autoref{subsec.dpe.}) and \emph{distribution profile mapping} (\autoref{subsec.d.m.}).
In addition, \autoref{subsec.auto.} details how \textsc{Feroma} dynamically selects the optimal aggregation strategy for each client based on the inferred distribution structure. The full implementation and algorithmic details are provided in \autoref{app.im.d}.

\paragraph{Problem definition.} \label{pg:problem_def}  
We consider a dynamic FL system with \(K_t\) clients at round \(t\), denoted as \(\mathcal{K}_t=\{1,\dots,K_t\}\), where each client holds local data \(D_t^{(k)}=(x^{(k)}_t, y^{(k)}_t)\) and trains a local model \(\theta^{(k)}_t\). Each \(D_t^{(k)}\) is drawn from an underlying local distribution \(P_t^{(k)}\) over \((X,Y)\) and treated as fixed at round \(t\) once collected. The number of clients may vary across rounds due to client arrivals (\textit{cold-start}) or departures, i.e., \(K_{t-1} \neq K_t\). In practical FL settings, data distributions are both \textit{non-identical across clients} and \textit{non-stationary over time}. Formally, for any \(t_1,t_2\) and any \(k_1\in\mathcal{K}_{t_1}\), \(k_2\in\mathcal{K}_{t_2}\), the local distributions may differ: \(P_{t_1}^{(k_1)} \ne P_{t_2}^{(k_2)}\). This unified formulation captures both \textit{inter-client shift} (when \(k_1 \neq k_2\)) and \textit{intra-client drift} (when \(k_1 = k_2\)). Our goal is to design a lightweight FL framework that explicitly addresses both distribution shift and drift during training and testing, without relying on client identities or prior knowledge of underlying distributions. 

\subsection{Distribution profile extraction}
\label{subsec.dpe.}
When a client's local distribution shifts between consecutive rounds---e.g., from \(P_{t}^{(k)}\) to \(P_{t+1}^{(k)}\)---adapting a model trained on its own earlier distribution may require significant local computation and communication. A more efficient alternative is to reassign a model (or an aggregation of models) previously trained on distributions that closely resemble the client's current distribution.
To enable such model selection under distribution drift and to guide aggregation across heterogeneous clients under distribution shift, we require a distribution profile, i.e., a low-dimensional, stable summary that quantifies distribution similarity without exposing raw data or labels. Our use of compact distribution summaries builds on the descriptor-based perspective introduced in~\citep{fenoglioFLUX2025a}, while targeting drift-aware temporal matching across rounds.

\vspace{0.5em}
\begin{definition}[Distribution--Profile Extractor]\label{def:dpe}
Let \(z \in \mathbb{N}\) denote the feature dimension, \(u \in \mathbb{N}\) the label dimension, and let \(d\) be the desired profile dimension. A \emph{Distribution--Profile Extractor} (DPE) is a stochastic mapping \(\phi_{\psi} : \mathbb{R}^{v_t^{(k)} \times (z+u)} \to \mathbb{R}^{d}\), with \(v_t^{(k)} \le s_t^{(k)}\), parametrized by \(\psi\), that maps the dataset \(D_t^{(k)}\) to a distribution profile \(d_{t}^{(k)} := \phi_{\psi}(D_t^{(k)})\). \emph{Intuitively, the DPE compresses a client’s dataset into a compact distributional signature that captures how the data is distributed, rather than the data itself, enabling reliable similarity estimation without exposing fine-grained information.} For any two client–round pairs \((k_1, t_1)\) and \((k_2, t_2)\), the extractor must satisfy the requirements below.
\end{definition}

\begin{enumerate}[label=(R\arabic*), leftmargin=*]
    \item \textbf{Distribution fidelity.}
    Profile distances should approximate a reference distance \(\Delta\) (e.g., Jensen--Shannon or Wasserstein) between the corresponding underlying distributions.
    Concretely, letting the expectation be taken over the internal randomness of \(\phi_{\psi}\),
    \[
    \mathbb{E}\Bigl[\Bigl|\,
    \| d^{(k_{1})}_{t_{1}}-d^{(k_{2})}_{t_{2}}\|_{2}
    -\Delta\!\bigl(P_{t_1}^{(k_1)}, P_{t_2}^{(k_2)}\bigr)
    \,\Bigr|\Bigr] \le \xi.
    \]
    In other words, the extractor \(\phi_{\psi}\) should map similar distributions to nearby profiles and dissimilar distributions to distant profiles.

\item \textbf{Label agnosticism.} \label{req:R2}
The extractor \(\phi_{\psi}\) must support profile generation \emph{without} labels:
\[
d'^{(k)}_{t} := \phi_{\psi}\bigl(x_{t}^{(k)}, \mathbf{0}\bigr) \in \mathbb{R}^{p}, 
\quad \text{with } p \leq d,
\]
where \(d'^{(k)}_{t}\) is a subvector of the full profile \(d^{(k)}_{t} \in \mathbb{R}^{d}\).
This enables profile extraction and similarity matching at test time.
Here, \(d'^{(k)}_{t}\) captures marginal distributional characteristics based solely on the input features \(x^{(k)}_{t}\).


    \item \textbf{Controlled stochasticity.}  \label{req:R3} The extractor \(\phi_{\psi}\) is a stochastic mapping: for a fixed input dataset, the profile \(d^{(k)}_{t} = \phi_{\psi}(D^{(k)}_{t})\) is a random vector. Let the expectation be taken over the internal randomness of \(\phi_{\psi}\), we have \(\mathbb{E}[d^{(k)}_{t} \mid D^{(k)}_{t}] = \bar{\phi}_{\psi}(D^{(k)}_{t})\), with bounded conditional covariance \(\operatorname{Cov}\!\bigl(d^{(k)}_{t} \mid D^{(k)}_{t}\bigr) \preceq \rho^{2} \mathbf{I}_{d}\). This controlled stochasticity prevents exact fingerprinting of client distributions across rounds while maintaining reliable inter-profile distances in expectation.

\item \textbf{Differential-privacy guarantee.}  
    The mapping \(\phi_{\psi}\) satisfies \((\varepsilon,\delta)\)-differential privacy at the \emph{sample} level:
    for any two datasets \(D_t^{(k)}\) and \(D_t^{\prime(k)}\) that differ in exactly one example and for every measurable set \(\mathcal{S}\subseteq\mathbb{R}^{d}\),
    \[
    \Pr\!\bigl[\phi_{\psi}(D_t^{(k)})\in\mathcal{S}\bigr]
    \;\le\; e^{\varepsilon}\,
    \Pr\!\bigl[\phi_{\psi}(D_t^{\prime(k)})\in\mathcal{S}\bigr] + \delta.
    \]
    For example, this guarantee can be realized with a Gaussian (or Laplace) mechanism:
    \(d^{(k)}_{t}= \bar{\phi}_{\psi}(D_t^{(k)})+\mathcal{N}\!\bigl(0,\,\sigma^{2}\mathbf{I}_{d}\bigr)\),
    where \(\sigma\) is calibrated to the \(\ell_{2}\)-sensitivity of \(\bar{\phi}_{\psi}\).
    The resulting many-to-one mapping both limits information leakage and obfuscates client identity.
 
\item \textbf{Compactness.}  
      Profile extraction introduces minimal overhead compared to vanilla FL: its computational cost is negligible relative to a local training epoch, and the profile dimension \(d\) satisfies \(d \ll |\theta|\) (typically \(d/|\theta|\le10^{-2}\)), ensuring that the additional communication cost remains marginal compared to the transmitted model update. 

\end{enumerate}

We implemented our DPE using a four-step statistical moment extraction from latent space with differential privacy. We provide details of the implemented DPE in our \textsc{Feroma} in Appendix~\ref{app.dpe}, which satisfies all five requirements: (R1) with a mapping \(\phi_{\psi}\) provably bi-Lipschitz equivalent to the 2‑Wasserstein metric, showing \(\xi<1.1\) on MNIST (and \(<0.54\) under Jensen–Shannon); (R2) by consistently providing a label‑free subvector \(d^{\prime(k)}_t\) that approximates the marginal data distribution; (R3) with bounded covariance \(\rho^2 = \bigl(\tfrac{\tau^{2}}{M\gamma v^{(k)}}+2b_{\max}^{2}\bigr) \le 2.2\!\times\!10^{-3}\) depending on a Monte‑Carlo subsampling ($M, \gamma$) plus Laplace noise ($b_{\text{max}}$); (R4) by ensuring \((\varepsilon,0)\)-DP for each profile vector \(d^{(k)}_t\) with added variance \(\le 2.2\!\times\!10^{-5}\); and (R5) by introducing negligible computation and a communication cost of \(d/|\theta| \le 3.5 \times 10^{-3}\). Full implementation details and theoretical justifications are in Appendix~\ref{app.dpe}, with privacy calibration in Appendix~\ref{app.pidp}.

\begin{figure}
  \centering
  \includegraphics[width=1\textwidth]{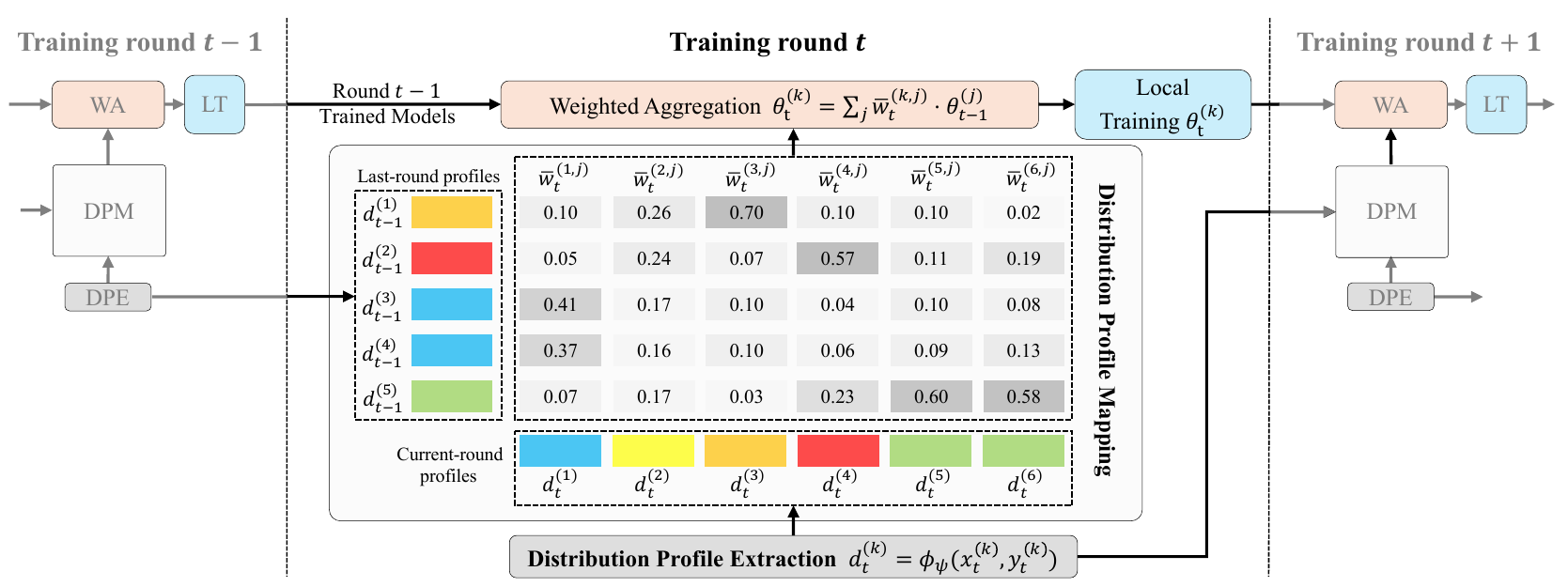}
  \caption{
  \textbf{\textsc{Feroma} pipeline.} In each round \(t\), clients extract distribution profiles (DPE), map them to previous‐round profiles (DPM), and compute weighted aggregation (WA) for local training (LT).
  }
  \label{fig2.feroma}  
  \vspace{-15pt}
\end{figure}

\subsection{Distribution profile mapping} 
\label{subsec.d.m.}
The core idea of \textsc{Feroma} is to decouple model identity from specific clients or clusters, and instead associate each model with a data distribution characterized by its distribution profile \( d^{(k)}_{t} \). Once profiles are extracted, we employ two complementary mapping strategies: during \emph{training}, we enable model sharing by matching current and past profiles to derive weighted aggregations across clients; during \emph{testing}, we extract a label-free profile for each unseen client and assign the closest model from the final round for direct inference.

\subsubsection{Training distribution mapping} 
After extracting all distribution profiles for the current round $t$, i.e., \( \{d^{(k)}_{t}\}_{k=1}^{K_t} \), we map them to the last round profiles \( \{d^{(k)}_{t-1}\}_{k=1}^{K_{t-1}} \) to define the weights for model aggregation and assignment.
The mapping can be done with a normalized distance function:
\begin{equation}
    w_{t}^{(k,j)} = \frac{\exp\left(-\mathcal{D}(d^{(k)}_{t}, d^{(j)}_{t-1})\right)}{\sum\limits_{j' \in \mathcal{A}_{t-1}} \exp\left(-\mathcal{D}(d^{(k)}_{t}, d^{(j')}_{t-1})\right)} \tag{3} \label{eq:3}
\end{equation}
where \( \mathcal{D}(\cdot, \cdot) \) is a chosen distance function (e.g., Euclidean distance), \( w_{t}^{(k,j)} \) the association weight between current client \( k \) and previous-round client \( j \), and \( \mathcal{A}_{t-1} \) the set of clients active in round \( t-1 \).

Although weakly similar profiles receive small weights, their aggregation can still introduce noise. To regulate when collaboration is beneficial, a threshold \(\tau\) can be applied to discard any \(w_{t}^{(k,j)}\) below \(\tau\), ensuring that model updates are shared only among sufficiently similar profiles and preventing both over-collaboration and unnecessary fragmentation, in a manner similar to clustered FL approaches:
\begin{equation}
    \tilde{w}_{t}^{(k,j)} = 
    \begin{cases}
    w_{t}^{(k,j)} & \text{if } w_{t}^{(k,j)} \geq \tau \\
    0 & \text{otherwise}
    \end{cases} \tag{4} \label{eq:4}
\end{equation}
After thresholding, the weights \( \tilde{w}_{t}^{(k,j)} \) are renormalized to ensure they sum to 1 across \( j \in \mathcal{A}_{t-1} \).
This weight can be combined with weighting based on data size as in FedAvg.
More details on the distance function \( \mathcal{D}(\cdot, \cdot) \) and ablation studies on the utility of \( \tau \) are in Appendices~\ref{app.threshold} and~\ref{app.dft}.

\paragraph{Automatic aggregation strategy selection.}
\label{subsec.auto.}
At each training round \(t\), we compute the association weights \(\{w_{t}^{(k,j)}\}_{k\in\mathcal{K_t},\;j\in\mathcal{A}_{t-1}}\) via \eqref{eq:3} or the thresholded \(\tilde w_{t}^{(k,j)}\) version via \eqref{eq:4}, depending on the desired level of selectivity, and then aggregate as
\begin{equation}
    \theta^{(k)}_{t} = \sum_{j \in \mathcal{A}_{t-1}} \bar{w}_{t}^{(k,j)} \cdot \theta^{(j)}_{t-1} \tag{5} \label{eq:model_assign}
\end{equation}
where \(\bar w_{t}^{(k,j)}\) denotes either \(w_{t}^{(k,j)}\) or \(\tilde w_{t}^{(k,j)}\).  After aggregation, each client proceeds with its local training step as in \autoref{eq:1}. By inspecting the support of \(\{\bar w_{t}^{(k,j)}\}_{j\in\mathcal A_{t-1}}\) for each client \(k\) at round \(t\), \textsc{Feroma} independently and dynamically recovers the most suitable FL aggregation strategy:

\begin{itemize}[nosep,leftmargin=*]
  \item \textit{Clustered FL. ($d^{(1)}_t$ in \autoref{fig2.feroma})}  
    If multiple weights \(\bar w_{t}^{(k,j)}>0\) survive thresholding, the aggregation for client \(k\) aggregates those models trained on similar data distribution, akin to CFL.

  \item \textit{Personalized FL. ($d^{(3)}_t,d^{(4)}_t,d^{(5)}_t,d^{(6)}_t$ in \autoref{fig2.feroma})} 
    If exactly one thresholded weight is nonzero, client \(k\) simply inherits that single most similar model, yielding a personalized specialization.

  \item \textit{Global FL fallback. ($d^{(2)}_t$ in \autoref{fig2.feroma})}  
    If no sufficiently similar profile is identified, it falls back to global aggregation (\(\bar w_{t}^{(k,j)}\!\approx\!1/|\mathcal A_{t-1}|\) if not thresholded), thus combining all models before the next adaptive round.
\end{itemize}

\subsubsection{Testing distribution mapping} 
At test time, we aim to assign each unseen client \(k\) to the best‐matching model learned in the final training round \(R\) based on its feature distribution, without any further optimization. First, we extract the label-free profile \(d^{\prime(k)}_{\text{test}} = \phi_{\psi}(x^{(k)}_{\text{test}}, \mathbf 0)\), as required by \ref{req:R2}. Then we match \(d'^{(k)}_{\mathrm{test}}\) against the set of round-\(R\) profiles \(\{d^{(j)}_{R}\}_{j\in\mathcal A_{R}}\) by selecting the nearest neighbor in profile space:
\[
  j^*
  \;=\;
  \arg\min_{j\in\mathcal A_{R}}
  \mathcal{D}\bigl(d'^{(k)}_{\mathrm{test}},\,d'^{(j)}_{R}\bigr),
  \qquad
  \theta^{(k)}_{\mathrm{test}}
  \;=\;
  \theta^{(j^*)}_{R}.
\]
This one-shot assignment requires no gradient steps, leverages the DP-protected distribution profiles, and naturally generalizes to unseen, unlabeled clients. As discussed in Appendix~\ref{app:pyx_association}, pure label-free matching cannot inherently capture concept shift with identical \(X\) but different \(Y\). However, for addressing this problem, we show that a small, test-time labeled validation set can seamlessly refine the associations and substantially improve performance. In addition, by assigning the most appropriate pre-trained model to each test client, \textsc{Feroma} enables the integration of unsupervised test-time adaptation methods by offering a distribution-aware initialization point (see \autoref{subsec.li.f}).

\section{Experiments}
\label{sec.exp}
This section presents our experimental setup and results from two primary scaling studies. We evaluate \textsc{Feroma} under varying drift frequencies, non-IID types, severity levels, and numbers of clients. These experiments assess the scalability, robustness, and efficiency of \textsc{Feroma} compared to 10 SOTA baselines across a wide range of real-world heterogeneity scenarios.

\subsection{Experiment settings}
\label{subsec.e.s.}

\textbf{Drifting datasets generation.}
We employ six publicly available datasets for our experiments: MNIST~\citep{mnist}, Fashion-MNIST (FMNIST)~\citep{fmnist}, CIFAR-10, CIFAR-100~\citep{cifar}, and two real-world datasets, CheXpert~\citep{chexpert} and Office-Home~\citep{office}.
To construct distribution shift and drift datasets under different non-IID conditions, we use \textit{ANDA}\footnote{\url{https://github.com/alfredoLimo/ANDA}}, a toolkit supporting operations such as class isolation and label swapping.
For the real-world datasets, we preserve their intrinsic characteristics without modification. 
Detailed dataset information is provided in Appendix~\ref{app.anda}.

\textbf{Baseline algorithms.} $\;$
We evaluate our approach against baseline methods summarized in \autoref{table:table_sum}, including:
FedAvg \citep{mcmahan2017},
FedRC \citep{fedrc}, 
FedEM \citep{FedEM},
FeSEM \citep{FeSEM},
CFL \citep{CFL},
IFCA \citep{ifca},
FedDrift \citep{feddrift},
pFedMe \citep{pfedme},
APFL \citep{apfl},
and
ATP \citep{atp}.
The baseline methods are detailed further in \autoref{app.comp}.
The experimental environment, models, and hyperparameter configurations are described in Appendices~\ref{app.co.li} and \ref{app.model}.

\subsection{Results}
\label{subsec.results}


\textbf{Scaling the drifting frequency, non-IID types, and non-IID levels.} 
We first evaluate the robustness of \textsc{Feroma} by comparing it against baseline methods across three drift frequencies: each client's dataset drifts every four rounds, every two rounds, and at every round.  
Additionally, we simulate four types of distribution shifts—$P(X)$, $P(Y)$, $P(Y|X)$, and $P(X|Y)$—each under three levels of non-IID severity: low, medium, and high.  
This setup enables us to thoroughly assess the adaptability of \textsc{Feroma} in highly dynamic and heterogeneous FL environments.  
Detailed experimental setups and results are provided in 
Appendices~\ref{app.main_exp} and~\ref{app.chexpert}, and summarized in~\autoref{tab:main_table_1}. Ablation study on training rounds is summarized in Appendices~\ref{app.reb_1} and~\ref{app.reb_2}.

\autoref{tab:main_table_1} shows that \textsc{Feroma} consistently outperforms all baselines across six benchmark datasets, with results computed as unweighted averages over all drift frequencies, non-IID types, and severity levels, without case selection.
Notably, \textsc{Feroma} improves accuracy by up to 14.1, 14.3, and 10.3 percentage points (pp) over the best-performing baseline CFL on MNIST, FMNIST, and CIFAR-10, respectively.
On CIFAR-100, it achieves an accuracy of 39.9\%, surpassing FedEM by 8.2 pp.
On the real-world datasets CheXpert and Office-Home, \textsc{Feroma} demonstrates consistent robustness, exceeding the strongest baselines while maintaining lower variance.
\autoref{tab:main_table_2} further shows that \textsc{Feroma} remains robust under varying scales of distribution shift and drift (see Appendix~\ref{app.main_exp} for detailed results of additional benchmarks).
These improvements are achieved under realistic FL conditions with no prior knowledge of distribution modes or test-time labels. While baselines are often specialized for either shift or drift, \textsc{Feroma} adapts to both with significantly lower variance.
The results underscore the effectiveness of distribution-profile-based aggregation and highlight \textsc{Feroma} as a generalizable solution for dynamic non-IID FL conditions.

\begin{wraptable}{r}{0.6\textwidth}
  \vspace{-10pt}
\centering
\scriptsize
\renewcommand{\arraystretch}{1.2}
\setlength{\tabcolsep}{1pt}
\begin{tabular}{lcccccc}
\toprule
\textbf{Dataset} & \textbf{MNIST} & \textbf{FMNIST} & \textbf{CIFAR-10} & \textbf{CIFAR-100} & \textbf{CheXpert} & \textbf{Office-Home} \\
\midrule
FedAvg     & 71.8\,±\,5.5 & 63.7\,±\,6.4 & 33.0\,±\,5.3 & 28.2\,±\,4.7 & 59.1\,±\,3.0 & 41.0\,±\,1.3 \\
FedRC      & 30.9\,±\,6.9 & 45.1\,±\,6.9 & 23.2\,±\,4.5 & 30.6\,±\,4.2 & 55.1\,±\,1.9 & 14.6\,±\,3.5 \\
FedEM      & 30.7\,±\,7.0 & 46.1\,±\,7.0 & 23.0\,±\,4.8 & \textcolor{blue}{31.7\,±\,3.6} & 53.3\,±\,2.2 & 15.5\,±\,2.9 \\
FeSEM      & 69.6\,±\,5.7 & 59.0\,±\,5.6 & 31.1\,±\,4.8 & 26.2\,±\,3.7 & 61.0\,±\,1.9 & 33.8\,±\,0.9 \\
CFL        & \textcolor{blue}{76.6\,±\,3.9} & \textcolor{blue}{65.6\,±\,4.8} & \textcolor{blue}{33.9\,±\,4.7} & 28.9\,±\,3.5 & 62.3\,±\,2.2 & 34.8\,±\,1.0 \\
IFCA       & 44.1\,±\,9.9 & 36.9\,±\,9.2 & 27.4\,±\,4.7 & 15.7\,±\,4.6 & 52.8\,±\,2.8 & 33.5\,±\,4.5 \\
pFedMe     & 53.2\,±\,7.4 & 43.6\,±\,6.8 & 24.2\,±\,4.1 & 15.8\,±\,2.2 & 58.2\,±\,1.1 & 34.6\,±\,1.6 \\
APFL       & 70.0\,±\,5.8 & 56.9\,±\,5.8 & 31.4\,±\,4.5 & 29.7\,±\,3.1 & 54.4\,±\,0.9 & 39.4\,±\,2.2 \\
FedDrift   & 57.0\,±\,7.7 & 47.6\,±\,7.2 & 29.2\,±\,4.9 & 20.2\,±\,3.5 & \textcolor{blue}{72.3\,±\,0.8} & \textcolor{blue}{42.1\,±\,2.4} \\
ATP        & 72.1\,±\,10.5 & 61.1\,±\,12.1 & 28.7\,±\,5.1 & 16.7\,±\,3.8 & N/A & 40.8\,±\,4.3 \\
\midrule
\textsc{Feroma} & \textbf{90.7\,±\,1.8} & \textbf{79.9\,±\,2.8} & \textbf{44.2\,±\,3.8} & \textbf{39.9\,±\,2.5} & \textbf{72.4\,±\,0.6} & \textbf{42.4\,±\,1.4} \\
\bottomrule
\end{tabular}
\vspace{-1em}
\caption{
Mean accuracy and standard deviation across different datasets, comparing \textsc{Feroma} and baselines under varying drifting frequency, non-IID types and levels.
}
\label{tab:main_table_1}
    \vspace{-10pt}
\end{wraptable}



\textbf{Scaling the number of clients.}
In real-world FL deployments, the number of participating clients can be large, requiring FL methods to remain scalable with minimal computational and communication overhead—even under highly dynamic conditions.
To evaluate scalability, we assess the performance of \textsc{Feroma} and baseline methods with MNIST as the number of clients increases from 10 to 20, 50, and 100.
Notably, FedDrift could not be evaluated with 50 or 100 clients due to excessive computational requirements (see Appendix~\ref{app.co.li} for details).
A summary of the results is presented in \autoref{fig.nclients}, with detailed experimental setup and results provided in Appendix~\ref{app.exp_scale}.

\begin{table}[htbp]
\centering
\resizebox{1\textwidth}{!}{%
\begin{tabular}{lccccccccc}
\toprule
\textbf{Non-IID Level} & \multicolumn{3}{c}{\textbf{Low}} & \multicolumn{3}{c}{\textbf{Medium}} & \multicolumn{3}{c}{\textbf{High}} \\
\cmidrule(lr){2-4} \cmidrule(lr){5-7} \cmidrule(lr){8-10}
\textbf{\# Drifting} & \textbf{5 / 20} & \textbf{10 / 20} & \textbf{20 / 20} & \textbf{5 / 20} & \textbf{10 / 20} & \textbf{20 / 20} & \textbf{5 / 20} & \textbf{10 / 20} & \textbf{20 / 20} \\
\midrule
FedAvg     & 72.1 ± 8.0 & 76.5 ± 4.7 & 79.3 ± 2.6 & 71.5 ± 4.2 & 73.6 ± 5.3 & 75.8 ± 2.6 & 63.5 ± 6.6 & 65.1 ± 6.6 & 68.6 ± 6.5 \\
FedRC      & 40.5 ± 9.7 & 41.4 ± 7.3 & 44.8 ± 9.7 & 23.8 ± 4.2 & 26.6 ± 6.6 & 28.3 ± 4.1 & 21.1 ± 7.0 & 22.9 ± 7.2 & 28.6 ± 2.4 \\
FedEM      & 40.3 ± 11.2 & 43.2 ± 7.3 & 44.5 ± 10.5 & 21.7 ± 4.8 & 25.8 ± 5.2 & 28.7 ± 5.1 & 20.2 ± 4.9 & 23.7 ± 6.2 & 28.4 ± 3.5 \\
FeSEM      & 72.4 ± 5.7 & 75.8 ± 4.0 & 77.8 ± 3.8 & 69.6 ± 6.0 & 71.5 ± 3.6 & 72.0 ± 8.7 & 60.2 ± 6.2 & 61.6 ± 6.8 & 65.4 ± 4.3 \\
CFL        & \textcolor{blue}{79.1 ± 4.3} & 78.1 ± 4.0 & 82.5 ± 3.0 & \textcolor{blue}{76.6 ± 4.8} & \textcolor{blue}{78.3 ± 3.6} & \textcolor{blue}{78.1 ± 3.9} & \textcolor{blue}{69.5 ± 4.4} & \textcolor{blue}{72.4 ± 3.7} & \textcolor{blue}{74.5 ± 2.9} \\
IFCA       & 51.0 ± 9.9 & 52.9 ± 10.9 & 51.1 ± 13.0 & 46.1 ± 12.1 & 45.3 ± 8.7 & 40.5 ± 4.4 & 40.6 ± 9.8 & 35.8 ± 8.4 & 33.4 ± 8.8 \\
pFedMe     & 54.9 ± 7.3 & 59.3 ± 7.0 & 58.5 ± 9.9 & 53.3 ± 9.0 & 54.3 ± 7.2 & 53.6 ± 5.0 & 47.3 ± 5.2 & 47.8 ± 6.2 & 49.9 ± 8.4 \\
APFL       & 69.0 ± 6.4 & 72.4 ± 4.7 & 76.1 ± 7.2 & 70.0 ± 7.4 & 71.8 ± 5.6 & 71.9 ± 4.4 & 64.5 ± 4.2 & 64.8 ± 5.3 & 69.3 ± 6.0 \\
FedDrift   & 58.8 ± 9.0 & 63.1 ± 6.4 & 58.4 ± 11.1 & 58.1 ± 9.0 & 60.8 ± 7.8 & 56.6 ± 6.0 & 52.3 ± 6.0 & 52.7 ± 6.8 & 52.5 ± 5.5 \\
ATP        & 70.8 ± 12.8 & \textcolor{blue}{78.5 ± 10.7} & \textcolor{blue}{83.9 ± 6.5} & 65.8 ± 14.2 & 77.6 ± 5.3 & 78.0 ± 8.7 & 59.1 ± 12.8 & 65.1 ± 11.5 & 70.4 ± 8.4 \\
\midrule
\textsc{Feroma}     & \textbf{90.6 ± 2.9} & \textbf{91.4 ± 1.0} & \textbf{92.1 ± 1.1} & \textbf{90.0 ± 2.7} & \textbf{90.6 ± 1.8} & \textbf{91.0 ± 1.8} & \textbf{90.2 ± 1.2} & \textbf{89.8 ± 1.6} & \textbf{90.8 ± 0.8} \\
\bottomrule
\end{tabular}
} 
\vspace{-1em}
\caption{
Performance comparison across three non-IID levels and three drifting levels of all non-IID types on the MNIST dataset. \textbf{5 / 20, 10 / 20, 20 / 20}: Drifting 5 / 10 / 20 times in overall 20 rounds.
}
\label{tab:main_table_2}
\vspace{-15pt}
\end{table}

\begin{wrapfigure}{l}{0.58\textwidth}
  \vspace{-15pt}
  \centering
  \includegraphics[width=0.58\textwidth]{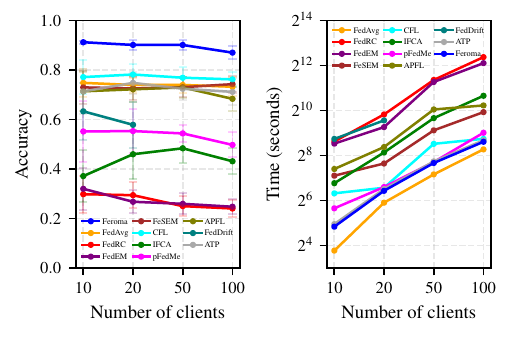}
      \vspace{-25pt}
    \caption{Performance comparison across varying numbers of clients.
    Left: Mean accuracy and standard deviation.
    Right: Training time per 20 rounds. }
  \label{fig.nclients}
      \vspace{-10pt}
\end{wrapfigure}

Despite the increased data heterogeneity and reduced data size introduced by scaling up the client number, \autoref{fig.nclients} shows that \textsc{Feroma} consistently achieves the highest accuracy across all settings—exceeding 90\% with even 50 clients, and maintaining over 85\% accuracy with 100 clients. It outperforms the best baseline CFL by more than 10 pp at the largest scale.
In addition to accuracy, \textsc{Feroma} demonstrates strong computational and communication efficiency, with training times comparable to FedAvg. In contrast, most other baselines experience a sharp increase in runtime as the number of clients grows. This efficiency is attributed to the lightweight nature of distribution profiles which introduce minimal overhead throughout the training process.
The results underscore the robustness and scalability of \textsc{Feroma} in large-scale, dynamic FL scenarios.

\paragraph{Key Observations.}
\textsc{Feroma} demonstrates consistent robustness across datasets, non-IID severities, and drifting frequencies. By dynamically selecting aggregation behavior at each round, it effectively recovers clustered or personalized FL when client distributions are similar, and falls back to global aggregation when distributions diverge sharply. This adaptability allows \textsc{Feroma} to maintain strong performance in highly heterogeneous and unstable settings where fixed-strategy baselines degrade. 

\section{Discussion}
\label{sec.disc}

\subsection{Related works}
\label{subsec.re.w}

\textsc{Feroma} relates to three major lines of work in FL designed to address data heterogeneity: CFL, PFL, and TTA-FL.
CFL methods~\citep{CFL,fedrc,feddrift,ifca,FedEM,FeSEM} group clients with similar data distributions and aggregate models accordingly. While effective under distribution shift, they typically assume a fixed number of clusters~\citep{fedrc,ifca}, require computationally intensive cluster procedures, and involve transmitting or maintaining multiple models per client~\citep{feddrift}, limiting scalability. In contrast, \textsc{Feroma} avoids explicit clustering by leveraging continuous distribution profiles for soft, data-driven association.  
PFL methods~\citep{apfl,pfedme,per_survey_1,per_survey_2,zhangPersonalizedFL2020,marfoqPersonalizedFL2022,fedproto,fedgps,foogd,shiftex} personalize models to each client's local distribution, improving performance when sufficient local data is available. However, they incur higher computation and storage costs on the client side and lack mechanisms for model assignment in cold-start or test-time scenarios. \textsc{Feroma} achieves a similar personalization effect when required, by matching profiles without per-client optimization, and with significantly lower system overhead.
TTA-FL methods~\citep{atp,afl_1,afl_2, liangTTAFedDG2025,rajibFedCTTA,cola} are designed to handle test-time drift via online adaptation, which requires additional client interaction or retraining. \textsc{Feroma}, by contrast, supports test-time adaptation by matching profiles to observed ones, requiring no further updates or communication. 
We summarize the qualitative comparison
in \autoref{table:table_sum} and provide more details in \autoref{app.comp}.

\begin{table}
  \centering
  \scriptsize
  \begin{tabular}{lcccccccc}
    \toprule
    \textbf{Algorithm} & 
    \makecell{Cat.} & 
    \makecell{D. shift} & 
    \makecell{D. drift} & 
    \makecell{T. adapt.} & 
    \makecell{Low comm.} & 
    \makecell{Low comp. (Server)} & 
    \makecell{Low comp. (Client)} & 
    \makecell{Scalability} \\
    \midrule
FedAvg & N/A & &  &  &  $\checkmark$&$\checkmark$ &$\checkmark$  & $\checkmark$  \\
FedRC & CFL & $\checkmark$ &  &  &  & $\checkmark$ &  & $\bigcirc$  \\
FedEM & CFL & $\checkmark$ &  &  &  & $\checkmark$ &  & $\bigcirc$  \\
FeSEM & CFL & $\checkmark$ &  &  & $\checkmark$ & & $\checkmark$ &   \\
CFL & CFL & $\checkmark$ &  &  & $\checkmark$ & & $\checkmark$ &   \\
IFCA & CFL & $\checkmark$ &  &  &  & $\checkmark$ &  & $\bigcirc$  \\
pFedMe & PFL & $\checkmark$ &  &  & $\checkmark$ & $\checkmark$ & $\bigcirc$ 
&  $\checkmark$ \\
APFL & PFL & $\checkmark$  &  &  & $\checkmark$ & $\checkmark$ & 
& $\checkmark$  \\
FedDrift & CFL & $\checkmark$ & $\checkmark$ &  &  &$\checkmark$ &  &   \\
ATP & TTA-FL &  &  & $\checkmark$ & $\checkmark$   & $\checkmark$ & $\bigcirc$ 
& $\checkmark$  \\
FedProto & PFL & $\checkmark$ &  &  & $\checkmark$ & $\checkmark$ & $\checkmark$ & $\checkmark$ \\
FedGPS  & PFL & $\checkmark$ &  &  & $\checkmark$ & $\bigcirc$   & $\checkmark$ & $\bigcirc$ \\
FOOGD  & PFL & $\checkmark$ &  &  & $\checkmark$ & $\bigcirc$   & $\checkmark$ & $\bigcirc$ \\
ShiftEx & PFL & $\checkmark$ & $\checkmark$ &  & $\bigcirc$ & $\bigcirc$ & $\checkmark$ & $\bigcirc$ \\
CoLA & TTA-FL & N/A & N/A & $\checkmark$ & N/A & N/A & N/A & N/A \\

\textbf{\textsc{Feroma}} & N/A & $\checkmark$ & $\checkmark$ & $\checkmark$ & $\checkmark$ & $\checkmark$ & $\checkmark$ & $\checkmark$  \\
    \bottomrule
  \end{tabular}
  \vskip -0.1in
  
  \caption{\textbf{Qualitative comparison among \textsc{Feroma} and baselines}.
  \textbf{Cat.}: FL category.
  \textbf{D. shift/drift}: Designed to tackle distribution shift/drift.
  \textbf{T. adapt.}: Designed to adapt test time distribution.
  \textbf{Low comm.}: Low communication cost (comparable to FedAvg).
  \textbf{Low comp. (Server/Client)}: Low computational cost on server/client side (comparable to FedAvg).
  \textbf{Scalability}: Scales efficiently to large client number.
  \textbf{CFL}: Clustered FL.
  \textbf{PFL}: Pensonalized FL.
  \textbf{TTA-FL}: Test time adaptive FL.
  \textbf{$\checkmark$}: Property satisfied.
  \textbf{$\bigcirc$}: Property conditionally satisfied.}
  \label{table:table_sum}
  
      \vspace{-10pt}
  
\end{table}

\subsection{Limitations and future works}
\label{subsec.li.f}

\paragraph{Extractor dependence.}
While prior works represent client distributions using model parameters, gradients, or training metrics—which exhibit intrinsic limitations (see Appendix~\ref{app.comp})—the effectiveness of \textsc{Feroma} similarly depends on the quality of its DPE, which must generate reliable representations of client data distributions. In our implementation, the DPE relies on a few-round pretrained model to embed sampled data into a latent space. This approach may be limited in two scenarios: (1) if the model is undertrained—e.g., due to a difficult task or limited data—the resulting latent space may not adequately reflect the underlying distribution; (2) if the model is overly simplistic or overly complex, the extracted representations may be uninformative or unstable.  
In both cases, suboptimal profiles may impair the accuracy of distribution mapping and reduce the overall robustness of \textsc{Feroma}.

\paragraph{Unseen Distributions.}
\textsc{Feroma} associates models with training-time data distributions, but it does not explicitly address two challenging scenarios: (1) distributions that were seen during training but are absent in the final round, and thus not retained; and (2) entirely unseen distributions at test time. (See \autoref{app.seen} for further discussion)  
However, \textsc{Feroma} demonstrates strong generalization, as it assigns the most relevant model based on profile similarity—even for distributions not directly observed during training.  
Moreover, the models associated with final-round profiles offer strong initialization for downstream personalization or unsupervised adaptation. Unlike methods that begin personalization from a generic global model, \textsc{Feroma} provides a well-trained, distribution-aware starting point. Future work could periodically checkpoint model in sparsely populated regions of the descriptor space, ensuring that rare or transient distributions are retained for test deployment.

\section{Conclusions}
\label{subsec.con}
In this work, we proposed \textsc{Feroma}, an FL framework that explicitly addresses both distribution shift and drift across all four major types of data heterogeneity.
By leveraging lightweight, differentially private distribution profiles to represent client data, \textsc{Feroma} enables adaptive model aggregation based on distributional similarity without relying on any prior knowledge.
This profile-based design supports both training and test-time adaptation, allowing \textsc{Feroma} to generalize across dynamic client populations and unseen distributions, without requiring retraining or personalization from scratch.
Through extensive experiments, we demonstrated that \textsc{Feroma} consistently improves robustness and performance across a wide range of non-IID scenarios, with minimal overhead.  
Unlike prior methods that specialize in clustered, personalized, or adaptive FL, \textsc{Feroma} unifies these strategies under a single framework—scalable, adaptable, and suitable for real-world heterogeneous deployments.
This work lays the foundation for distribution-driven FL, and opens new directions for profile-based personalization, distribution tracking, and generalization to unseen client data in future systems.

\newpage

\bibliography{iclr2026_conference}
\bibliographystyle{iclr2026_conference}
\newpage

\appendix

\section*{Appendix Contents}
\addcontentsline{toc}{section}{Appendix Contents}
\vspace{1em}
\begin{itemize}[itemsep=8pt]
    \item[\ref{app.comp}] \nameref{app.comp} \dotfill \pageref{app.comp}

    \item[\ref{app.im.d}] \nameref{app.im.d} \dotfill \pageref{app.im.d}
    \item[] \hspace{2em} \ref{app.algo} \nameref{app.algo} \dotfill \pageref{app.algo}
    \item[] \hspace{2em} \ref{app.co.li} \nameref{app.co.li} \dotfill \pageref{app.co.li}
    \item[] \hspace{2em} \ref{app.model} \nameref{app.model} \dotfill \pageref{app.model}
    \item[] \hspace{2em} \ref{app.dpe} \nameref{app.dpe} \dotfill \pageref{app.dpe}
    \item[] \hspace{2em} \ref{app.impact} \nameref{app.impact} \dotfill \pageref{app.impact}

    \item[\ref{app.seen}] \nameref{app.seen} \dotfill \pageref{app.seen}
    \item[\ref{app.malicious}] \nameref{app.malicious} \dotfill \pageref{app.malicious}

    \item[\ref{app:pyx_association}] \nameref{app:pyx_association} \dotfill \pageref{app:pyx_association}
    \item[\ref{app.pidp}] \nameref{app.pidp} \dotfill \pageref{app.pidp}
    \item[\ref{app.quan}] \nameref{app.quan} \dotfill \pageref{app.quan}
    \item[\ref{app.casw}] \nameref{app.casw} \dotfill \pageref{app.casw}

    \item[\ref{app.add.exp}] \nameref{app.add.exp} \dotfill \pageref{app.add.exp}
    \item[] \hspace{2em} \ref{app.anda} \nameref{app.anda} \dotfill \pageref{app.anda}
    \item[] \hspace{2em} \ref{app.main_exp} \nameref{app.main_exp} \dotfill \pageref{app.main_exp}
    \item[] \hspace{2em} \ref{app.exp_scale} \nameref{app.exp_scale} \dotfill \pageref{app.exp_scale}
    \item[] \hspace{2em} \ref{app.chexpert} \nameref{app.chexpert} \dotfill \pageref{app.chexpert}
    \item[] \hspace{2em} \ref{app.cola} \nameref{app.cola} \dotfill \pageref{app.cola}    
    \item[] \hspace{2em} \ref{app.tsd} \nameref{app.tsd} \dotfill \pageref{app.tsd}   
    \item[] \hspace{2em} \ref{app.robust} \nameref{app.robust} \dotfill \pageref{app.robust}   
    
    \item[] \hspace{2em} \ref{app.monte} \nameref{app.monte} \dotfill \pageref{app.monte}
    \item[] \hspace{2em} \ref{app.threshold} \nameref{app.threshold} \dotfill \pageref{app.threshold}
    \item[] \hspace{2em} \ref{app.dft} \nameref{app.dft} \dotfill \pageref{app.dft}
    \item[\ref{app.reb_1}] \nameref{app.reb_1} \dotfill \pageref{app.reb_1}
    \item[\ref{app.reb_2}] \nameref{app.reb_2} \dotfill \pageref{app.reb_2}

    \item[\ref{app.add.llm}] \nameref{app.add.llm} \dotfill \pageref{app.add.llm}
\end{itemize}

\newpage

\section{Related Approaches to Distribution Shift and Drift in Federated Learning}\label{app.comp}
A rich body of research has emerged to mitigate data heterogeneity in FL. Existing approaches can be broadly grouped into three categories: \emph{clustered FL (CFL)}, \emph{personalised FL (PFL)}, or \emph{test‑time adaptive FL (TTA‑FL)}. Below we summarise their core ideas and limitations and contrast them with \textsc{Feroma}.

\paragraph{Clustered Federated Learning (CFL)}
CFL assumes that the federation comprises \(M\) data distributions and aims to learn one model per distribution.  Two variations are common: 
\begin{itemize}[nosep,leftmargin=*]
    \item \textit{Hard‑CFL.} Each client is assigned to \emph{exactly one} cluster \(c_m \subseteq \mathcal{K}_t\) based on predefined criteria or similarity measures, such that clusters are disjoint and collectively exhaustive: \(\bigcup_{m=1}^M c_m = \mathcal{K}_t\) and \(c_m \cap c_n = \emptyset\) for all \(m \neq n\). In principle, each cluster \(c_m\) contains a distinct subset of clients with similar data distributions, enabling the training of specialized models. Algorithms typically alternate between updating cluster models and re‑assigning clients, using clustering on model parameters \citep{CFL, gongAdaptiveClientClustering2024} or performance‑based metrics \citep{ifca, feddrift, FLIS2023, caiFedCE2023a}. Hard assignments simplify optimisation but struggle when distributions overlap, and they generally require prior knowledge of the number of clusters \(M\)—which is generally unavailable in practice.

    \item \textit{Soft‑CFL.} This approach allows clients to belong to multiple clusters with certain probabilities, accommodating scenarios where client data may exhibit overlapping distributions. Accordingly, each client holds a probability vector \(\pi^{(k)} \in \mathbb{R}^M\) and trains an ensemble of \(M\) models; examples include FedEM \citep{FedEM} and FedRC \citep{fedrc}. The joint optimisation of \(\{\pi^{(k)}\}_{k=1}^K\) and model parameters \(\{\theta_m\}_{m=1}^M\) is typically tackled via Expectation–Maximization or alternating minimisation \citep{zhouClusteredMultiTaskLearning2011}, which incurs additional communication and may suffer from convergence issues. Soft‑CFL also requires per‑client labeled data to estimate \(\pi^{(k)}\), making cold‑start and unseen‑client scenarios problematic.
\end{itemize}

\paragraph{Personalized Federated Learning (PFL).}
PFL reframes FL as a client-centric optimization problem, where each client $k$ learns its own personalized model $f^{(k)}$ that minimizes its local loss. This approach explicitly addresses the presence of heterogeneous non-IID data distributions across clients, rather than enforcing a single global model for all participants. Broadly, PFL methods fall into three classes:

\begin{itemize}[nosep,leftmargin=*]
  \item \emph{Fine-tuning.} 
        A global model is first trained and then locally adapted through additional gradient steps or meta-learning techniques to simplify personalization \citep{chenFMetaLearningFast2019, fallahPersonalizedFederatedLearning2020a,jiangImprovingFederatedLearning2023}. While simple, fine-tuning requires careful hyperparameter selection (e.g.\ learning rates, number of steps) and sufficient per-client data to avoid overfitting.

  \item \emph{Model decoupling.}  
        The network is partitioned into shared (global) and private (local) components. A common strategy is to jointly train a shared backbone while equipping it with separate global and personalized heads \citep{FedPer2019a, apfl, collinsExploitingShared2021a, jiangTheFed2022,marfoqPersonalizedFL2022}. Others personalize only batch-norm statistics \citep{liFedBN2021} or allow heterogeneous encoder architectures \citep{diaoHeteroFL2020}. These approaches improve representational capacity at the cost of increased on-device model size and computation.

  \item \emph{Regularization-based.} 
        These methods introduce a regularization term to balance local and global objectives. For example, a popular strategy is to augment each client’s loss with a penalty that ties the personalized model \(\phi^{(k)}\) to the global model \(\theta\):   
        \(
          \min_{\phi^{(k)}}\;\mathcal L^{(k)}\bigl(\phi^{(k)}\bigr)
            +\frac{\lambda}{2}\,\bigl\|\phi^{(k)}-\theta\bigr\|^{2},
        \)
        where \(\lambda\) balances local versus global objectives \citep{fallahPersonalizedFederatedLearning2020a,liDittoFairRobust2021}. Such bi-level formulations yield smooth personalization but introduce per-client hyperparameters, and nested optimization loops.
\end{itemize}

Despite their effectiveness when large amount of labeled data are available, PFL methods degrade with limited local samples, incur additional client-side compute and memory costs, and—being inherently supervised—are ill-suited for unseen, unlabeled clients or 
distribution drift at test time.

\paragraph{Test‑time adaptive federated learning (TTA‑FL).}
TTA‑FL addresses \emph{post‑deployment} distribution shift: after global training has concluded, each client adapts the received model to its own (unlabeled) test data. Most methods rely on \emph{unsupervised} objectives such as entropy minimisation \citep{jiangTheFed2022,atp,rajibFedCTTA} or self‑supervised contrastive losses \citep{tanHeterogeneityNotoriousTaming, chenContrastiveTTA2022, liangTTAFedDG2025}. Since gradients must be estimated without labels, the optimization landscape is often ill‑conditioned (e.g., in the presence of concept shift): entropy minimization can drive the model toward over‑confident but incorrect predictions, while contrastive objectives may collapse when test batches are small or unbalanced—a common situation in on‑device FL. To mitigate this instability, recent works restrict adaptation to a few parameters (e.g., batch‑norm statistics or a single gating weight that interpolates between global and personalized heads \citep{jiangTheFed2022}). However, these approaches still require labeled data during training to learn the personalized head, making them unsuitable for unseen and unlabeled clients at test time.

\paragraph{Contrast with \textsc{Feroma}.}
While PFL, CFL, and TTA-FL each address aspects of client heterogeneity, \textsc{Feroma} unifies their strengths in a lightweight, privacy-preserving framework:

\begin{itemize}[nosep,leftmargin=*]
  \item \textbf{vs.\ PFL.}  
        Instead of learning a separate model \(\phi^{(k)}\) for each client—incurring per-client hyperparameters, inner-loop optimization, and fine-tuning overhead \citep{FedPer2019a,apfl}—\textsc{Feroma} optimizes a \emph{single} model per observed distribution. Each client then receives its best-matching distribution slice based on a DP-protected profile, eliminating the need for client-specific adaptation or supervision. In doing so, \textsc{Feroma} shifts the objective from client-centric to distribution-centric optimization.

  \item \textbf{vs.\ CFL.}  
        Unlike hard- or soft-CFL methods that assume \(M\) disjoint distributions known a priori \citep{fedrc,feddrift,ifca,FedEM,FeSEM}, \textsc{Feroma} imposes no such assumptions. Its distribution profiles adapt continuously to overlapping or drifting distributions and naturally support unseen clients via the label-agnostic component (R2). Moreover, \textsc{Feroma} avoids the need to train multiple models per client or transmit multiple updates per round, thereby reducing both computation and communication costs compared to CFL approaches (see Table~\ref{table:table_sum}).

  \item \textbf{vs.\ TTA-FL.}  
        Rather than relying on costly and unstable unsupervised fine-tuning at test time (e.g., entropy or contrastive minimisation \citep{jiangTheFed2022,chenContrastiveTTA2022}), \textsc{Feroma} performs a one-shot profile extraction followed by nearest-distribution association. This eliminates the risk of overconfident predictions or collapsed representations, incurs negligible overhead (R5), and generalizes seamlessly to unseen, unlabeled clients.
\end{itemize}

\section{Implementation Details} \label{app.im.d}
\subsection{Algorithm pseudo-code} \label{app.algo}
We provide the pseudo-code for the training phase (\autoref{alg:feroma_train}) and inference phase (\autoref{alg:feroma_test}) of the proposed \textsc{Feroma} framework. Both algorithms detail the server-side and client-side operations involved at each round. During training, \textsc{Feroma} extracts distribution profiles from client data, maps them to previous-round profiles, and assigns models accordingly for local updates (Line 6 and 11 in \autoref{alg:feroma_train}). During inference, \textsc{Feroma} matches test clients to the closest available model based on distribution profile similarity, without requiring retraining (Line 9 in \autoref{alg:feroma_test}).

\begin{algorithm}[h]
\footnotesize
   \caption{The \textsc{Feroma} Framework - Training Phase}
   \label{alg:feroma_train}
   \begin{algorithmic}[1]
        \REQUIRE initial set of clients $\mathcal{K}_0=\{1,2, \dots, K_0\}, \mathcal{A}_0=\mathcal{K}_0$,
        initial global models $\{\theta_0^{(k)}\}_{k\in\mathcal{K}_1}$,
        initial profiles $\{d_0^{(k)}\}_{k\in\mathcal{K}_1}$,
         number of rounds $R$, DPE $\phi_{\psi}$, distance function $\mathcal{D}(\cdot,\cdot)$.

       \FOR{$t=1$ {\bfseries to} $R$}
       \STATE \textit{// Server-side} 
       \STATE $\mathcal{K}_t \leftarrow $ \textsc{UpdateClientPool}($\mathcal{K}_{t-1}$) \COMMENT{\# Update set of available clients}
       \STATE $\mathcal{A}_t \leftarrow$ \textsc{ClientSelect}($\mathcal{K}_t$) \COMMENT{\# Sample clients for training round}
       \STATE \textit{// Client-side}
       \FOR{each client $k \in \mathcal{A}_t$ \textbf{in parallel}}
            \STATE $d_t^{(k)} \leftarrow \phi_{\psi}(x_t^{(k)}, y_t^{(k)})$ \COMMENT{\# Def. \ref{def:dpe}}
            \STATE Send $d_t^{(k)}$ to the server
        \ENDFOR

        \STATE \textit{// Server-side}
        \FOR{each client $k \in \mathcal{A}_t$}
            \STATE $\{\bar{w}_t^{(k,j)}\}_{j\in\mathcal{A}_{t-1}} \leftarrow$ \textsc{ProfileMap}($d_t^{(k)},\{d_{t-1}^{(j)}\}_{j\in \mathcal{A}_{t-1}}$) \COMMENT{\# Eq. \eqref{eq:3}/\eqref{eq:4}}
            \STATE $\theta^{(k)}_{t} = \sum_{j \in \mathcal{A}_{t-1}} p_k \cdot \bar{w}_{t}^{(k,j)} \cdot \theta^{(j)}_{t-1}$ \COMMENT{\# Eq. \eqref{eq:model_assign}}
            \STATE Send $\theta_t^{(k)}$ to client $k$ 
        \ENDFOR

        \STATE \textit{// Client-side}
        \FOR{each client $k \in \mathcal{A}_t$ \textbf{in parallel}}
           \STATE $\theta^{(k)} \leftarrow \theta_t^{(k)}$
           \STATE $\theta^{(k)} \leftarrow \textsc{LocalUpdate}(\theta^{(k)}, x_t^{(k)}, y_t^{(k)})$
           \STATE Send $\theta^{(k)}$ to the server
       \ENDFOR

       \ENDFOR
   \end{algorithmic}
\end{algorithm}

\begin{algorithm}[h]
\footnotesize
   \caption{The \textsc{Feroma} Algorithm - Inference Phase}
   \label{alg:feroma_test}

   \begin{algorithmic}[1]
        \REQUIRE set of test clients $\mathcal{K}_{\text{test}}=\{1,2, \dots, K_{\text{test}}\}$, last-round client participation $\mathcal{A}_R$, last-round models $\{\theta_R^{(k)}\}_{k\in\mathcal{A}_R}$ with profiles $\{d_R^{(k)}\}_{k\in\mathcal{A}_R}$, DPE $\phi_{\psi}$, distance function $\mathcal{D}(\cdot,\cdot)$.
       
       \STATE \textit{// Client-side} 
       \FOR{each client $k \in \mathcal{K}_{\text{test}}$ \textbf{in parallel}}
           \STATE $d_{\text{test}}^{\prime(k)} \leftarrow \phi_{\psi}(x^{(k)}_{\text{test}}, \mathbf 0)$ \COMMENT{\# \ref{req:R2}} 
           \STATE Send $d_{\text{test}}^{\prime(k)}$ to the server
       \ENDFOR

        \STATE \textit{// Server-side}
        \STATE $\{d_{R}^{\prime(j)}\}_{j\in \mathcal{A}_{R}} \leftarrow $ \textsc{GetPrime}($\{d_{R}^{(j)}\}_{j\in \mathcal{A}_{R}}$)
        \FOR{each client $k \in \mathcal{K}_{\text{test}}$}
            \STATE $\{\bar{w}_{\text{test}}^{(k,j)}\}_{j\in \mathcal{A}_{R}} \leftarrow$ \textsc{ProfileMap}($d_{\text{test}}^{\prime(k)},\{d_{R}^{\prime(j)}\}_{j\in \mathcal{A}_{R}}$) \COMMENT{\# Eq. \eqref{eq:3}}
            \STATE $j^* \leftarrow \arg\max_{j \in \mathcal{A}_{R}} \bar{w}_{\text{test}}^{(k,j)}$
            \STATE $\theta^{(k)}_{\text{test}} \leftarrow \theta^{(j^*)}_{R}$ \COMMENT{\# Best-matching model}
            \STATE Send $\theta_{\text{test}}^{(k)}$ to client $k$ 
        \ENDFOR

       \STATE \textit{// Client-side} 
       \FOR{each client $k \in \mathcal{K}_{\text{test}}$ \textbf{in parallel}}
           \STATE $\theta^{(k)} \leftarrow \theta_{\text{test}}^{(k)}$
            \STATE $\hat{y}^{(k)} = f( x^{(k)}_{\text{test}};\theta^{(k)})$
        \ENDFOR
        
   \end{algorithmic}
\end{algorithm}

\subsection{Code, licenses and hardware} \label{app.co.li}
Our experiments were implemented using Python 3.12 and open-source libraries including
Scikit-learn 1.5 \citep{scikit_learn} (BSD license),
Flower 1.11 \citep{flower} (Apache License),
and PyTorch 2.4 \citep{Paszke2019PyTorchAI} (BSD license).
For visualization, we use Matplotlib 3.9 \citep{Hunter_2007} (BSD license) and Seaborn 0.13 \citep{seaborn} (BSD license), 
For data processing, we use Pandas 2.2 \citep{pandas} (BSD license).
The datasets used in our experiments—MNIST (GNU license), FMNIST (MIT license), CIFAR-10, CIFAR-100, CheXpert~\citep{chexpert}, and Office-Home—are freely available online. To ensure reproducibility, our code, along with detailed instructions for reproducing the experiments, is publicly accessible on GitHub\footnote{\href{https://github.com/dariofenoglio98/FEROMA.git}{https://github.com/dariofenoglio98/FEROMA.git}} under the MIT license.
We implement publicly available codes for our baselines (except FedAvg).
All experiments were conducted on a workstation equipped with four NVIDIA RTX A6000 GPUs (48 GB each), two AMD EPYC 7513 32-Core processors, and 512 GB of RAM.

\subsection{Models and hyper-parameter settings} \label{app.model}
We employ a 5-fold cross-validation strategy to evaluate the model’s performance, using random seeds (from 42 to 46) for all experiments to ensure reproducibility.
For datasets MNIST, FMNIST, and CIFAR-10, we use the LeNet-5~\citep{lenet} as the base model.
For datasets CIFAR-100, CheXpert and Office-Home, we use the ResNet-9~\citep{resnet} as the base model.
Specifically, we use CheXpert-v1.0-small, a subset from CheXpert, and use the first 120,000 samples for our experiments.
We train the model with a batch size of 64 for both training and testing. Each client allocated 20\% of their local data for evaluation. The FL process is conducted over 20 communication rounds, with 100\% participation rate and each client performing 2 local epochs per round. The learning rate is set to 0.005, and a momentum value of 0.9 is applied to optimize the training process.

\subsection{Distribution profile extraction} \label{app.dpe}
This section details the implementation of the distribution profile extractor (DPE) $\phi$, which is used to capture the local data distribution of each client in a privacy-preserving yet consistent manner across the federation. In addition, we provide a requirement-by-requirement justification that it fulfills Definition \ref{def:dpe}.  

\subsubsection{DPE implementation.}\label{pg:dpe_implementation}
Let the last hidden layer of the global model produce, for client~$k$ in round~$t$, a matrix of latent vectors $s_{t}^{(k)}\!\in\!\mathbb{R}^{v^{(k)}\times z}$. Our extractor maps these latents to a $d$‑dimensional profile $d_{t}^{(k)}$ in four privacy‑preserving steps:

\begin{enumerate}[leftmargin=20pt,nosep, label=\textit{S\arabic*}]
\item \textit{Global alignment (one shot, no raw data).}  
      Each client computes the element‑wise minimum and maximum of its latents and sends only the two $z$‑dimensional vectors ($2z$ floats) to the server.  
      The server aggregates by coordinate‑wise minimum and maximum, obtaining global bounds $[m^{-},m^{+}]$, and broadcasts these bounds together with the current model weights. No gradients, labels, or raw examples leave the devices.
      
\item \textit{Shared PCA on synthetic reference points.}  
      Using an agreed‑upon random seed, every client draws 200 points uniformly in the range $[m^{-},m^{+}]$ and fits a PCA map $g:\mathbb{R}^{z}\!\rightarrow\!\mathbb{R}^{l}$ (with $l=10$) on this \emph{synthetic} dataset only. \footnote{Because the synthetic points are fully determined by the shared seed and the broadcast bounds $[m^{-},m^{+}]$, the same projector $g$ could equivalently be fitted once on the server and broadcast to clients. We compute it locally to avoid broadcasting PCA parameters (projection + mean, $(zl+z)$ floats) and to simplify updates when latent ranges evolve across rounds; this choice does not affect privacy.} Because both the seed and the data are identical, the resulting linear projector $g$ is the same on every client, ensuring that Euclidean geometry in the reduced space is comparable across the federation.

\item \textit{Monte-Carlo moment computation.}  
      Each client projects its latents 
      \(
        z_{t}^{(k)} = g\bigl(s_{t}^{(k)}\bigr)\in\mathbb{R}^{v^{(k)}\times l}
      \)
      and draws \(M=3\) independent Bernoulli\((\gamma=0.5)\) masks (see \autoref{app.monte} for the ablation study on $M$).
      For each mask \(m\), it computes  
      \(
         (\mu_x^{(k,m)},\Sigma_x^{(k,m)}) 
         \quad\text{and}\quad
         \bigl\{(\mu_{u}^{(k,m)},\Sigma_{u}^{(k,m)})\bigr\}_{u=1}^{U},
      \)
      then averages over \(m\) to obtain  
      \(
         (\mu_x^{(k)},\Sigma_x^{(k)}) 
         \quad\text{and}\quad
         \bigl\{(\mu_{u}^{(k)},\Sigma_{u}^{(k)})\bigr\}_{u=1}^{U}.
      \)
      Here, $\mu_x^{(k)},\Sigma_x^{(k)}$ are the mean and covariance of the reduced latents (approximating the marginal \(P(X)\) of client data), and $(\mu_{u}^{(k)},\Sigma_{u}^{(k)})$ are the corresponding class‑conditional moments (approximating \(P(Y\!\mid\!X)\)). These averaged moments can be written coordinate-wise as \( h_i\bigl(z_{t}^{(k)},y_{t}^{(k)}\bigr) = \frac{1}{M}\sum_{m=1}^{M}h_i^{(m)}, \) where each \(h_i^{(m)}\) is an unbiased estimate of the corresponding full-sample statistic. Under sub-Gaussianity with proxy variance \(\tau^{2}\), the variance of this estimator is bounded by \(\tau^{2}/(M\gamma v^{(k)})\) (see \ref{pg:satisfyR3} for derivation).
    
\item \textit{Differential-privacy sanitization \& profile assembly.} 
      The concatenated statistics are each perturbed with an independent Laplace mechanism (\(\delta=0\))~\citep{dworkCalibratingNoiseSensitivity2006}:  
      \[
        \eta_i \;\stackrel{\mathrm{iid}}{\sim}\;\mathrm{Laplace}\!\bigl(0,\,b_i\bigr),
        \quad
        b_i=\Delta_{1,i}/\varepsilon,
      \]
      where \(\Delta_{1,i}\) is the \(\ell_{1}\)-sensitivity of statistic \(h_i\).  
      For a mean or standard-deviation coordinate we conservatively bound  
      \(\Delta_{1,i}\le \tfrac{\mathrm{Range}(h_i)}{v^{(k)}}\).  
      The released profile is therefore
      \[
        d^{(k)}_{t}
        = 
        [\,h_1,\ldots,h_d\,]^{\!\top} + \eta,
        \quad
        \eta\sim\mathrm{Laplace}\bigl(0,\mathrm{diag}(b_1,\dots,b_d)\bigr).
      \]
      Across the pipeline, the only raw, example-level information ever transmitted is the $2z$-float (i.e., min/max pair from \textit{S1}), ensuring compliance with FL privacy constraints.  
      Moreover, with \(\varepsilon=10.0\) and typical \(v^{(k)}>300\) and $\mathrm{Range(h_i)}< 10$, 
      the worst-case variance \(2b_i^2\le2.2\times10^{-5}\) is negligible relative to inherent data variability, yet it guarantees \((\varepsilon,0)\)-DP at the profile level.

\end{enumerate}

\subsubsection{Distribution fidelity (R1)} \label{pg:satisfyR1}
To meet the \emph{distribution fidelity} requirement (R1), we design our DPE so that Euclidean distances in the profile space provably track the 2-Wasserstein distance \(W_2\) between client data distributions. After aligning all clients with a globally consistent, privacy-preserving PCA (see Paragraph \ref{pg:dpe_implementation}), each profile concatenates the first two moments of (i) the marginal latent distribution \(P(X)\) and (ii) the class-conditional latents \(\{P(X\!\mid\!Y=u)\}_{u=1}^{U}\).

Consider the following assumptions:
\begin{itemize}
\item[(A1)] (Latent Gaussianity) Each latent distribution can be approximated by a Gaussian: \(P_i \approx \mathcal{N}(\mu_i,\Sigma_i)\). This is a standard assumption for deep features.
\item[(A2)] (Spectral bounds) There exist constants \(0<\lambda_{\min}\le \lambda_{\max}<\infty\) such that the spectra of all covariances lie in \([\lambda_{\min},\lambda_{\max}]\). This is enforced in our case by Step~S1 of the DPE, which ensures that all latent representations lie within a global bounding box \([m^-, m^+]\).
\item[(A3)] (Approximate commutation) Post-PCA, the covariances are (close to) diagonal; we use commutation to obtain tight identities and can otherwise rely on standard operator bounds. 
\end{itemize}

We can define the following proposition for marginals, which then extends to class-conditionals. This result applies to the noise-free moment map (i.e., $\bar\phi$); the stochastic DPE output $d_t^{(k)}$ concentrates around $\bar\phi$ as quantified by (R3).

\begin{proposition}[Bi-Lipschitz equivalence to \(W_2\) for marginals]\label{prop:r1_marginal}
Define the profile distance for two clients as
\begin{equation}
    \Delta^2 \;=\; \|\mu_1-\mu_2\|_2^2 \;+\; \|\Sigma_1-\Sigma_2\|_F^2.
    \label{eq:delta}
\end{equation}
If \textup{(A1)}–\textup{(A3)} hold, then for constants
\[
c_- \;=\; \min\{1,\, (2\sqrt{\lambda_{\max}})^{-1}\},\qquad
c_+ \;=\; \max\{1,\, (2\sqrt{\lambda_{\min}})^{-1}\},
\]
we have the two-sided bound
\[
c_-^2\,\Delta^2 \;\le\; W_2^2\!\bigl(\mathcal{N}(\mu_1,\Sigma_1),\mathcal{N}(\mu_2,\Sigma_2)\bigr)
\;\le\; c_+^2\,\Delta^2.
\]
Consequently \(W_2\) and \(\Delta\) are bi-Lipschitz equivalent \footnote{We use \emph{bi-Lipschitz equivalence} in the standard metric-geometry sense: two distances \(d_1,d_2\) are bi-Lipschitz equivalent on a set \(\mathcal{B}\) if there exist constants \(c_-,c_+>0\) such that for all \(a,b\in\mathcal{B}\), \(c_-\,d_1(a,b)\le d_2(a,b)\le c_+\,d_1(a,b)\). In our case, letting \(\Delta(d_{t_1}^{(k_1)},d_{t_2}^{(k_2)}) := \|d_{t_1}^{(k_1)}-d_{t_2}^{(k_2)}\|_2\), Proposition \ref{prop:r1_marginal} establishes constants \(c_-,c_+\) such that \(c_-\,\Delta \le W_2\!\bigl(P_{t_1}^{(k_1)},P_{t_2}^{(k_2)}\bigr)\le c_+\,\Delta\) over the admissible set considered in the proposition.
} on the set of admissible covariances.
\end{proposition}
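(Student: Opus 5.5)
The plan is to start from the closed form of the 2-Wasserstein distance between Gaussians (justified by (A1)):
\[
W_2^2\bigl(\mathcal{N}(\mu_1,\Sigma_1),\mathcal{N}(\mu_2,\Sigma_2)\bigr)=\|\mu_1-\mu_2\|_2^2+\mathcal{B}^2(\Sigma_1,\Sigma_2),
\]
where the Bures term is
\[
\mathcal{B}^2(\Sigma_1,\Sigma_2)=\operatorname{tr}\Sigma_1+\operatorname{tr}\Sigma_2-2\operatorname{tr}\bigl(\Sigma_1^{1/2}\Sigma_2\Sigma_1^{1/2}\bigr)^{1/2}.
\]
The mean part already matches the first term of $\Delta^2$ in \eqref{eq:delta} with constant $1$. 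This is why both $c_-$ and $c_+$ contain a $1$ inside their $\min/\max$. The whole task therefore reduces to sandwiching $\mathcal{B}^2$ between multiples of $\|\Sigma_1-\Sigma_2\|_F^2$, and then combining the two terms by taking the worse constant on each side.

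For the covariance term I will use (A3). If $\Sigma_1$ and $\Sigma_2$ commute, they are simultaneously diagonalizable with eigenvalues $a_i,b_i\in[\lambda_{\min},\lambda_{\max}]$ by (A2). In that case $\mathcal{B}^2=\|\Sigma_1^{1/2}-\Sigma_2^{1/2}\|_F^2=\sum_i(\sqrt{a_i}-\sqrt{b_i})^2$. The key scalar identity is
\[
(\sqrt{a}-\sqrt{b})^2=\frac{(a-b)^2}{(\sqrt{a}+\sqrt{b})^2},
\]
together with $2\sqrt{\lambda_{\min}}\le\sqrt{a}+\sqrt{b}\le 2\sqrt{\lambda_{\max}}$. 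Summing over $i$ gives
\[
\frac{\|\Sigma_1-\Sigma_2\|_F^2}{4\lambda_{\max}}\le\mathcal{B}^2\le\frac{\|\Sigma_1-\Sigma_2\|_F^2}{4\lambda_{\min}},
\]
since $\|\Sigma_1-\Sigma_2\|_F^2=\sum_i(a_i-b_i)^2$ in the common eigenbasis. Adding the mean term and bounding each summand by the smaller (respectively larger) of the two constants yields $c_-^2\Delta^2\le W_2^2\le c_+^2\Delta^2$ with exactly the stated $c_\pm$. Taking square roots gives the bi-Lipschitz equivalence.

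The main obstacle is that (A3) only asserts approximate commutation, so I would add a remark on robustness. The upper bound survives without commutation. First, $\mathcal{B}^2(\Sigma_1,\Sigma_2)=\min_U\|\Sigma_1^{1/2}-\Sigma_2^{1/2}U\|_F^2\le\|\Sigma_1^{1/2}-\Sigma_2^{1/2}\|_F^2$. Second, write $X=\Sigma_1^{1/2}-\Sigma_2^{1/2}$; then $\Sigma_1-\Sigma_2=\Sigma_1^{1/2}X+X\Sigma_2^{1/2}$. This Sylvester equation has coefficients bounded below by $\sqrt{\lambda_{\min}}$, so $\|X\|_F\le\|\Sigma_1-\Sigma_2\|_F/(2\sqrt{\lambda_{\min}})$.

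The lower bound is genuinely where commutation is used. The minimizing $U$ may reduce the Bures term below $\|\Sigma_1^{1/2}-\Sigma_2^{1/2}\|_F^2$. In the non-commuting case I would state the bound up to a perturbation term controlled by the size of the commutator $\|[\Sigma_1,\Sigma_2]\|$, which (A3) asserts is small after PCA. I would not attempt a sharp non-commutative lower constant.
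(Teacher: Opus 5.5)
Your main argument is correct and is essentially the paper's proof: the Gaussian closed form, exact commutation to reduce the Bures term to $\|\Sigma_1^{1/2}-\Sigma_2^{1/2}\|_F^2$, the square-root bound, and the same $\min/\max$ combination with the mean term; your eigenvalue-wise identity $(\sqrt{a}-\sqrt{b})^2=(a-b)^2/(\sqrt{a}+\sqrt{b})^2$ is the precise version of the paper's ``entrywise'' mean-value step. One correction to your side remark: the lower bound does not need commutation either, because for any orthogonal $U$, setting $A=\Sigma_1^{1/2}$ and $C=\Sigma_2^{1/2}U$ gives $\Sigma_1-\Sigma_2=A(A-C)^\top+(A-C)C^\top$, hence $\|\Sigma_1-\Sigma_2\|_F\le 2\sqrt{\lambda_{\max}}\,\|A-C\|_F$, and minimizing over $U$ yields $\|\Sigma_1-\Sigma_2\|_F\le 2\sqrt{\lambda_{\max}}\,\mathcal{B}(\Sigma_1,\Sigma_2)$; together with your Sylvester upper bound, this shows (A3) is actually unnecessary.
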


\begin{proof}
For notational simplicity, we write each client’s marginal profile as \(d^{(k)} = [\mu_k, \mathrm{vec}(\Sigma_k)]\). Then the squared $\ell_2$ distance between two client profiles is
\[
\| d^{(1)} - d^{(2)} \|_2^2 = \|\mu_1 - \mu_2\|_2^2 + \|\mathrm{vec}(\Sigma_1 - \Sigma_2)\|_2^2.
\]
For any matrix \(A\), \(\|\mathrm{vec}(A)\|_2^2=\|A\|_F^2\). 
With \(A=\Sigma_1-\Sigma_2\), we obtain Appendix equation \ref{eq:delta}:
\[
\Delta^2=\| d^{(1)} - d^{(2)} \|_2^2 
= \|\mu_1 - \mu_2\|_2^2 + \|\Sigma_1 - \Sigma_2\|_F^2.
\]
For Gaussians,
\(
W_2^2
= \|\mu_1-\mu_2\|_2^2 + B^2(\Sigma_1,\Sigma_2),
\)
where
\(B^2(\Sigma_1,\Sigma_2)=\operatorname{Tr}\!\bigl(\Sigma_1+\Sigma_2-2(\Sigma_1^{1/2}\Sigma_2\Sigma_1^{1/2})^{1/2}\bigr)\)
is the squared Bures distance \citep{villani2008optimal}. The mean terms coincide; it remains to compare \(B\) to \(\|\Sigma_1-\Sigma_2\|_F\).

Under (A3) (covariances commute $\Sigma_1\Sigma_2=\Sigma_2\Sigma_1$—e.g., are diagonal in the shared PCA basis),
\[
\Sigma_1^{1/2}\Sigma_2\Sigma_1^{1/2}
= \Sigma_1^{1/2}\Sigma_2^{1/2}\Sigma_2^{1/2}\Sigma_1^{1/2}
= \bigl(\Sigma_1^{1/2}\Sigma_2^{1/2}\bigr)^2 \quad\Rightarrow\quad \bigl(\Sigma_1^{1/2}\Sigma_2\Sigma_1^{1/2}\bigr)^{1/2}
= \Sigma_1^{1/2}\Sigma_2^{1/2}.
\]
Plugging this into the Bures formula:
\[
B^2(\Sigma_1,\Sigma_2)
= \mathrm{Tr}(\Sigma_1)+\mathrm{Tr}(\Sigma_2)
-2\,\mathrm{Tr}\!\left(\bigl(\Sigma_1^{1/2}\Sigma_2\Sigma_1^{1/2}\bigr)^{1/2}\right)
= \mathrm{Tr}(\Sigma_1)+\mathrm{Tr}(\Sigma_2)-2\,\mathrm{Tr}(\Sigma_1^{1/2}\Sigma_2^{1/2}).
\]
Using the identity for the squared Frobenius norm of the difference between two symmetric matrices:
\[
\bigl\|\Sigma_1^{1/2}-\Sigma_2^{1/2}\bigr\|_F^2
= \mathrm{Tr}(\Sigma_1)+\mathrm{Tr}(\Sigma_2)-2\,\mathrm{Tr}(\Sigma_1^{1/2}\Sigma_2^{1/2}) \quad\Rightarrow\quad
B^2(\Sigma_1,\Sigma_2)
= \bigl\|\Sigma_1^{1/2}-\Sigma_2^{1/2}\bigr\|_F^2.
\]
By the mean value theorem for \(f(x)=\sqrt{x}\) on \([\lambda_{\min},\lambda_{\max}]\), applied entrywise,
\[
\frac{1}{2\sqrt{\lambda_{\max}}}\,\|\Sigma_1-\Sigma_2\|_F
\;\le\;
B(\Sigma_1,\Sigma_2)
\;\le\;
\frac{1}{2\sqrt{\lambda_{\min}}}\,\|\Sigma_1-\Sigma_2\|_F.
\]
Let \(a=\|\mu_1-\mu_2\|_2^2\), \(b=\|\Sigma_1-\Sigma_2\|_F^2\), and \(k\in[k_{\min},k_{\max}]\) with \(k_{\min}=1/(4\lambda_{\max})\), \(k_{\max}=1/(4\lambda_{\min})\). Then
\[
\min\{1,k\}(a+b)\le a+kb \le \max\{1,k\}(a+b),
\]
which yields
\(c_-^2\,\Delta^2 \le W_2^2 \le c_+^2\,\Delta^2\)
with the stated \(c_-\) and \(c_+\).

\end{proof}

\textbf{Empirical validation.} We validate this guarantee on 44\,850 client–round pairs generated from MNIST under three levels of non‑IID concept shift that perturb \(P(Y\!\mid\!X)\) (low, medium, high; see Appendix~\ref{app.anda} for the protocol). Table~\ref{tab:eps_R1} reports the absolute error
\(\delta = \bigl|\|d_{t_1}^{(k_1)}-d_{t_2}^{(k_2)}\|_2 - D(P_{t_1}^{(k_1)}, P_{t_2}^{(k_2)})\bigr|\).
For the target 2‑Wasserstein metric, the worst‑case error never exceeds~1.1, with a mean of \(0.49\pm0.11\). We also compute the gap w.r.t.\ the Jensen–Shannon (JS) distance to demonstrate robustness to the choice of \(D\); the maximum JS error is \(<0.54\) and the mean is \(0.32\pm0.08\). These results confirm that our extractor satisfies (R1) with \(\xi<1.1\) across a broad spectrum of distribution shifts while preserving local‑data privacy.
\begin{table}[t]
  \centering
  \scriptsize
  \subfloat[2\textbf{‑Wasserstein}]{
    \begin{tabular}{@{}lrrrr@{}}
      \toprule
      \textbf{Non-IID Level} & \textbf{Max} & \textbf{Min} & \textbf{Mean} & \textbf{Std} \\
      \midrule
      low    & 1.060 & 0.194 & 0.526 & 0.106 \\
      medium & 1.028 & 0.159 & 0.479 & 0.113 \\
      high   & 1.016 & 0.136 & 0.458 & 0.108 \\
      \bottomrule
    \end{tabular}
  }\quad \quad \quad \quad \quad
  \subfloat[\textbf{Jensen–Shannon}]{
    \begin{tabular}{@{}lrrrr@{}}
      \toprule
      \textbf{Non-IID Level} & \textbf{Max} & \textbf{Min} & \textbf{Mean} & \textbf{Std} \\
      \midrule
      low    & 0.477 & 0.030 & 0.288 & 0.074 \\
      medium & 0.517 & 0.009 & 0.332 & 0.084 \\
      high   & 0.535 & 0.007 & 0.352 & 0.078 \\
      \bottomrule
    \end{tabular}
  }
  \vspace{5pt}
  \caption{Absolute error \(\delta\) between profile distance and the true inter‑client distance under three non‑IID Levels on MNIST under \(P(Y\!\mid\!X)\) concept shift.}
  \label{tab:eps_R1}
  \vspace{3pt}
\end{table}

\subsubsection{Label agnosticism (R2)}\label{pg:satisfyR2}
In a deployed FL system the clients encountered at test‑time rarely possess reliable labels. Requirement~(R2) therefore asks for a sub‑vector of every profile that can be computed with \emph{features only}. Our implementation already provides such a component:
\begin{itemize}[leftmargin=*,nosep]
\item \textit{Training phase.}  
      Step~\textit{S3} produces \emph{both} the marginal moments $(\mu_x^{(k)},\Sigma_x^{(k)})$ and the class‑conditional moments $\{(\mu_{u}^{(k)},\Sigma_{u}^{(k)})\}_{u=1}^{U}$.  
      The resulting profile splits naturally into  
      \[
          d_{t}^{(k)}
          =\bigl[
              \underbrace{
                \mu_x^{(k)},\Sigma_x^{(k)}
              }_{\displaystyle d'^{(k)}_{t}\in\mathbb{R}^{p}}
              ,
              \;
              \underbrace{
                \mu_{1}^{(k)},\Sigma_{1}^{(k)},
                \dots,
                \mu_{U}^{(k)},\Sigma_{U}^{(k)}
              }_{\displaystyle d^{\prime\prime (k)}_{t}\in\mathbb{R}^{d-p}}
            \bigr] .
      \]

\item \textit{Test phase (labels unavailable).}  
      The client repeats \textit{S1–S2} unchanged, then executes the \emph{label‑free} part of \textit{S3}, yielding only $(\mu_x^{(k)},\Sigma_x^{(k)})$. Reasonably, the conditional distribution $P(Y\!\mid\!X)$ cannot be approximated at test time in the absence of labels. These statistics form the sub‑vector
      \(
        d'^{(k)}_{t}:=\phi_\psi\bigl(x_{t}^{(k)},\mathbf 0\bigr)\in\mathbb{R}^{p}
      \),
      fully satisfying the formal condition in the main text.  
      Step~\textit{S4} applies the same Laplace mechanism coordinate‑wise, so $d'^{(k)}_{t}$ enjoys the same $(\varepsilon,\delta{=}0)$ differential‑privacy guarantee as the full profile.

\end{itemize}
Because the PCA projector is shared (\textit{S2}) and the noise calibration in \textit{S4} is data‑independent, Euclidean distances between two label‑agnostic profiles, \(
  \|d'^{(k_1)}_{t_1}-d'^{(k_2)}_{t_2}\|_2,\) remain a meaningful proxy for the marginal Wasserstein distance between \(P(X)\) distributions. Consequently, our DPE allows us to match, at test time, unseen and unlabeled clients to the closest marginal distributions fitted during training—satisfying Requirement~(R2). 

\subsubsection{Controlled stochasticity (R3)}\label{pg:satisfyR3}
To thwart exact fingerprinting of a client whose distribution remains static across rounds—which would otherwise cause it to be matched with certainty at each round, potentially suppressing the contribution of other clients with similarly relevant distributions during aggregation—the extractor must output similar but not identical profiles for the same input—while preserving the true geometry in expectation.  We achieve this with two independent randomness sources.

\begin{enumerate}[leftmargin=*,nosep]
\item \textit{Monte‑Carlo subsampling.}  
      Given $v^{(k)}$ examples, the client draws \(M=3\) independent Bernoulli(\(\gamma=0.5\)) masks and computes the moments of each subsample.  
      Averaging these estimates yields the profile statistic
      \(
        \tilde h
        :=
        \tfrac1M\sum_{m=1}^{M} h^{(m)},
      \)
      where each \(h^{(m)}\) is unbiased for the corresponding full-sample statistic \(h=\bar\phi(z_{t}^{(k)},y_{t}^{(k)})\). If every reduced latent coordinate is sub-Gaussian with proxy variance \(\tau^{2}\), then
      \[
        \operatorname{Var}\!\bigl(\tilde h\bigr)
        \;\le\;
        \frac{\tau^{2}}{M\gamma v^{(k)}},
      \]
      giving a data‑dependent variance that shrinks both with sample size and with the number of Monte‑Carlo replicas.

\item \textit{Laplace mechanism (Step \textit{S4}).}  
      The zero‑mean noise \(\eta_i\sim\mathrm{Laplace}(0,b_i)\) adds fixed variance \(2b_i^{2}\) per coordinate.  Because the noise is independent of the subsampling, the total covariance is diagonal and bounded:
      \begin{equation}
        \operatorname{Cov}\!\bigl(d^{(k)}_{t}\mid z_t^{(k)},y_t^{(k)}\bigr)
        \;\preceq\;
        \Bigl(\tfrac{\tau^{2}}{M\gamma v^{(k)}}+2b_{\max}^{2}\Bigr)\mathbf I_{d}
        \;=\;
        \rho^{2}\mathbf I_{d}. \label{eq:covariance_sto}
      \end{equation}
\end{enumerate}

Hence \(
  \mathbb{E}\!\bigl[d^{(k)}_{t}\mid z_t^{(k)},y_t^{(k)}\bigr]
   = \bar\phi\!\bigl(z^{(k)}_{t},y^{(k)}_{t}\bigr)
\)
and
\(
  \operatorname{Cov}\!\bigl(d^{(k)}_{t}\mid z_t^{(k)},y_t^{(k)}\bigr)\preceq\rho^{2}\mathbf I_{d}.
\)
This exactly matches Requirement~(R3) with
\(
  \rho^{2}=\tfrac{\tau^{2}}{M\gamma v^{(k)}}+2b_{\max}^{2}.
\)
In our experiments (\(v^{(k)}> 300,\;\tau^{2}<1.0,\;b_{\max}=0.003\)) this gives \(\rho^{2}\le 2.2\!\times\!10^{-3}\), yielding profile distances that are stable across draws yet impossible to replicate perfectly—providing the desired controlled stochasticity.

\subsubsection{Differential‑privacy guarantee (R4)}\label{pg:satisfyR4}
Because \textsc{Feroma} transmits client‑side profiles \(\{d_{t}^{(k)}\}_{k\in\mathcal K_t} \subset \mathbb{R}^{d}\)—which, by design, capture each client’s data distribution—an adversary could, in principle, combine them with model updates to mount stronger data reconstruction or membership inference attacks \citep{memb_inf_attack, zariMIA2021, liPassiveMIA2022, data_rec_gan, DLG, gradinv}. For this reason, Requirement~(R4) mandates an \((\varepsilon,\delta)\)-differential privacy guarantee at the \emph{sample} level. The coordinate-wise Laplace mechanism (implemented in \textit{S4}) ensures \((\varepsilon,0)\)-DP for the entire profile vector \(d_{t}^{(k)}\), since each coordinate is perturbed with noise scaled to the same \(\varepsilon\), and sensitivities are computed in the \(\ell_1\) norm. Thus, for any neighbouring datasets \((x, y)\) and \((x', y')\) that differ in one example, and for any measurable set \(\mathcal{S} \subseteq \mathbb{R}^{d}\),
\[
\Pr\!\bigl[\phi_{\psi}(x,y)\in\mathcal S\bigr]
\;\le\;
e^{\varepsilon}\,
\Pr\!\bigl[\phi_{\psi}(x',y')\in\mathcal S\bigr],
\]
with \(\delta = 0\). In our experiments we set \(\varepsilon = 10.0\); with typical client sizes \(v^{(k)} > 300\), the added variance \(2(\Delta_{1,i}/\varepsilon)^2 \le 2.2\times10^{-5}\) is negligible compared to natural data variability, so inter-profile distances remain reliable. Combined with the many‑to‑one nature of \(\phi_{\psi} : \mathbb{R}^{v^{(k)} \times (z+u)} \to \mathbb{R}^d\)—where infinitely many distinct datasets map to the same profile, making inversion information-theoretically impossible—this DP mechanism bounds any additional leakage to an \(\varepsilon\)-limited factor beyond what is already exposed by model parameters, thereby fully satisfying Requirement~(R4).

\subsubsection{Compactness (R5)}\label{pg:satisfyR5}
Requirement~(R5) demands that profile extraction adds only marginal computation and communication overhead relative to standard FL. Our implementation meets this target in both aspects.

\begin{itemize}
\item \textit{Computation.}  
    The only non‑trivial operation is fitting a rank‑\(l\) PCA on \(s^{\text{PCA}}=200\) synthetic latent vectors of dimension \(z\) (Step~\textit{S2}).  
    Using an SVD solver, the cost is
    \(O\!\bigl(\!\min(s^{\text{PCA}}z^{2},(s^{\text{PCA}})^{2}z)\bigr)\).
    In practice \(z\!>\!s^{\text{PCA}}\), so the second term dominates:  
    \((s^{\text{PCA}})^{2}z\!=\!200^{2}z\!\approx\!4\times10^{4}z\) floating‑point operations.  
    For typical latent sizes (\(z \in [128,2048]\)), this results in at most \(8.2\!\times\!10^7\) FLOPs—negligible compared to the cost of a single local training epoch.  
    For reference, a single forward pass (no backward, no optimization) with our smallest network on MNIST exceeds \(6.5\!\times\!10^8\) FLOPs, while the largest network used on CIFAR‑100 requires over \(6\!\times\!10^{11}\) FLOPs per epoch. Moment computation (Step~\textit{S3}) involves simple summations and products, and is therefore negligible, while Laplace noise injection (Step~\textit{S4}) has complexity \(O(d)\).
    
\item \textit{Communication.}  
    Each profile transmits \(
    d \;=\; (l+l)\times(1+U)\quad\text{floats}, \) i.e.\ a mean and a (diagonal) covariance entry per latent dimension for the marginal distribution plus the same for each of the \(U\) classes.
    With \(l=10\) this yields  
    \(d=220\) floats for MNIST (\(U=10\)) and \(d=2\,020\) for CIFAR‑100 (\(U=100\)).  
    By contrast, the model updates in our experiments range from 62 006 parameters (MNIST) to 6 775 140 parameters (CIFAR‑100).  
    Hence the profile occupies at most \(d/|\theta|\le3.5\times10^{-3}\) of the uplink payload, comfortably satisfying the compactness criterion \(d/|\theta|\le10^{-2}\).
\end{itemize}
    

These results demonstrate that the DPE introduces only minimal overhead while maintaining the communication efficiency expected in FL, thereby fully satisfying Requirement~(R5).

\subsection{Impact of DPE quality}
\label{app.impact}

\textsc{Feroma} relies on DPEs to summarize client data distributions, and its effectiveness therefore depends on the quality of the extracted profiles. This dependency is not unique to \textsc{Feroma}: many personalized or clustered FL methods implicitly rely on representation quality through gradients, model updates, or learned embeddings. \textsc{Feroma} makes this dependence explicit through compact distribution profiles, but does not introduce a fundamentally new requirement beyond what is already assumed in modern FL systems.

Importantly, \textsc{Feroma} is designed to remain stable even when the DPE is imperfect or noisy. The profiles produced by the DPE are intentionally low-dimensional and capture coarse, distribution-level statistics rather than fine-grained features. As a result, aggregation decisions depend only on relative profile distances, not on exact representations. Noise or approximation error therefore affects \textsc{Feroma} only through these distances. In practice, the thresholding mechanism further filters out uninformative or noisy associations, and stable aggregation strategies emerge as long as relative similarities are approximately preserved. This behavior is confirmed by the sensitivity analysis which shows that moderate degradation in profile quality—e.g., through increased differential privacy noise—leads to only mild performance impact.

While our implementation instantiates the DPE using a few-round warm-up model, \textsc{Feroma} itself does not require a pretrained encoder. The DPE can be realized using a variety of mechanisms, including lightweight public encoders, simple statistical descriptors computed directly on the data (e.g., feature moments or sketching), or any mapping that satisfies Definition~3.1. Moreover, \textsc{Feroma} does not require profiles to be available from the first training round: early rounds can follow standard FedAvg, with profile-based aggregation activated once a minimally reasonable encoder or statistic becomes available. This design makes \textsc{Feroma} applicable even in low-resource settings where no strong pretrained model exists.

Overall, \textsc{Feroma} does not require task-optimal representations. It only assumes that the DPE provides consistent embeddings or statistics that approximately preserve distributional differences across clients. Under this mild assumption, \textsc{Feroma} remains robust and effective across a wide range of heterogeneity settings.

\section{Discussion: Handling Seen-once or Unseen Distributions} \label{app.seen}
As illustrated in \autoref{fig_appendix.seen_once} and \autoref{fig_appendix.unseen}, there are two drifting scenarios that \textsc{Feroma} does not explicitly target:  
\textbf{(1) Seen-once distributions}, which appear during intermediate training rounds but are absent in the final round; and  
\textbf{(2) Unseen distributions}, which never occur during training but appear during testing. These two conditions may also coexist.

While \textsc{Feroma} retains only final-round profiles and models by default, it can be extended to address both cases effectively.  
For seen-once distributions, the server can optionally store their corresponding profiles and models during training, provided they yield acceptable validation performance. This enables \textsc{Feroma} to retain models for all distributions encountered during training, not just those from the final round.  
For entirely unseen distributions, which are fundamentally unpredictable, \textsc{Feroma} still assigns the closest available model based on distribution profile similarity. Unlike standard baselines (e.g., FedAvg) that rely on a single global model, \textsc{Feroma} selects from a diverse set of models trained on different distributional modes. This leads to better initial performance and provides a stronger starting point for downstream personalization or test-time adaptation.  
In this sense, \textsc{Feroma} complements and can be naturally integrated with test-time adaptive FL methods to further improve robustness in dynamic environments.

\textbf{Empirical validation.} Importantly, both seen-once and unseen distribution scenarios are inherently present in our experimental setup. Our dynamic FL experiments with varying drift frequencies (every 2-4 rounds) naturally create seen-once distributions as client data evolves over time. Similarly, our test-time evaluation on cold-start and test-only clients directly evaluates performance on unseen distributions. The consistent performance gains of \textsc{Feroma} across all experimental conditions—achieving up to 12 percentage points improvement over baselines—demonstrate that our framework effectively handles these challenging scenarios in practice. This empirical evidence validates that distribution-profile-based model selection provides robust generalization even when exact distributional matches are unavailable.

\begin{figure}[h!]
  \centering
  \includegraphics[width=1\textwidth]{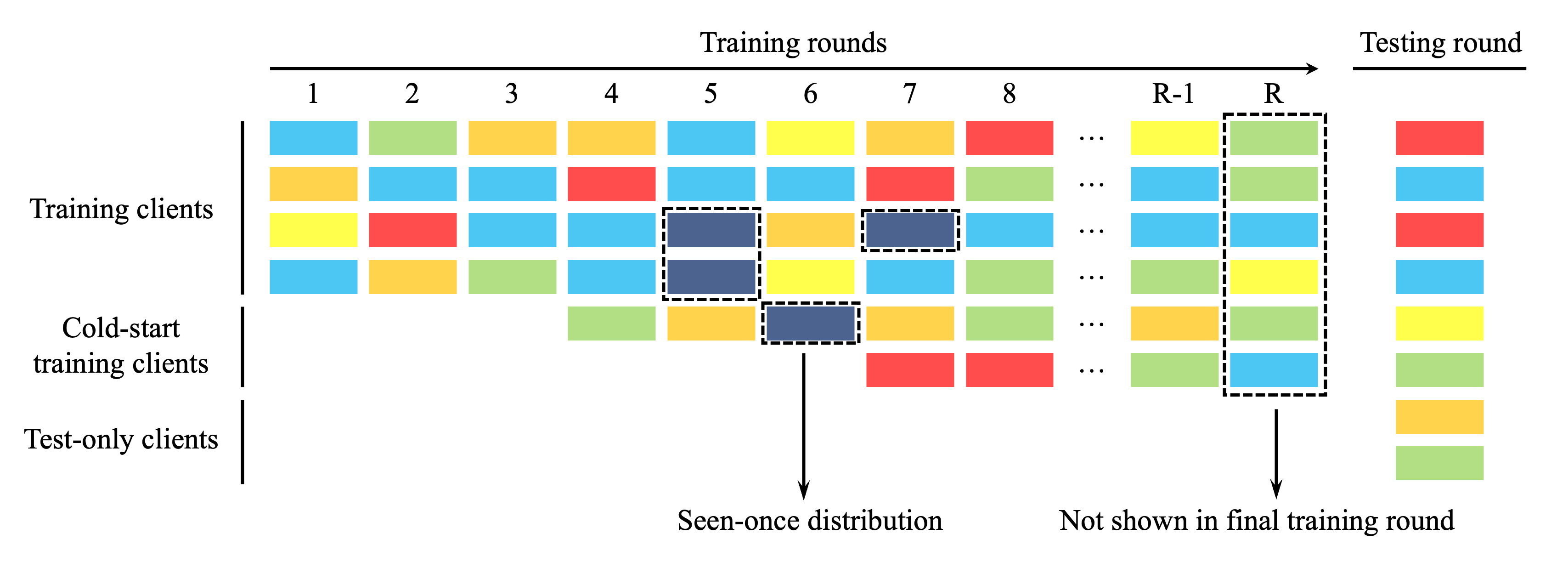}
  \caption{
  \textbf{Seen-once distribution in training stage.}
  }
  \label{fig_appendix.seen_once}
\end{figure}

\begin{figure}[h!]
  \centering
  \includegraphics[width=1\textwidth]{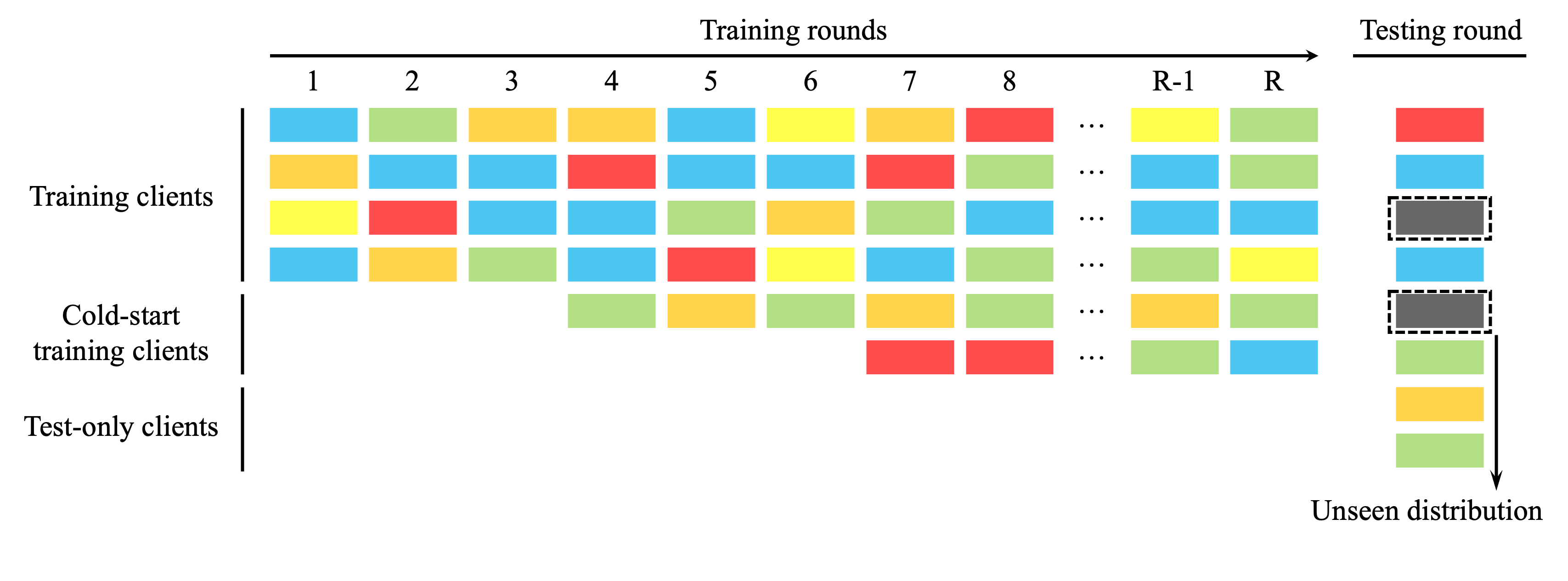}
  \caption{
  \textbf{Unseen distribution in testing stage.}
  }
  \label{fig_appendix.unseen}
\end{figure}

\section{\textsc{Feroma} Robustness Against Malicious Clients}
\label{app.malicious}

In this section, we discuss the robustness of \textsc{Feroma} against malicious or dishonest clients that attempt to manipulate the training process \citep{xiaPoisoningAttacksFL2023b, fenoglioFBPs2024g}. While \textsc{Feroma} is not explicitly designed as a Byzantine-robust FL algorithm, its profile-based design introduces several natural protections that significantly limit the effectiveness and impact of adversarial behavior.

\paragraph{Threat model.}
We consider adversarial clients that may (i) arbitrarily manipulate their local training updates, (ii) deliberately distort their extracted distribution profiles, or (iii) attempt to infer or target the data distributions of other clients. We do not assume the presence of secure hardware on the client side, but we assume standard authenticated communication with the server, as in most FL systems.

\paragraph{Limited information exposure.}
In \textsc{Feroma}, clients do not have access to the distribution profiles, similarity weights, or aggregation decisions of other clients. The server never broadcasts the global set of profiles, nor does it reveal the weights assigned to a client in previous rounds. Profiles are not shared among clients, and only the aggregated model (or assigned model) is returned. As a result, malicious clients lack the information required to meaningfully infer, target, or impersonate other clients’ data distributions, even across multiple rounds.

\paragraph{Minimal influence of manipulated profiles.}
Even under an unusually strong adversary that can observe all profiles across rounds, deliberately manipulating a profile yields limited influence. Aggregation weights in \textsc{Feroma} are derived from profile distances: a profile that deviates substantially from the population receives negligible weight and is removed entirely by the thresholding mechanism. Conversely, an adversary attempting to craft a profile that mimics a specific target distribution can only affect the corresponding local neighborhood in profile space—equivalent to influencing a small clustered subgroup or, in the extreme case, a single personalized branch. The remainder of the system remains unaffected, preventing global poisoning effects.

\paragraph{Lightweight profiles and secure transmission.}
The above strong adversary scenario—full access to all profiles—is highly unlikely in practice. Nevertheless, \textsc{Feroma} is designed to remain robust even under such assumptions. Because distribution profiles are intentionally lightweight, they can be efficiently protected using standard encryption or secure-channel techniques. In contrast to traditional FL, where encrypting full model updates is often prohibitively expensive due to their size and frequency, encrypting \textsc{Feroma}’s compact profiles introduces negligible computational and communication overhead. This makes secure transmission practical and substantially reduces the likelihood of profile compromise.

Taken together, these properties imply that \textsc{Feroma} naturally limits the attack surface available to malicious clients. While adversarial manipulation can still affect a small subset of closely matched distributions, the system as a whole remains stable and resilient. We view this robustness as a consequence of decoupling model aggregation from explicit client or cluster identities and instead relying on soft, similarity-based associations over compact distribution profiles. Formal defenses against fully Byzantine adversaries are complementary to \textsc{Feroma} and represent an interesting direction for future work.

\section{Test-Time Model Association under \(P(Y|X)\) Concept Shift} \label{app:pyx_association}
In practical settings, class-conditional distributions \(P(X\mid Y=y)\) cannot be estimated at test time, as labels are typically unavailable. As a result, detecting \(P(Y\mid X)\) concept shift during inference is a known impossibility: different label distributions can induce identical feature distributions, leaving no observable signal for detecting the shift based solely on input features. In real-world applications, \(P(Y\mid X)\) shifts are often linked to evolving user preferences, and the only feasible solution is to query a small number of labeled examples at test time to infer the underlying preference.

To assess the feasibility of this approach, we evaluated \textsc{Feroma} under varying amounts of labeled test samples. Specifically, we measured test-time model association accuracy on MNIST with \(P(Y\mid X)\) shifts of increasing severity (low, medium, high non-IID), using between 1 and 50 labeled samples per class. Importantly, these labels were used solely for model selection and not for any model retraining. Results are reported in Figure~\ref{feroma_pyx_test_sample}. As expected, increasing the number of labeled samples improves the association quality, approaching the upper bound defined by the optimal association (shown as dashed lines). Moreover, stronger non-IID levels require more labeled examples to achieve a good association, reflecting the inherent difficulty of \(P(Y\mid X)\) shift scenarios. 

\paragraph{Practical instantiation.} From these results, we conclude that using 20 labeled samples per class strikes a good trade-off between performance and labeling cost. Accordingly, we adopt this configuration in all our experiments involving test-time association with \textsc{Feroma}. To ensure a fair comparison, baseline methods are evaluated under (optimal) known association, since they do not natively address \(P(Y\mid X)\) shifts or propose alternative strategies.
\begin{figure}
  \centering
  \includegraphics[width=1\textwidth]{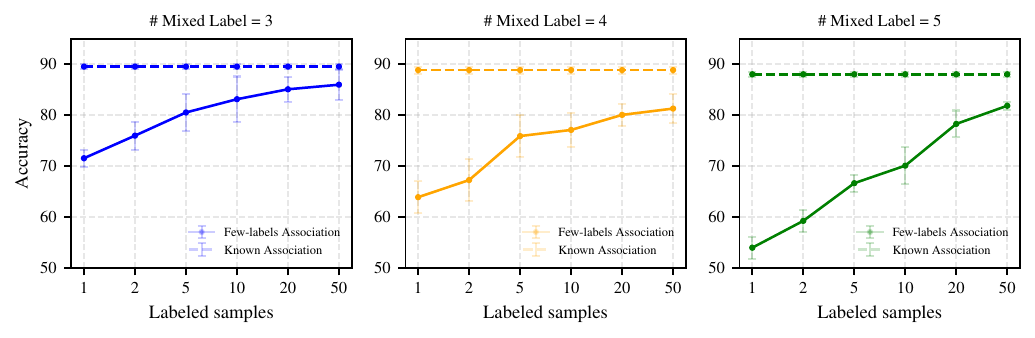}
  \caption{
  \textbf{Test accuracy of \textsc{Feroma} on MNIST under \(P(Y\mid X)\) concept shift, as a function of the number of labeled test samples} per class used for model association. Dashed lines indicate the performance under (optimal) known model assignment. Results are shown for three levels of non-IID severity.}
  \label{feroma_pyx_test_sample}
\end{figure}

\paragraph{Numerical experiments.}
For clarity and completeness, we additionally tracked the performance of \textsc{Feroma} under the (optimal) \emph{known association} condition across all experiments. This comparison isolates the influence of test-time association errors from the effects of distribution drift and shift occurring during training. Tables~\ref{tab:pyx_few_sample_MNIST}--\ref{tab:pyx_few_sample_cifar100} report the test accuracy of \textsc{Feroma} when using either the known association or the automatically detected association (with 20 labeled test samples per class) across four datasets: MNIST, FMNIST, CIFAR-10, and CIFAR-100. As expected, the results show that under higher non-IID severity, the gap between known and detected association widens, reflecting the increasing difficulty of matching clients to appropriate models in the presence of stronger \(P(Y|X)\) shifts. Nevertheless, \textsc{Feroma} maintains strong performance even when relying on few labeled samples, demonstrating robustness to moderate test-time association errors. 

In addition, these results confirm that when labels are available—as is the case during training—\textsc{Feroma} consistently maintains high performance across all levels of non-IID severity and client drift rates. This stability highlights the robustness of the framework in handling both distribution shift and drift during training, demonstrating that \textsc{Feroma}'s adaptive aggregation remains effective even under highly heterogeneous and dynamic conditions.

\begin{table}[htbp]
\centering
\tiny
\renewcommand{\arraystretch}{1.2}
\setlength{\tabcolsep}{4pt}
\begin{tabular}{lccccccccc}
\toprule
\textbf{Non-IID Level} & \multicolumn{3}{c}{\textbf{Low}} & \multicolumn{3}{c}{\textbf{Medium}} & \multicolumn{3}{c}{\textbf{High}} \\
\cmidrule(lr){2-4} \cmidrule(lr){5-7} \cmidrule(lr){8-10}
\textbf{\# Drifting} & \textbf{5 / 20} & \textbf{10 / 20} & \textbf{20 / 20} & \textbf{5 / 20} & \textbf{10 / 20} & \textbf{20 / 20} & \textbf{5 / 20} & \textbf{10 / 20} & \textbf{20 / 20} \\
\midrule
Known Association & 89.58  ±  0.50 & 89.48  ±  0.45 & 89.80  ±  0.75 & 88.90  ±  0.53 & 88.87  ±  0.68 & 89.01  ±  0.64 & 88.15  ±  0.66 & 88.06  ±  0.51 & 88.41  ±  0.68 \\
Few labeled samples & 83.44  ±  3.24 & 85.04  ±  2.44 & 84.76  ±  1.47 & 78.15  ±  3.40 & 80.09  ±  2.14 & 80.33  ±  3.14 & 78.65  ±  2.45 & 78.38  ±  2.62 & 79.65  ±  1.26 \\
\bottomrule
\end{tabular}
\vspace{0.2em}
\caption{
Test accuracy of \textsc{Feroma} on MNIST under known test-time model association and automatic detection with few labeled samples (20 per class). Results are reported across low, medium, and high non-IID levels, with varying numbers of drifting clients.
}
\label{tab:pyx_few_sample_MNIST}
\end{table}

\begin{table}[htbp]
\centering
\tiny
\renewcommand{\arraystretch}{1.2}
\setlength{\tabcolsep}{4pt}
\begin{tabular}{lccccccccc}
\toprule
\textbf{Non-IID Level} & \multicolumn{3}{c}{\textbf{Low}} & \multicolumn{3}{c}{\textbf{Medium}} & \multicolumn{3}{c}{\textbf{High}} \\
\cmidrule(lr){2-4} \cmidrule(lr){5-7} \cmidrule(lr){8-10}
\textbf{\# Drifting} & \textbf{5 / 20} & \textbf{10 / 20} & \textbf{20 / 20} & \textbf{5 / 20} & \textbf{10 / 20} & \textbf{20 / 20} & \textbf{5 / 20} & \textbf{10 / 20} & \textbf{20 / 20} \\
\midrule
Known Association & 75.37  ±  0.33 & 75.23  ±  0.42 & 75.25  ±  0.27 & 74.83  ±  0.38 & 74.64  ±  0.37 & 74.69  ±  0.36 & 74.29  ±  0.37 & 73.83  ±  0.42 & 73.93  ±  0.26 \\
Few labeled samples & 72.49  ±  1.69 & 70.62  ±  2.79 & 72.48  ±  1.64 & 67.14  ±  1.95 & 68.61  ±  1.40 & 66.74  ±  1.66 & 65.11  ±  1.15 & 63.47  ±  1.57 & 63.08  ±  2.01 \\
\bottomrule
\end{tabular}
\vspace{0.2em}
\caption{
Test accuracy of \textsc{Feroma} on FMNIST under known test-time model association and automatic detection with few labeled samples (20 per class). Results are reported across low, medium, and high non-IID levels, with varying numbers of drifting clients.
}
\label{tab:pyx_few_sample_FMNIST}
\end{table}

\begin{table}[htbp]
\centering
\tiny
\renewcommand{\arraystretch}{1.2}
\setlength{\tabcolsep}{4pt}
\begin{tabular}{lccccccccc}
\toprule
\textbf{Non-IID Level} & \multicolumn{3}{c}{\textbf{Low}} & \multicolumn{3}{c}{\textbf{Medium}} & \multicolumn{3}{c}{\textbf{High}} \\
\cmidrule(lr){2-4} \cmidrule(lr){5-7} \cmidrule(lr){8-10}
\textbf{\# Drifting} & \textbf{5 / 20} & \textbf{10 / 20} & \textbf{20 / 20} & \textbf{5 / 20} & \textbf{10 / 20} & \textbf{20 / 20} & \textbf{5 / 20} & \textbf{10 / 20} & \textbf{20 / 20} \\
\midrule
Known Association & 42.22  ±  0.54 & 42.47  ±  0.32 & 42.14  ±  0.32 & 41.18  ±  0.78 & 41.74  ±  0.25 & 41.47  ±  0.48 & 40.76  ±  0.61 & 41.01  ±  0.50 & 40.74  ±  0.64 \\
Few labeled samples & 37.19  ±  0.76 & 36.61  ±  1.61 & 37.37  ±  1.53 & 35.23  ±  0.45 & 33.43  ±  1.04 & 34.24  ±  0.81 & 31.96  ±  1.04 & 29.89  ±  1.42 & 29.87  ±  1.09 \\
\bottomrule
\end{tabular}
\vspace{0.2em}
\caption{
Test accuracy of \textsc{Feroma} on CIFAR-10 under known test-time model association and automatic detection with few labeled samples (20 per class). Results are reported across low, medium, and high non-IID levels, with varying numbers of drifting clients.
}
\label{tab:pyx_few_sample_cifar10}
\end{table}

\begin{table}[htbp]
\centering
\tiny
\renewcommand{\arraystretch}{1.2}
\setlength{\tabcolsep}{4pt}
\begin{tabular}{lccccccccc}
\toprule
\textbf{Non-IID Level} & \multicolumn{3}{c}{\textbf{Low}} & \multicolumn{3}{c}{\textbf{Medium}} & \multicolumn{3}{c}{\textbf{High}} \\
\cmidrule(lr){2-4} \cmidrule(lr){5-7} \cmidrule(lr){8-10}
\textbf{\# Drifting} & \textbf{5 / 20} & \textbf{10 / 20} & \textbf{20 / 20} & \textbf{5 / 20} & \textbf{10 / 20} & \textbf{20 / 20} & \textbf{5 / 20} & \textbf{10 / 20} & \textbf{20 / 20} \\
\midrule
Known Association & 38.88  ±  0.28 & 39.62  ±  0.50 & 39.36  ±  0.30 & 35.63  ±  0.30 & 35.71  ±  0.29 & 35.72  ±  0.08 & 31.75  ±  0.14 & 32.46  ±  0.27 & 32.15  ±  0.25 \\
Few labeled samples & 30.42  ±  0.50 & 29.36  ±  1.08 & 30.05  ±  1.06 & 22.86  ±  0.71 & 21.88  ±  0.85 & 23.56  ±  1.58 & 16.73  ±  2.15 & 13.34  ±  1.99 & 15.98  ±  1.63 \\
\bottomrule
\end{tabular}
\vspace{0.2em}
\caption{
Test accuracy of \textsc{Feroma} on CIFAR-100 under known test-time model association and automatic detection with few labeled samples (20 per class). Results are reported across low, medium, and high non-IID levels, with varying numbers of drifting clients.
}
\label{tab:pyx_few_sample_cifar100}
\end{table}

\section{Privacy Implications of Distribution Profile} \label{app.pidp}
Besides model parameters, \textsc{Feroma} also transmits the client–side profiles \(\{d_t^{(k)}\}_{k\in\mathcal{K}_t}\subset\mathbb{R}^{d}\). Because, by design, profile distances approximate distribution divergences (requirement\,R1 in \autoref{def:dpe}), an honest-but-curious server (or an external eavesdropper) could leverage \(d_t^{(k)}\) together with model updates to mount stronger data–reconstruction or membership–inference attacks (e.g.\ \citep{memb_inf_attack, zariMIA2021, liPassiveMIA2022, data_rec_gan, data_rec_gan2, grnn, DLG, FCleak_label, gradinv}). Accordingly, we treat profile transmission as an additional leakage channel and explicitly constrain it via a multi-layer protection mechanism: (i) structural compression (low-dimensional, many-to-one profiles), (ii) intentional stochasticity in extraction to reduce linkability across rounds, and (iii) a formal \((\varepsilon,\delta)\)-DP guarantee at the \emph{sample} level for each released profile.

For this reason, it is necessary to satisfy requirement~R4, which provides the formal DP guarantee for profile release. Requirement\,R4 (see \autoref{subsec.dpe.}) endows the Distribution-Profile Extractor \(\phi_{\psi}:\mathbb{R}^{v^{(k)}\!\times z}\!\to\mathbb{R}^d\) with \((\varepsilon,\delta)\)-differential privacy at the \emph{sample} level: for any neighbouring datasets \((x,y)\) and \((x',y')\) that differ in one example and every measurable \(\mathcal{S}\subseteq\mathbb{R}^d\),
\[
\Pr\!\bigl[\phi_{\psi}(x,y)\in\mathcal{S}\bigr]
\;\le\;
e^{\varepsilon}\,
\Pr\!\bigl[\phi_{\psi}(x',y')\in\mathcal{S}\bigr]
+ \delta.
\]
First, \(\phi_{\psi}\) is intentionally \emph{many-to-one} and low-dimensional: it aggregates statistics (e.g., moments in latent space) into a vector in \(\mathbb{R}^{d}\) with \(d \ll\) the input data dimension. As a consequence, infinitely many distinct datasets can induce the same profile, so the mapping is not uniquely invertible in general and does not preserve sample-level detail. This structural compression does not, by itself, constitute a formal DP guarantee, but it reduces the granularity of information exposed and makes exact reconstruction highly underdetermined without strong auxiliary assumptions. Second, we enforce \((\varepsilon,\delta)\)-DP at the \emph{sample} level for each released profile, which bounds the incremental leakage attributable to profile transmission (beyond what may already be exposed by model updates).

\paragraph{Numerical experiments.}
We evaluated the impact of enforcing DP on \textsc{Feroma} under varying privacy budgets \(\epsilon\), focusing on the privacy--utility trade-off in realistic FL scenarios. As shown in Tables~\ref{tab:dp_px_py} and~\ref{tab:dp_pyx_pxy}, we tested \textsc{Feroma} across all four types of statistical heterogeneity---\(P(X)\), \(P(Y)\), \(P(Y|X)\), and \(P(X|Y)\)---at three levels of non-IID severity (low, medium, high). Overall results are summarized in Figure~\ref{feroma_dp}. We observe that \textsc{Feroma} maintains robust accuracy for moderate budgets commonly targeted in practice (\(5\leq\epsilon\leq10\)), achieving performance comparable to the non-private (No-DP) baseline. In contrast, for very strict budgets (\(\epsilon \in \{1,2\}\)), the injected noise is intentionally large and we observe accuracy drops, especially under high non-IID severity---a behavior that is expected under strong privacy constraints. These results indicate that DP-protected distribution profiles can provide formal privacy guarantees with minimal utility loss at moderate \(\epsilon\), supporting deployments subject to regulatory constraints such as GDPR \citep{official_gdpr} or HIPAA \citep{hipaa1996}.

\begin{figure}
  \centering
  \includegraphics[width=0.6\textwidth]{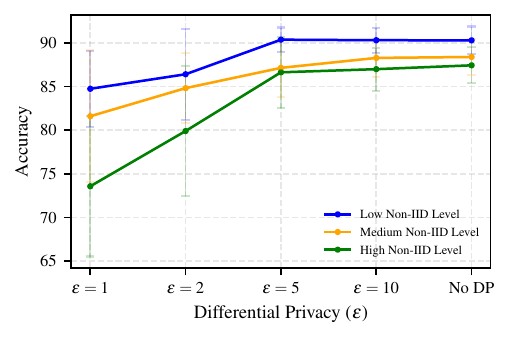}
  \caption{
  \textbf{Average test accuracy of \textsc{Feroma} across different privacy budgets (\(\epsilon\)) and non-IID types.} Results are aggregated over \(P(X)\), \(P(Y)\), \(P(Y|X)\), and \(P(X|Y)\) scenarios.
  }
  \label{feroma_dp}
\end{figure}

\begin{table}[htbp]
\centering
\scriptsize
\renewcommand{\arraystretch}{1.2}
\setlength{\tabcolsep}{4pt}
\begin{tabular}{lcccccc}
\toprule
\textbf{Non-IID Type} & \multicolumn{3}{c}{\textbf{$P(X)$}} & \multicolumn{3}{c}{\textbf{$P(Y)$}} \\
\cmidrule(lr){2-4} \cmidrule(lr){5-7}
\textbf{Non-IID Level} & \textbf{Low} & \textbf{Medium} & \textbf{High} & \textbf{Low} & \textbf{Medium} & \textbf{High} \\
\midrule
$\epsilon=1$  & 77.39  ±  6.23 & 77.78  ±  11.08 & 57.83  ±  13.07 & 98.48  ±  0.73 & 96.94  ±  3.06 & 97.72  ±  1.99 \\
$\epsilon=2$  & 86.64  ±  2.10 & 83.32  ±  5.54  & 69.88  ±  11.25 & 98.25  ±  1.14 & 96.98  ±  2.98 & 97.21  ±  2.99 \\
$\epsilon=5$  & 89.00  ±  0.38 & 85.15  ±  4.56  & 84.71  ±  1.71  & 98.69  ±  0.39 & 97.43  ±  2.85 & 97.21  ±  2.93 \\
$\epsilon=10$ & 88.72  ±  0.50 & 86.46  ±  2.45  & 85.69  ±  0.43  & 98.67  ±  0.44 & 97.86  ±  2.02 & 97.29  ±  3.05 \\
No DP         & 88.67  ±  0.30 & 86.50  ±  2.48  & 85.81  ±  0.36  & 98.66  ±  0.44 & 97.94  ±  2.01 & 97.24  ±  2.96 \\
\bottomrule
\end{tabular}
\vspace{0.2em}
\caption{
    Test accuracy of \textsc{Feroma} under different privacy budgets \(\epsilon\) for non-IID types based on feature distribution skew \(P(X)\) and label distribution skew \(P(Y)\) on the MNIST dataset. Results are reported for low, medium, and high non-IID levels.
}
\label{tab:dp_px_py}
\end{table}

\begin{table}[htbp]
\centering
\scriptsize
\renewcommand{\arraystretch}{1.2}
\setlength{\tabcolsep}{4pt}
\begin{tabular}{lcccccc}
\toprule
\textbf{Non-IID Type} & \multicolumn{3}{c}{\textbf{$P(Y|X)$}} & \multicolumn{3}{c}{\textbf{$P(X|Y)$}} \\
\cmidrule(lr){2-4} \cmidrule(lr){5-7}
\textbf{Non-IID Level} & \textbf{Low} & \textbf{Medium} & \textbf{High} & \textbf{Low} & \textbf{Medium} & \textbf{High} \\
\midrule
$\epsilon=1$  & 79.15  ±  3.25 & 74.23  ±  3.20 & 71.63  ±  4.94 & 83.95  ±  5.14 & 77.45  ±  9.22 & 67.14  ±  7.73 \\
$\epsilon=2$  & 82.92  ±  3.44 & 77.04  ±  2.77 & 75.10  ±  4.50 & 77.82  ±  9.52 & 81.93  ±  4.03 & 77.44  ±  8.28 \\
$\epsilon=5$  & 84.50  ±  2.22 & 78.23  ±  3.43 & 78.32  ±  2.08 & 89.28  ±  1.59 & 87.75  ±  1.99 & 86.28  ±  7.10 \\
$\epsilon=10$ & 84.56  ±  2.63 & 79.16  ±  2.74 & 78.02  ±  2.86 & 89.27  ±  0.87 & 89.63  ±  1.13 & 86.96  ±  2.58 \\
No DP         & 85.04  ±  2.44 & 80.09  ±  2.14 & 78.38  ±  2.62 & 88.81  ±  1.94 & 89.02  ±  1.59 & 88.29  ±  1.12 \\
\bottomrule
\end{tabular}
\vspace{0.2em}
\caption{
    Test accuracy of \textsc{Feroma} under different privacy budgets \(\epsilon\) for non-IID types based on concept shifts \(P(Y|X)\) and \(P(X|Y)\) on the MNIST dataset. Results are reported for low, medium, and high non-IID levels.
}
\label{tab:dp_pyx_pxy}
\vspace{-1em}
\end{table}

\section{Quantification of Costs in \textsc{Feroma}}
\label{app.quan}

In this section, we quantify the communication and computation overhead introduced by \textsc{Feroma} and compare it explicitly with standard FedAvg. While \textsc{Feroma} introduces additional operations for extracting and transmitting distribution profiles, these costs are intentionally designed to be lightweight and independent of model size.

\paragraph{Communication cost.}
On the communication side, \textsc{Feroma} transmits exactly the same model parameters as FedAvg, plus a single low-dimensional distribution profile per client per round. The size of this profile depends only on the descriptor dimension and the number of labels, and is independent of the underlying model size. As a result, the relative communication overhead decreases rapidly as models scale.
Table~\ref{tab:comm_cost} reports the per-client upload/download size and corresponding transmission time under a bandwidth of 20~MB/s for representative models of increasing scale. Across all settings, the additional communication overhead introduced by \textsc{Feroma} remains below 0.4\%, and becomes negligible (below 0.01\%) for large models.

\begin{table}[htbp]
\centering
\tiny
\renewcommand{\arraystretch}{1.2}
\setlength{\tabcolsep}{4pt}
\begin{tabular}{lcccc}
\toprule
\textbf{Dataset / Model} & \textbf{Method} & \textbf{Upload / Download} & \textbf{Time (20MB/s)} & \textbf{Overhead} \\
\midrule
MNIST -- LeNet5 (62K)
& FedAvg  & 248.0 KB  & 0.496 s  & -- \\
& \textsc{Feroma} & 248.88 KB & 0.498 s  & +0.35\% \\
\midrule
CIFAR-100 -- ResNet-9 (1.65M)
& FedAvg  & 6.60 MB   & 13.200 s & -- \\
& \textsc{Feroma} & 6.61 MB   & 13.216 s & +0.15\% \\
\midrule
CIFAR-100 -- ViT-B/16 ($\sim$86M)
& FedAvg  & 344.0 MB  & 688.000 s & -- \\
& \textsc{Feroma} & 344.01 MB & 688.016 s & $<0.01\%$ \\
\bottomrule
\end{tabular}
\vspace{0.2em}
\caption{
Per-round communication cost per client for FedAvg and \textsc{Feroma} under different model scales.
}
\label{tab:comm_cost}
\end{table}

\paragraph{Computation cost.}
On the computation side, the only non-trivial additional operation introduced by \textsc{Feroma} is the computation of a rank-1 PCA over $s^{\text{PCA}} = 200$ synthetic latent vectors of dimension $z$ during distribution profile extraction (Step~S2). This operation costs at most approximately $8.2 \times 10^{7}$ FLOPs for typical latent dimensions ($z \in [128, 2048]$).
For comparison, even a single forward pass of the smallest MNIST model already exceeds $6.5 \times 10^{8}$ FLOPs, while training the largest CIFAR-100 model requires over $6 \times 10^{11}$ FLOPs per epoch. The remaining operations in the DPE, including moment computation and noise addition, are linear in the descriptor dimension and therefore negligible.

Overall, \textsc{Feroma} introduces well below 1\% additional computation relative to standard local training, while providing substantial robustness benefits under distribution shift and drift.

\section{Convergence Analysis on Stationary Windows}
\label{app.casw}

This section addresses the convergence behavior of \textsc{Feroma}. Under unbounded and potentially adversarial concept drift, classical global convergence guarantees are in general unattainable, since the underlying objectives may change arbitrarily over time. We therefore analyze \textsc{Feroma} on a \emph{stationary time window} between two distribution-shift events, where local data distributions are approximately constant. This ``piecewise-stationary'' viewpoint is standard when formal guarantees under perpetual drift are not meaningful. For this analysis, we adopt the common idealized setting of a fixed set of clients with full participation in every round, although \textsc{Feroma} supports partial participation in practice.

\paragraph{Stationary window.}
Fix a contiguous interval of communication rounds \(\mathcal{T}=\{t_0,\dots,t_0+T-1\}\) during which each client distribution is stationary:
\[
P_t^{(k)} = P^{(k)} \qquad \forall t\in\mathcal{T},~ k\in\{1,\dots,K\}.
\]
For each client \(k\), define the local objective
\[
f_k(\theta) := \mathbb{E}_{(x,y)\sim P^{(k)}}\bigl[\ell(\theta;x,y)\bigr],
\]
and the weighted global objective \(F(\theta) := \sum_{k=1}^K p_k f_k(\theta)\), where \(p_k>0\) and \(\sum_k p_k = 1\).
Within \(\mathcal{T}\), the local training step in \textsc{Feroma} corresponds to standard SGD on \(f_k\), with a fixed number \(H\ge 1\) of local steps per communication round.

We adopt standard conditions from local-SGD / decentralized SGD analyses
(e.g., \citep{koloskovaUnifiedTheory2020a,stichLocalSGDConverges2019a,gaoConvergenceCommunication2021a,puAsymptoticNetwork2020a}).

\textbf{Assumption A1 (Smoothness and bounded variance).}
For each \(k\), \(f_k\) is \(L\)-smooth and lower bounded by \(f_k^\star\).
Stochastic gradients \(g_{t,h}^{(k)}\) at local step \(h\in\{0,\dots,H-1\}\) are unbiased and have bounded variance:
\[
\mathbb{E}\!\left[g_{t,h}^{(k)} \mid \theta_{t,h}^{(k)}\right] = \nabla f_k(\theta_{t,h}^{(k)}),
\qquad
\mathbb{E}\!\left[\|g_{t,h}^{(k)} - \nabla f_k(\theta_{t,h}^{(k)})\|^2\right] \le \sigma^2,
\]
for all \(t\in\mathcal{T}\) and \(k\in\{1,\dots,K\}\).

\textbf{Assumption A2 (Profile-induced mixing matrices).}
At each round \(t\in\mathcal{T}\), \textsc{Feroma} induces a mixing matrix \(W_t\in\mathbb{R}^{K\times K}\) with entries
\((W_t)_{kj} := \bar w_t^{(k,j)}\), where \(\bar w_t^{(k,j)}\) denotes the (possibly thresholded and renormalized)
association weight used in Eq.~(5). We assume:
\begin{enumerate}[leftmargin=1.2em, itemsep=0.2em, topsep=0.2em]
\item \emph{Symmetry and double stochasticity:}
\(W_t = W_t^\top,\quad W_t\mathbf{1}=\mathbf{1},\quad \mathbf{1}^\top W_t = \mathbf{1}^\top.\)
\item \emph{Expected consensus rate (time-varying topology):}
there exist constants \(\beta\in(0,1]\) and an integer block length \(B\ge 1\) such that, for all \(X\in\mathbb{R}^{d\times K}\) and all integers \(\ell\),
\[
\mathbb{E}\Bigl[\bigl\|XW_{\ell,B}-\bar X\bigr\|_F^2\Bigr]
~\le~ (1-\beta)\,\bigl\|X-\bar X\bigr\|_F^2,
\]
where \(W_{\ell,B}:=W_{(\ell+1)B-1}\cdots W_{\ell B}\),
\(\bar X := X\frac{1}{K}\mathbf{1}\mathbf{1}^\top\), and the expectation is over the randomness of
\(\{W_{\ell B},\dots,W_{(\ell+1)B-1}\}\).
\end{enumerate}
Assumption A2 is standard in decentralized SGD with changing topology and implies that repeated mixing contracts disagreement towards the uniform average with factor \((1-\beta)\) \citep{koloskovaUnifiedTheory2020a,puAsymptoticNetwork2020a}.
In \textsc{Feroma}, such mixing arises by building similarity weights from distribution profiles and applying a normalization step that yields (approximately) doubly-stochastic matrices. Moreover, given requirements (R1) and (R3), profile distances track underlying distribution distances up to small distortion and the injected profile noise has bounded covariance; under a stationary window, this makes it natural to assume that the resulting similarity-based matching can be chosen so that the induced \(\{W_t\}\) satisfy the expected consensus condition in Assumption~A2 (cf.\ \citet{koloskovaUnifiedTheory2020a}).

\paragraph{From \textsc{Feroma} to decentralized local-SGD.}
Let \(\Theta_t := [\theta_t^{(1)},\dots,\theta_t^{(K)}]\in\mathbb{R}^{d\times K}\) denote the matrix of client parameters at the beginning of round \(t\).
Over a stationary window, one round of \textsc{Feroma} can be written in the generic decentralized local-SGD form
\begin{equation}
\Theta_{t+1} = \Theta_t W_t - \eta_t G_t,
\label{eq:feroma_dsgd}
\end{equation}
where \(\Theta_t W_t\) is the profile-based aggregation (Eq.~(5)) and \(G_t\) stacks the stochastic gradients accumulated during the \(H\) local SGD steps starting from the mixed parameters. Concretely, the \(k\)-th column of \(\Theta_t W_t\) equals \(\sum_j \bar w_t^{(k,j)}\theta_t^{(j)}\), matching Eq.~(5). This recursion coincides with decentralized/local-SGD with time-varying mixing matrices studied in \citet{koloskovaUnifiedTheory2020a}, specialized to a star-shaped implementation where the server computes \(W_t\) and broadcasts the mixed models, but the update is applied client-side.

\paragraph{Implication (convergence within \(\mathcal{T}\)).}
Under Assumptions A1--A2 and on a stationary window \(\mathcal{T}\), \textsc{Feroma} is a special case of decentralized local-SGD with changing topology. Therefore, existing results for Eq.~\eqref{eq:feroma_dsgd} apply directly, yielding convergence guarantees to a stationary point of \(F\) (for non-convex objectives) with an optimization error term and an additional consensus term that depends on the mixing quality (through \(\beta\)), the number of local steps \(H\), and gradient noise \(\sigma^2\) (see, e.g., \citep{koloskovaUnifiedTheory2020a}). Intuitively, better profile-induced mixing (larger \(\beta\)) reduces client disagreement and improves the rate, whereas very strict privacy/noise settings that perturb profile similarities can degrade \(\beta\) and thus slow convergence. Formally characterizing how DPE hyperparameters affect the spectral gap of \(W_t\) is an interesting direction for future work.

\section{Additional Experiment Results} \label{app.add.exp}

\subsection{Generating drifting datasets with ANDA, CheXpert and Office-Home} \label{app.anda}

For datasets MNIST, FMNIST, CIFAR-10, and CIFAR-100,
we generate drifting datasets across clients with four non-IID types using \textit{ANDA}.
\textit{ANDA} (\textbf{A} \textbf{N}on-IID \textbf{D}ata generator supporting \textbf{A}ny kind) is a toolkit designed to create non-IID datasets for reproducible FL experiments.
It supports datasets MNIST, EMNIST, FMNIST, CIFAR-10, and CIFAR-100, and facilitates five types of data distribution shifts:
\begin{itemize}[nosep,leftmargin=*]
  \item \textit{Feature distribution skew (covariate shift)}: Marginal distributions $P(X)$ vary across clients.
  \item \textit{Label distribution skew (prior probability shift)}: Marginal distributions $P(Y)$ vary across clients.
  \item \textit{Concept shift (same $X$, different $Y$)}: Conditional distributions $P(Y|X)$ vary across clients.
  \item \textit{Concept shift (same $Y$, different $X$)}: Conditional distributions $P(X|Y)$ vary across clients.
  \item \textit{Quantity shift}: The amount of data vary across clients.
\end{itemize}
\textit{ANDA} enables the generation of only shifting datasets or shifting with drifting datasets, allowing clients to possess datasets with varying distributions across training rounds or test round.

\textit{ANDA} applies commonly used approaches \citep{CFL,apfl,fedrc,feddrift,ifca,FedEM,FeSEM,pfedme} to generate drifting heterogeneous datasets:
\begin{itemize}[nosep,leftmargin=*]
  \item \textit{$P(X)$} (\autoref{fig_appendix.drift_x}): Each image undergoes one of three color transformations (blue, green, or red) and one of four rotations ($0^\circ$, $90^\circ$, $180^\circ$, or $270^\circ$), with distinct distributions applied to each subset. 
  \item \textit{$P(Y)$} (\autoref{fig_appendix.drift_y}): Each client receives data only from certain classes. For example, in the MNIST dataset, the dataset in training round 2 only has digits 2, 4, and 7, while the dataset in training round 3 has images of digits 0, 1, and 8.
  \item \textit{$P(Y|X)$} (\autoref{fig_appendix.drift_yx}): Given identical feature distributions (e.g., image pixels), labels differ between clients. For instance, in the MNIST dataset, the dataset in training round 2 labels the digit '8' as '8', '1' as '5', and '5' as '1', whereas the dataset in training round 3 labels '8' as '1', '1' as '5', and '5' as '8'.
  \item \textit{$P(X|Y)$} (\autoref{fig_appendix.drift_xy}): For the same label, different features are applied. For example, in the MNIST dataset, the dataset in training round 2 applies a blue hue to images labeled '0', while the dataset in training round 3 applies a red hue to the same label.
\end{itemize}

\begin{figure}
  \centering
  \includegraphics[width=0.9\textwidth]{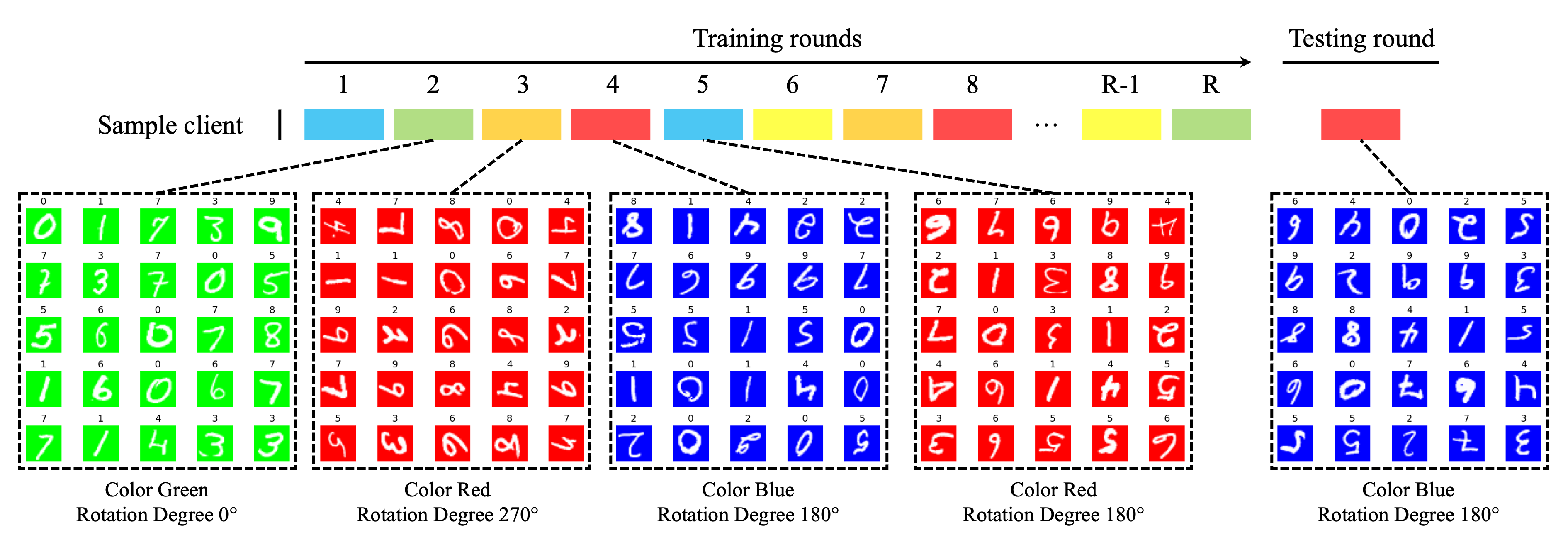}
  \caption{
  \textbf{Distribution drifting in $P(X)$ with MNIST.}
  }
  \label{fig_appendix.drift_x}
\end{figure}

\begin{figure}
  \centering
  \includegraphics[width=0.9\textwidth]{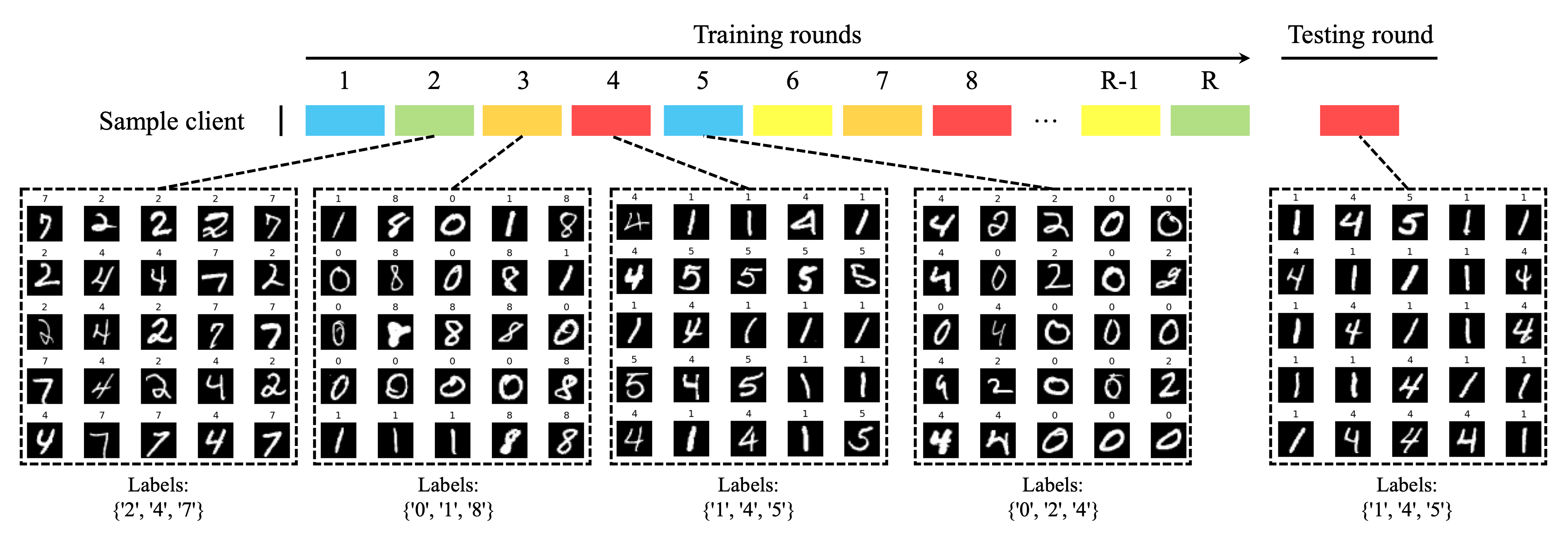}
  \caption{
  \textbf{Distribution drifting in $P(Y)$ with MNIST.}
  }
  \label{fig_appendix.drift_y}
\end{figure}

\begin{figure}[htbp]
  \centering
  \includegraphics[width=0.9\textwidth]{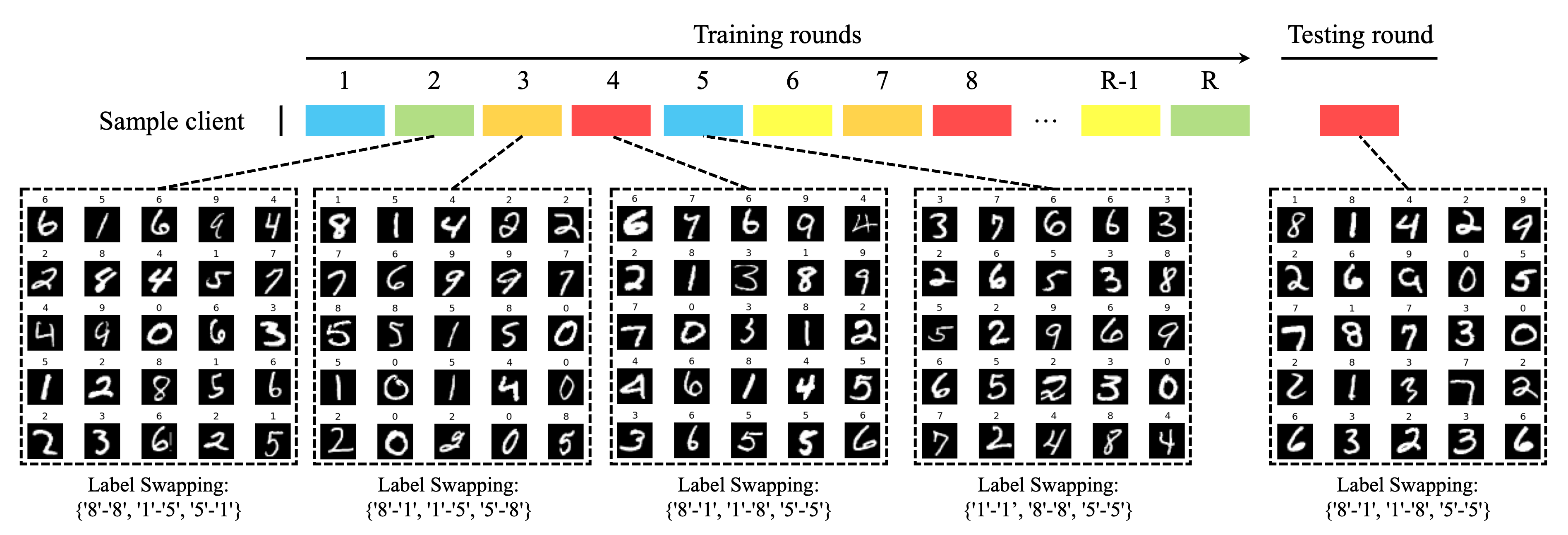}
  \caption{
  \textbf{Distribution drifting in $P(Y|X)$ with MNIST.}
  }
  \label{fig_appendix.drift_yx}
\end{figure}

\begin{figure}[htbp]
  \centering
  \includegraphics[width=0.9\textwidth]{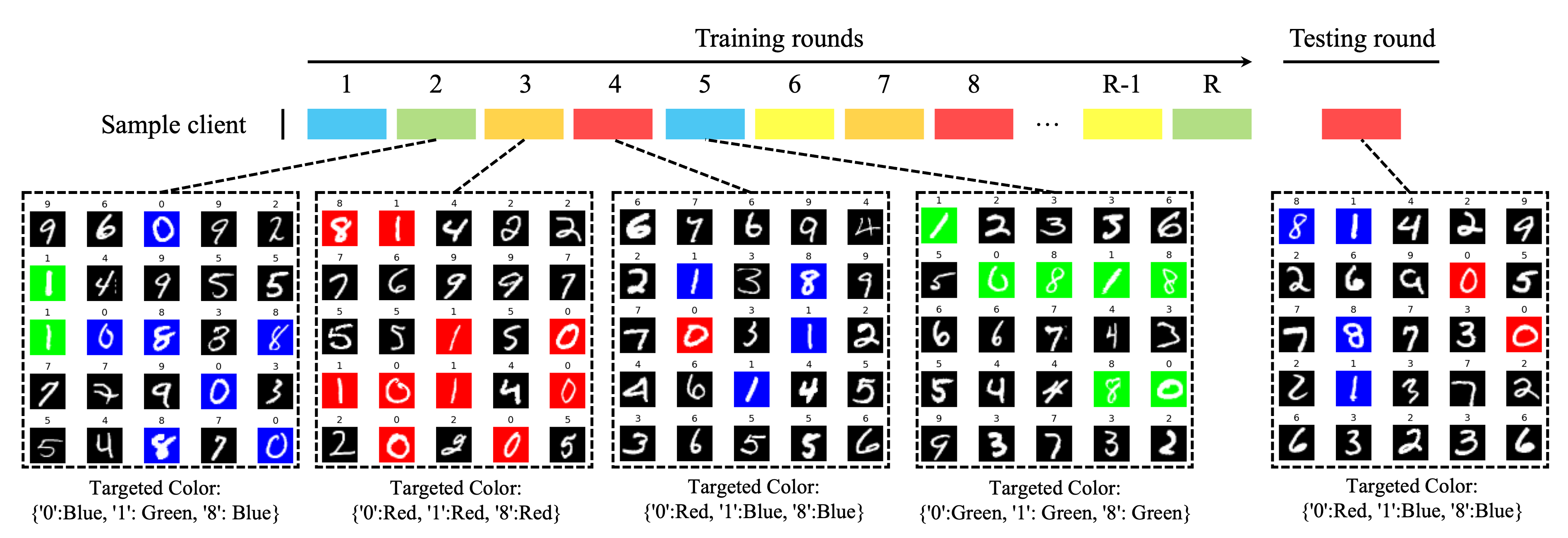}
  \caption{
  \textbf{Distribution drifting in $P(X|Y)$ with MNIST.}
  }
  \label{fig_appendix.drift_xy}
\end{figure}

For the real-world dataset CheXpert, we do not apply any image augmentation or label modification in order to preserve the correctness and integrity of the data.  
To simulate different levels of data heterogeneity, we partition the dataset into multiple distributions based on three metadata attributes available in the dataset: \textit{ViewPosition}, \textit{Age}, and \textit{Sex}.  
Specifically, the dataset is split according to the following criteria for each non-IID level used in our experiments:

\begin{itemize}[nosep,leftmargin=*]
  \item \textbf{Low Non-IID Level}: 2 distributions based on \textit{ViewPosition} (Frontal / Lateral).
  \item \textbf{Medium Non-IID Level}: 4 distributions based on \textit{ViewPosition} (Frontal / Lateral) and \textit{Age} ($\geq$50 / $<$50).
  \item \textbf{High Non-IID Level}: 8 distributions based on \textit{ViewPosition} (Frontal / Lateral), \textit{Age} ($\geq$50 / $<$50), and \textit{Sex} (Male / Female).
\end{itemize}

For the real-world dataset Office-Home, we do not apply any image augmentation or label modification; instead, we directly use the dataset in its original form.  
For our experiments, we restrict the dataset to images with labels from 0 to 9.  
To model different levels of data heterogeneity, we leverage the four inherent domains of the dataset: \textit{Art}, \textit{Clipart}, \textit{Product}, and \textit{Real-World}.  
The domain shift across these categories naturally induces heterogeneity in the data distribution.

\subsection{Scaling the drifting frequency, non-IID types, and non-IID levels} \label{app.main_exp}

We evaluate the effects of data heterogeneity using four non-IID dataset types across 20 clients, generated with \textit{ANDA} at three distinct levels of heterogeneity and three levels of drifting frequency.

\paragraph{Drifting frequency.} Client's local data change every round, and we scale the local data distribution drifting frequency.
At level one, each client's local training data distribution drifts every four rounds.
At level two, each client's local training data distribution drifts every two rounds.
At level three, each client's local training data distribution drifts every round. The test time distributions are always under drift.

\paragraph{Non-IID Types and Levels.}
We evaluate \textsc{Feroma} under four common types of non-IID data distributions and define three levels of heterogeneity (low, medium, and high).  
The full configurations are summarized in \autoref{tab:hetero_setting} and detailed as follows:

\begin{itemize}[noitemsep, leftmargin=*, topsep=0pt, parsep=0pt, partopsep=0pt]
    \item \textbf{Feature distribution skew (\(P(X)\)):} Each client is assigned a unique combination of data augmentations (e.g., rotation and color transformation), applied consistently to all local samples. The non-IID level controls the number of available augmentation choices. For example, at the Level Medium, a client may apply one rotation from \{$0^\circ$, $180^\circ$\} and one color from \{Red, Green, Blue\} to all images. At Level Low, the "Original" color indicates no color transformation.

    \item \textbf{Label distribution skew (\(P(Y)\)):} Each client retains samples from a subset of classes. For MNIST, FMNIST, and CIFAR-10, each client holds data from 2 classes; for CIFAR-100, from 20 classes. We define a bank of class subsets (e.g., 4 banks of \{[0,4], [1,9], [3,5], [6,9]\} at Level Low), and clients randomly sample from these banks. Increasing the number of banks increases the heterogeneity level.

    \item \textbf{Conditional label skew (\(P(Y|X)\)):} Clients receive relabeled versions of a subset of classes via label permutation. For example, at the medium level, a class pool \{2, 3, 5, 8\} may be randomly permuted to \{5, 8, 3, 2\}, mapping images originally labeled as ‘2’ to label ‘5’. The number of classes in the permutation pool increases with the non-IID level.

    \item \textbf{Conditional feature skew (\(P(X|Y)\)):} Clients apply different augmentations to the same class. For instance, Client A may apply a $0^\circ$ rotation to class ‘5’, while Client B applies a $180^\circ$ rotation. Augmentations are limited to rotations (\{$0^\circ$, $90^\circ$, $180^\circ$, $270^\circ$\}) and colors (\{Red, Green, Blue\}). For MNIST, FMNIST, and CIFAR-10, this applies to 8 classes; for CIFAR-100, to 80 classes. The level of heterogeneity is scaled in the same way as for \(P(Y)\), using banks of class-specific transformations.
\end{itemize}

\begin{table}[htbp]
\centering
\tiny
\renewcommand{\arraystretch}{1.2}
\setlength{\tabcolsep}{4pt}
\begin{tabular}{lllll}
\toprule
\textbf{non-IID type} & \textbf{$P(X)$} & \textbf{$P(Y)$} & \textbf{$P(Y|X)$} & \textbf{$P(X|Y)$} \\
\midrule
Low & Rotation \{$0^\circ$, $90^\circ$, $180^\circ$, $270^\circ$\}, Color \{Original\} &\#Bank = 4 &\#Swapped Class = 3 (\textit{40}) & \#Bank = 4 \\
Medium & Rotation \{$0^\circ$, $180^\circ$\}, Color \{Red, Blue, Green\} &\#Bank = 6 &\#Swapped Class = 4 (\textit{60}) & \#Bank = 6 \\ 
High & Rotation \{$0^\circ$, $90^\circ$, $180^\circ$, $270^\circ$\}, Color \{Red, Blue, Green\} &\#Bank = 8 &\#Swapped Class = 5 (\textit{80}) & \#Bank = 8 \\ 
\bottomrule
\end{tabular}
\vspace{0.2em}
\caption{
\textbf{Summary of non-IID data heterogeneity configurations across levels.} Each type of heterogeneity corresponds to a specific distribution shift. For $P(Y|X)$, the numbers of swapped class for CIFAR-100 are \textit{40}, \textit{60}, and \textit{80}, accordingly.
}
\label{tab:hetero_setting}
\end{table}

We dynamically scale the size of training dataset based on the factor of drifting frequency and number of clients to ensure each subset preserves enough samples to train the local model.
Tables~\ref{tab:main_appe_mnist_px} to Table~\ref{tab:main_appe_cifar100_pxy} present detailed results of \textsc{Feroma} and all baseline methods across various drifting frequency, non-IID types and levels.
For experiments on $P(Y|X)$ concept shift, we keep the test time distribution the same as the last training round (see \autoref{app:pyx_association} for more details).

\begin{table}[htbp]
\centering
\tiny
\renewcommand{\arraystretch}{1.2}
\setlength{\tabcolsep}{4pt}
\begin{tabular}{lccccccccc}
\toprule
\textbf{Non-IID Level} & \multicolumn{3}{c}{\textbf{Low}} & \multicolumn{3}{c}{\textbf{Medium}} & \multicolumn{3}{c}{\textbf{High}} \\
\cmidrule(lr){2-4} \cmidrule(lr){5-7} \cmidrule(lr){8-10}
\textbf{\# Drifting} & \textbf{5 / 20} & \textbf{10 / 20} & \textbf{20 / 20} & \textbf{5 / 20} & \textbf{10 / 20} & \textbf{20 / 20} & \textbf{5 / 20} & \textbf{10 / 20} & \textbf{20 / 20} \\
\midrule
FedAvg     & 66.55 ± 1.23 & 67.92 ± 1.51 & 72.12 ± 3.45 & 73.19 ± 4.03 & 73.66 ± 2.29 & 73.25 ± 1.84 & 54.77 ± 1.81 & 54.32 ± 2.68 & 58.54 ± 5.87 \\
FedRC      & 25.36 ± 1.71 & 25.79 ± 1.39 & 26.42 ± 1.33 & 11.35 ± 0.00 & 11.35 ± 0.00 & 11.35 ± 0.00 & 11.35 ± 0.00 & 11.35 ± 0.00 & 11.35 ± 0.00 \\
FedEM      & 25.36 ± 1.71 & 25.79 ± 1.38 & 26.42 ± 1.33 & 11.35 ± 0.00 & 11.35 ± 0.00 & 11.35 ± 0.00 & 11.35 ± 0.00 & 11.35 ± 0.00 & 11.35 ± 0.00 \\
FeSEM      & 60.88 ± 4.47 & 67.26 ± 3.33 & 70.39 ± 4.80 & 67.78 ± 6.39 & 72.97 ± 4.69 & 71.52 ± 4.04 & 47.95 ± 3.42 & 49.26 ± 0.99 & 49.89 ± 2.91 \\
CFL        & 74.28 ± 1.51 & 75.65 ± 1.31 & 78.69 ± 2.55 & 82.95 ± 1.32 & \textcolor{blue}{80.94 ± 2.30} & 80.23 ± 3.33 & \textcolor{blue}{68.03 ± 2.18} & \textcolor{blue}{68.78 ± 1.94} & \textcolor{blue}{70.65 ± 4.20} \\
IFCA       & 38.68 ± 8.39 & 37.48 ± 11.31 & 42.53 ± 6.00 & 40.41 ± 15.49 & 39.49 ± 4.71 & 38.55 ± 5.59 & 30.00 ± 11.54 & 31.27 ± 15.08 & 22.01 ± 14.43 \\
pFedMe     & 35.20 ± 2.54 & 43.45 ± 8.54 & 47.72 ± 4.03 & 43.29 ± 4.98 & 46.52 ± 5.68 & 50.22 ± 3.47 & 30.31 ± 5.43 & 34.66 ± 9.46 & 34.76 ± 3.16 \\
APFL       & 55.29 ± 1.22 & 61.87 ± 5.42 & 65.83 ± 3.98 & 70.88 ± 5.59 & 71.42 ± 3.50 & 72.26 ± 2.80 & 55.03 ± 3.60 & 59.80 ± 5.79 & 61.35 ± 5.22 \\
FedDrift   & 41.31 ± 2.08 & 46.08 ± 8.46 & 46.41 ± 4.26 & 50.94 ± 10.99 & 54.49 ± 8.04 & 59.06 ± 4.54 & 39.63 ± 5.76 & 39.47 ± 8.12 & 37.40 ± 5.80 \\
ATP        & \textcolor{blue}{78.74 ± 2.69} & \textcolor{blue}{82.43 ± 0.72} & \textcolor{blue}{84.03 ± 1.93} & \textcolor{blue}{82.56 ± 1.50} & 76.59 ± 2.64 & \textcolor{blue}{80.49 ± 1.75} & 60.55 ± 14.14 & 39.60 ± 19.31 & 51.14 ± 14.01 \\
\midrule
\textsc{Feroma}     & \textbf{88.90 ± 0.33} & \textbf{88.67 ± 0.30} & \textbf{89.52 ± 0.42} & \textbf{87.14 ± 2.29} & \textbf{86.50 ± 2.48} & \textbf{86.08 ± 3.36} & \textbf{86.40 ± 0.19} & \textbf{85.81 ± 0.36} & \textbf{85.74 ± 1.17} \\
\bottomrule
\end{tabular}
\vspace{0.2em}
\caption{
Performance comparison across three different non-IID Levels of $P(X)$ and three distribution drifting levels on the MNIST dataset.
}
\label{tab:main_appe_mnist_px}
\end{table}

\begin{table}[htbp]
\centering
\tiny
\renewcommand{\arraystretch}{1.2}
\setlength{\tabcolsep}{4pt}
\begin{tabular}{lccccccccc}
\toprule
\textbf{Non-IID Level} & \multicolumn{3}{c}{\textbf{Low}} & \multicolumn{3}{c}{\textbf{Medium}} & \multicolumn{3}{c}{\textbf{High}} \\
\cmidrule(lr){2-4} \cmidrule(lr){5-7} \cmidrule(lr){8-10}
\textbf{\# Drifting} & \textbf{5 / 20} & \textbf{10 / 20} & \textbf{20 / 20} & \textbf{5 / 20} & \textbf{10 / 20} & \textbf{20 / 20} & \textbf{5 / 20} & \textbf{10 / 20} & \textbf{20 / 20} \\
\midrule
FedAvg     & 75.12 ± 12.47 & 83.77 ± 8.76 & 94.58 ± 1.61 & 74.46 ± 5.80 & 82.83 ± 9.38 & \textcolor{blue}{90.77 ± 1.41} & 70.38 ± 11.91 & 74.70 ± 11.61 & 89.44 ± 5.75 \\
FedRC      & 68.72 ± 12.82 & 76.24 ± 7.75 & 81.37 ± 8.20 & 56.44 ± 6.80 & 68.54 ± 12.47 & 72.72 ± 6.83 & 47.94 ± 13.45 & 56.24 ± 14.30 & 79.22 ± 3.89 \\
FedEM      & 66.81 ± 14.63 & 78.29 ± 4.30 & 83.40 ± 2.76 & 48.68 ± 8.21 & 62.98 ± 6.57 & 73.85 ± 8.76 & 44.29 ± 8.81 & 59.28 ± 12.07 & 78.05 ± 6.27 \\
FeSEM      & 80.00 ± 7.15 & 82.17 ± 4.80 & 86.79 ± 3.51 & 64.05 ± 5.32 & 75.46 ± 3.72 & 75.44 ± 13.45 & 54.07 ± 10.98 & 63.39 ± 11.30 & 81.49 ± 3.41 \\
CFL        & \textcolor{blue}{83.78 ± 7.14} & \textcolor{blue}{83.78 ± 7.20} & 93.92 ± 2.13 & \textcolor{blue}{77.18 ± 8.70} & 86.65 ± 5.89 & 87.97 ± 5.62 & \textcolor{blue}{74.45 ± 8.32} & 83.15 ± 3.74 & \textcolor{blue}{90.35 ± 2.31} \\
IFCA       & 42.53 ± 12.68 & 47.09 ± 12.40 & 47.52 ± 22.21 & 39.16 ± 15.13 & 34.78 ± 9.81 & 23.50 ± 5.16 & 33.89 ± 9.71 & 22.06 ± 2.74 & 27.45 ± 5.85 \\
pFedMe     & 45.59 ± 11.11 & 47.25 ± 8.41 & 42.03 ± 17.83 & 38.94 ± 13.87 & 37.17 ± 11.39 & 25.50 ± 5.17 & 30.80 ± 5.23 & 28.72 ± 7.36 & 29.50 ± 8.40 \\
APFL       & 61.16 ± 11.47 & 63.98 ± 7.13 & 69.99 ± 10.55 & 51.64 ± 9.55 & 56.67 ± 6.98 & 54.33 ± 6.76 & 49.12 ± 6.00 & 47.60 ± 8.29 & 58.95 ± 9.33 \\
FedDrift   & 54.08 ± 12.62 & 58.21 ± 8.00 & 48.36 ± 19.20 & 43.86 ± 12.37 & 47.16 ± 11.85 & 33.75 ± 5.75 & 36.44 ± 6.01 & 36.60 ± 9.05 & 42.36 ± 6.93 \\
ATP        & 42.75 ± 25.29 & 76.00 ± 20.47 & \textcolor{blue}{95.62 ± 2.20} & 38.30 ± 26.19 & \textcolor{blue}{87.20 ± 8.07} & 84.64 ± 13.35 & 36.22 ± 21.01 & \textcolor{blue}{83.91 ± 6.24} & 90.27 ± 6.21 \\
\midrule
\textsc{Feroma}     & \textbf{96.50 ± 3.23} & \textbf{98.66 ± 0.44} & \textbf{99.36 ± 0.29} & \textbf{97.12 ± 1.75} & \textbf{97.94 ± 2.01} & \textbf{98.67 ± 1.08} & \textbf{96.91 ± 1.89} & \textbf{97.24 ± 2.96} & \textbf{99.32 ± 0.30} \\
\bottomrule
\end{tabular}
\vspace{0.2em}
\caption{
Performance comparison across three different non-IID Levels of $P(Y)$ and three distribution drifting levels on the MNIST dataset.
}
\label{tab:main_appe_mnist_py}
\end{table}

\begin{table}[htbp]
\centering
\tiny
\renewcommand{\arraystretch}{1.2}
\setlength{\tabcolsep}{4pt}
\begin{tabular}{lccccccccc}
\toprule
\textbf{Non-IID Level} & \multicolumn{3}{c}{\textbf{Low}} & \multicolumn{3}{c}{\textbf{Medium}} & \multicolumn{3}{c}{\textbf{High}} \\
\cmidrule(lr){2-4} \cmidrule(lr){5-7} \cmidrule(lr){8-10}
\textbf{\# Drifting} & \textbf{5 / 20} & \textbf{10 / 20} & \textbf{20 / 20} & \textbf{5 / 20} & \textbf{10 / 20} & \textbf{20 / 20} & \textbf{5 / 20} & \textbf{10 / 20} & \textbf{20 / 20} \\
\midrule
FedAvg     & 73.51 ± 1.57 & 72.91 ± 0.81 & 72.93 ± 0.95 & 64.88 ± 1.74 & 64.96 ± 2.08 & 64.24 ± 2.01 & 57.10 ± 2.07 & 57.19 ± 2.04 & 56.38 ± 1.82 \\
FedRC      & 29.62 ± 4.95 & 28.79 ± 5.42 & 28.33 ± 8.02 & 13.97 ± 3.91 & 12.91 ± 2.29 & 14.78 ± 2.95 & 12.96 ± 3.62 & 12.12 ± 2.26 & 11.62 ± 1.95 \\
FedEM      & 29.61 ± 4.94 & 28.79 ± 5.42 & 28.34 ± 8.03 & 13.96 ± 3.90 & 12.91 ± 2.29 & 14.77 ± 2.94 & 12.96 ± 3.61 & 12.12 ± 2.25 & 11.62 ± 1.94 \\
FeSEM      & 78.69 ± 1.37 & 78.98 ± 0.74 & 79.27 ± 0.41 & 78.58 ± 1.34 & 75.47 ± 1.41 & 77.39 ± 1.88 & 75.80 ± 1.58 & 73.68 ± 1.93 & 73.66 ± 3.38 \\
CFL        & 75.41 ± 0.94 & 75.01 ± 0.79 & 75.53 ± 0.62 & 66.82 ± 1.56 & 67.26 ± 1.22 & 66.31 ± 2.01 & 58.88 ± 1.40 & 59.95 ± 1.66 & 58.11 ± 1.78 \\
IFCA       & 76.89 ± 2.60 & 77.15 ± 2.31 & 78.31 ± 3.64 & 71.64 ± 2.80 & 69.70 ± 4.04 & 69.28 ± 2.08 & 63.46 ± 3.02 & 60.51 ± 1.13 & 58.44 ± 2.45 \\
pFedMe     & \textcolor{blue}{91.42 ± 0.38} & 91.55 ± 0.42 & 91.28 ± 0.77 & \textcolor{blue}{91.28 ± 0.37} & \textcolor{blue}{91.01 ± 0.41} & 90.37 ± 0.68 & \textcolor{blue}{90.99 ± 0.35} & \textcolor{blue}{90.56 ± 0.41} & \textcolor{blue}{89.32 ± 0.66} \\
APFL       & \textbf{92.70 ± 0.56} & \textbf{92.58 ± 0.53} & \textbf{92.54 ± 0.68} & \textbf{92.03 ± 0.62} & \textbf{91.64 ± 0.69} & \textbf{91.22 ± 0.68} & \textbf{91.05 ± 0.49} & \textbf{90.80 ± 0.58} & \textbf{89.87 ± 0.85} \\
FedDrift   & 91.20 ± 2.60 & \textcolor{blue}{92.55 ± 0.42} & \textcolor{blue}{92.50 ± 0.27} & 89.71 ± 3.43 & 89.71 ± 2.82 & \textcolor{blue}{90.54 ± 1.03} & 87.51 ± 2.09 & 84.43 ± 2.29 & 83.87 ± 3.09 \\
ATP        & 76.94 ± 0.72 & 75.79 ± 0.63 & 77.10 ± 1.13 & 68.32 ± 1.29 & 67.73 ± 2.30 & 68.14 ± 1.97 & 60.37 ± 1.45 & 60.14 ± 1.27 & 60.23 ± 1.01 \\
\midrule
\textsc{Feroma}     & 89.58 ± 0.50 & 89.48 ± 0.45 & 89.80 ± 0.75 & 88.90 ± 0.53 & 88.87 ± 0.68 & 89.01 ± 0.64 & 88.15 ± 0.66 & 88.06 ± 0.51 & 88.41 ± 0.68 \\
\bottomrule
\end{tabular}
\vspace{0.2em}
\caption{
Performance comparison across three different non-IID Levels of $P(Y|X)$ and three distribution drifting levels on the MNIST dataset.
}
\label{tab:main_appe_mnist_pyx}
\end{table}

\begin{table}[htbp]
\centering
\tiny
\renewcommand{\arraystretch}{1.2}
\setlength{\tabcolsep}{4pt}
\begin{tabular}{lccccccccc}
\toprule
\textbf{Non-IID Level} & \multicolumn{3}{c}{\textbf{Low}} & \multicolumn{3}{c}{\textbf{Medium}} & \multicolumn{3}{c}{\textbf{High}} \\
\cmidrule(lr){2-4} \cmidrule(lr){5-7} \cmidrule(lr){8-10}
\textbf{\# Drifting} & \textbf{5 / 20} & \textbf{10 / 20} & \textbf{20 / 20} & \textbf{5 / 20} & \textbf{10 / 20} & \textbf{20 / 20} & \textbf{5 / 20} & \textbf{10 / 20} & \textbf{20 / 20} \\
\midrule
FedAvg     & 73.39 ± 9.75 & \textcolor{blue}{81.57 ± 2.86} & 77.66 ± 3.30 & 73.28 ± 4.02 & 73.09 ± 4.06 & 74.97 ± 4.10 & 71.56 ± 5.11 & 74.30 ± 5.09 & 69.94 ± 9.80 \\
FedRC      & 38.39 ± 13.46 & 34.60 ± 11.07 & 42.95 ± 15.57 & 13.55 ± 3.01 & 13.71 ± 3.44 & 14.24 ± 3.57 & 11.95 ± 1.19 & 12.03 ± 0.84 & 12.36 ± 2.03 \\
FedEM      & 39.48 ± 16.08 & 40.06 ± 12.86 & 39.86 ± 19.18 & 12.87 ± 3.04 & 15.95 ± 7.59 & 14.70 ± 4.40 & 12.31 ± 1.92 & 12.01 ± 1.33 & 12.55 ± 2.39 \\
FeSEM      & 70.07 ± 7.57 & 74.58 ± 5.34 & 74.80 ± 4.84 & 67.89 ± 8.47 & 62.18 ± 3.89 & 63.74 ± 10.06 & 62.85 ± 4.13 & 59.89 ± 7.05 & 56.57 ± 6.46 \\
CFL        & 82.84 ± 4.32 & 78.01 ± 2.92 & \textcolor{blue}{81.78 ± 4.99} & \textcolor{blue}{79.30 ± 3.25} & 78.22 ± 3.23 & 77.96 ± 3.70 & 76.52 ± 1.45 & \textcolor{blue}{77.83 ± 5.80 }& 78.94 ± 2.82 \\
IFCA       & 45.78 ± 12.55 & 49.72 ± 13.79 & 35.88 ± 11.68 & 33.13 ± 10.31 & 37.07 ± 13.08 & 30.66 ± 3.80 & 34.99 ± 12.31 & 29.25 ± 6.99 & 25.62 ± 7.72 \\
pFedMe     & 47.31 ± 9.19 & 54.89 ± 7.12 & 53.18 ± 7.55 & 39.63 ± 10.46 & 42.60 ± 6.62 & 48.26 ± 7.80 & 36.97 ± 7.00 & 37.18 ± 3.43 & 45.86 ± 14.27 \\
APFL       & 66.94 ± 5.64 & 71.04 ± 2.56 & 76.07 ± 9.07 & 65.29 ± 9.73 & 67.60 ± 8.09 & 69.64 ± 4.79 & 62.99 ± 4.48 & 60.84 ± 3.22 & 66.99 ± 5.20 \\
FedDrift   & 48.43 ± 12.31 & 55.56 ± 5.44 & 46.14 ± 10.31 & 47.79 ± 6.16 & 51.62 ± 5.61 & 43.21 ± 9.51 & 45.44 ± 8.37 & 50.35 ± 5.78 & 46.22 ± 5.39 \\
ATP        & \textcolor{blue}{84.74 ± 3.67} & 79.95 ± 6.33 & 78.96 ± 12.67 & 73.81 ± 10.86 & \textcolor{blue}{78.94 ± 5.89} & \textcolor{blue}{78.62 ± 10.91} & \textcolor{blue}{79.43 ± 4.01} & 76.84 ± 10.65 & \textcolor{blue}{79.88 ± 6.86} \\
\midrule
\textsc{Feroma}     & \textbf{87.26 ± 4.80} & \textbf{88.81 ± 1.94} & \textbf{89.87 ± 2.09} & \textbf{86.99 ± 4.52} & \textbf{89.02 ± 1.59} & \textbf{90.11 ± 0.79} & \textbf{89.44 ± 1.17} & \textbf{88.29 ± 1.12} & \textbf{89.71 ± 0.60} \\
\bottomrule
\end{tabular}
\vspace{0.2em}
\caption{
Performance comparison across three different non-IID Levels of $P(X|Y)$ and three distribution drifting levels on the MNIST dataset.
}
\label{tab:main_appe_mnist_pxy}
\end{table}

\begin{table}[htbp]
\centering
\tiny
\renewcommand{\arraystretch}{1.2}
\setlength{\tabcolsep}{4pt}
\begin{tabular}{lccccccccc}
\toprule
\textbf{Non-IID Level} & \multicolumn{3}{c}{\textbf{Low}} & \multicolumn{3}{c}{\textbf{Medium}} & \multicolumn{3}{c}{\textbf{High}} \\
\cmidrule(lr){2-4} \cmidrule(lr){5-7} \cmidrule(lr){8-10}
\textbf{\# Drifting} & \textbf{5 / 20} & \textbf{10 / 20} & \textbf{20 / 20} & \textbf{5 / 20} & \textbf{10 / 20} & \textbf{20 / 20} & \textbf{5 / 20} & \textbf{10 / 20} & \textbf{20 / 20} \\
\midrule
FedAvg     & 36.45 ± 0.83 & 37.34 ± 0.55 & 38.14 ± 0.49 & 20.46 ± 5.80 & 21.56 ± 3.40 & 26.80 ± 2.84 & 17.45 ± 7.18 & 17.09 ± 5.73 & 20.92 ± 2.85 \\
FedRC      & 25.88 ± 0.24 & 26.17 ± 0.61 & 26.34 ± 0.70 & 13.40 ± 0.84 & 12.30 ± 1.09 & 15.55 ± 2.24 & 10.94 ± 0.73 & 10.66 ± 0.31 & 11.54 ± 0.78 \\
FedEM      & 25.88 ± 0.26 & 26.18 ± 0.60 & 26.39 ± 0.71 & 13.39 ± 0.85 & 12.30 ± 1.08 & 15.55 ± 2.24 & 10.94 ± 0.74 & 10.66 ± 0.31 & 11.54 ± 0.78 \\
FeSEM      & 35.03 ± 0.66 & 35.38 ± 1.03 & 34.87 ± 0.96 & 21.44 ± 1.93 & 20.73 ± 0.95 & 22.76 ± 4.76 & 19.28 ± 2.32 & 19.26 ± 1.55 & 18.99 ± 1.43 \\
CFL        & \textcolor{blue}{37.54 ± 0.67} & \textcolor{blue}{38.29 ± 0.76} & \textcolor{blue}{39.15 ± 1.14} & 22.69 ± 3.00 & 25.26 ± 0.96 & 27.67 ± 2.62 & 21.11 ± 4.86 & 16.93 ± 1.98 & 22.78 ± 1.96 \\
IFCA       & 31.08 ± 1.99 & 33.18 ± 3.00 & 33.75 ± 2.12 & 17.33 ± 1.54 & 20.28 ± 2.92 & 21.82 ± 3.60 & 18.83 ± 3.16 & 18.45 ± 1.62 & 18.37 ± 1.40 \\
pFedMe     & 23.54 ± 0.48 & 26.43 ± 2.35 & 27.51 ± 0.63 & 16.94 ± 0.47 & 18.12 ± 0.98 & 18.66 ± 1.59 & 16.55 ± 1.77 & 17.41 ± 1.72 & 15.25 ± 0.35 \\
APFL       & 27.88 ± 0.47 & 31.59 ± 2.43 & 34.07 ± 0.58 & 20.04 ± 1.10 & 22.61 ± 1.16 & 24.73 ± 2.41 & 18.80 ± 2.98 & 19.53 ± 2.49 & 21.10 ± 1.12 \\
FedDrift   & 31.10 ± 3.38 & 32.12 ± 4.98 & 32.25 ± 4.15 & 17.31 ± 1.42 & 20.50 ± 1.97 & 21.10 ± 1.15 & 17.86 ± 1.98 & 18.65 ± 1.86 & 17.92 ± 1.92 \\
ATP        & 33.53 ± 1.17 & 32.15 ± 3.73 & 31.46 ± 2.78 & \textcolor{blue}{28.27 ± 1.66} & \textcolor{blue}{29.93 ± 1.77} & \textcolor{blue}{28.17 ± 5.07} & \textcolor{blue}{24.22 ± 2.87} &\textcolor{blue}{ 24.39 ± 0.49} & \textcolor{blue}{25.13 ± 0.38} \\
\midrule
\textsc{Feroma}     & \textbf{40.38 ± 0.26} & \textbf{39.11 ± 2.14} & \textbf{40.30 ± 0.51} & \textbf{31.81 ± 0.60} & \textbf{31.71 ± 2.98} & \textbf{33.67 ± 0.63 }& \textbf{28.78 ± 2.95} & \textbf{28.84 ± 1.06} & \textbf{29.96 ± 1.72} \\
\bottomrule
\end{tabular}
\vspace{0.2em}
\caption{
Performance comparison across three different non-IID Levels of $P(X)$ and three distribution drifting levels on the CIFAR-10 dataset.
}
\label{tab:main_appe_cifar10_px}
\end{table}

\begin{table}[htbp]
\centering
\tiny
\renewcommand{\arraystretch}{1.2}
\setlength{\tabcolsep}{4pt}
\begin{tabular}{lccccccccc}
\toprule
\textbf{Non-IID Level} & \multicolumn{3}{c}{\textbf{Low}} & \multicolumn{3}{c}{\textbf{Medium}} & \multicolumn{3}{c}{\textbf{High}} \\
\cmidrule(lr){2-4} \cmidrule(lr){5-7} \cmidrule(lr){8-10}
\textbf{\# Drifting} & \textbf{5 / 20} & \textbf{10 / 20} & \textbf{20 / 20} & \textbf{5 / 20} & \textbf{10 / 20} & \textbf{20 / 20} & \textbf{5 / 20} & \textbf{10 / 20} & \textbf{20 / 20} \\
\midrule
FedAvg     & 44.72 ± 14.83 & 43.09 ± 3.40 & \textcolor{blue}{56.75 ± 9.04} & \textcolor{blue}{40.86 ± 9.09} & 45.75 ± 9.38 & \textcolor{blue}{42.29 ± 2.39} & 38.33 ± 5.13 & 37.35 ± 9.07 & 44.29 ± 11.23 \\
FedRC      & 43.49 ± 15.66 & 43.32 ± 4.50 & 54.57 ± 8.04 & 26.99 ± 5.28 & 43.05 ± 9.04 & 39.41 ± 5.05 & 27.16 ± 1.97 & 35.08 ± 9.91 & \textcolor{blue}{44.42 ± 8.51} \\
FedEM      & 42.95 ± 13.97 & 47.11 ± 6.56 & 50.36 ± 11.30 & 29.77 ± 6.50 & 39.36 ± 10.00 & 41.62 ± 4.74 & 26.85 ± 6.04 & 34.51 ± 10.46 & 44.16 ± 8.45 \\
FeSEM      & 38.80 ± 7.70 & 40.48 ± 8.70 & 49.92 ± 8.95 & 31.75 ± 10.34 & 34.09 ± 10.76 & 31.37 ± 9.11 & 27.47 ± 7.95 & 33.42 ± 7.81 & 34.03 ± 6.70 \\
CFL        & \textcolor{blue}{48.02 ± 14.91} & \textcolor{blue}{44.00 ± 6.16} & 55.99 ± 9.65 & 32.79 ± 5.51 & \textcolor{blue}{48.37 ± 6.23} & 39.98 ± 3.12 & \textcolor{blue}{39.13 ± 8.32} & \textcolor{blue}{44.26 ± 9.04} & 41.43 ± 3.54 \\
IFCA       & 36.96 ± 10.36 & 41.26 ± 6.62 & 32.50 ± 13.51 & 27.02 ± 5.27 & 27.12 ± 9.02 & 19.76 ± 6.24 & 23.55 ± 2.51 & 18.80 ± 2.99 & 20.10 ± 5.28 \\
pFedMe     & 30.75 ± 8.65 & 29.95 ± 9.66 & 26.51 ± 11.54 & 27.57 ± 7.48 & 23.00 ± 8.06 & 16.86 ± 5.77 & 20.37 ± 4.17 & 20.03 ± 3.64 & 18.92 ± 6.83 \\
APFL       & 42.48 ± 9.30 & 43.23 ± 5.90 & 39.24 ± 12.88 & 36.89 ± 6.09 & 34.81 ± 8.88 & 25.85 ± 4.95 & 30.50 ± 4.78 & 30.89 ± 8.94 & 29.99 ± 6.88 \\
FedDrift   & 38.37 ± 10.39 & 41.35 ± 8.44 & 43.44 ± 12.65 & 33.52 ± 9.61 & 31.56 ± 9.90 & 26.85 ± 4.69 & 27.19 ± 5.98 & 27.96 ± 4.74 & 32.92 ± 4.20 \\
ATP        & 30.51 ± 14.24 & 24.22 ± 6.96 & 34.46 ± 8.80 & 23.71 ± 7.36 & 24.68 ± 12.85 & 29.76 ± 13.14 & 18.05 ± 3.42 & 16.18 ± 5.04 & 35.29 ± 1.99 \\
\midrule
\textsc{Feroma}     & \textbf{73.04 ± 8.46} & \textbf{73.15 ± 3.82} & \textbf{68.69 ± 5.26} & \textbf{58.31 ± 5.08} & \textbf{62.88 ± 6.31 }& \textbf{58.60 ± 10.85} & \textbf{55.11 ± 3.82} & \textbf{67.47 ± 8.17} & \textbf{58.99 ± 6.57} \\
\bottomrule
\end{tabular}
\vspace{0.2em}
\caption{
Performance comparison across three different non-IID Levels of $P(Y)$ and three distribution drifting levels on the CIFAR-10 dataset.
}
\label{tab:main_appe_cifar10_py}
\end{table}

\begin{table}[htbp]
\centering
\tiny
\renewcommand{\arraystretch}{1.2}
\setlength{\tabcolsep}{4pt}
\begin{tabular}{lccccccccc}
\toprule
\textbf{Non-IID Level} & \multicolumn{3}{c}{\textbf{Low}} & \multicolumn{3}{c}{\textbf{Medium}} & \multicolumn{3}{c}{\textbf{High}} \\
\cmidrule(lr){2-4} \cmidrule(lr){5-7} \cmidrule(lr){8-10}
\textbf{\# Drifting} & \textbf{5 / 20} & \textbf{10 / 20} & \textbf{20 / 20} & \textbf{5 / 20} & \textbf{10 / 20} & \textbf{20 / 20} & \textbf{5 / 20} & \textbf{10 / 20} & \textbf{20 / 20} \\
\midrule
FedAvg     & 40.05 ± 1.04 & 39.85 ± 1.44 & 39.65 ± 1.18 & 36.09 ± 0.97 & 36.12 ± 1.08 & 35.92 ± 1.07 & 32.56 ± 1.96 & 32.15 ± 2.25 & 32.05 ± 1.79 \\
FedRC      & 27.06 ± 1.22 & 26.70 ± 1.53 & 27.26 ± 1.15 & 12.17 ± 1.64 & 11.16 ± 0.86 & 12.44 ± 1.75 & 11.31 ± 1.42 & 11.12 ± 0.84 & 10.95 ± 1.00 \\
FedEM      & 27.03 ± 1.22 & 26.73 ± 1.51 & 27.25 ± 1.16 & 12.17 ± 1.64 & 11.15 ± 0.86 & 12.44 ± 1.76 & 11.31 ± 1.42 & 11.12 ± 0.84 & 10.95 ± 0.99 \\
FeSEM      & 40.62 ± 0.65 & 40.94 ± 0.59 & 40.78 ± 0.84 & 38.36 ± 0.63 & 37.51 ± 1.02 & 38.51 ± 0.59 & 35.78 ± 1.17 & 36.26 ± 0.63 & 37.15 ± 0.66 \\
CFL        & 41.17 ± 1.06 & 40.90 ± 1.30 & 40.91 ± 0.98 & 36.99 ± 1.06 & 37.09 ± 0.83 & 36.70 ± 1.19 & 33.39 ± 1.75 & 32.93 ± 2.00 & 32.86 ± 1.92 \\
IFCA       & \textbf{42.36 ± 2.22} & \textcolor{blue}{42.41 ± 2.42} & 42.09 ± 1.36 & 38.73 ± 1.97 & 38.79 ± 1.48 & 38.68 ± 0.54 & 35.81 ± 2.07 & 35.74 ± 1.57 & 34.98 ± 1.78 \\
pFedMe     & 36.54 ± 0.51 & 36.83 ± 0.51 & 37.19 ± 0.47 & 36.26 ± 0.44 & 36.83 ± 0.51 & 37.57 ± 0.36 & 36.57 ± 0.65 & 37.27 ± 0.35 & 37.82 ± 0.44 \\
APFL       & 41.78 ± 0.53 & 42.17 ± 0.31 & \textbf{42.59 ± 0.61} & \textcolor{blue}{40.88 ± 0.40} & \textcolor{blue}{41.66 ± 0.33} & \textbf{41.73 ± 0.32} & \textcolor{blue}{40.44 ± 0.74} & \textcolor{blue}{40.47 ± 0.58} & \textbf{41.32 ± 0.82} \\
FedDrift   & 40.79 ± 0.29 & 41.04 ± 0.89 & 40.66 ± 0.58 & 37.13 ± 0.33 & 37.09 ± 0.23 & 37.13 ± 0.22 & 33.47 ± 1.77 & 34.07 ± 1.56 & 33.28 ± 1.48 \\
ATP        & 39.37 ± 1.16 & 38.52 ± 0.73 & 39.58 ± 0.98 & 35.38 ± 0.82 & 36.13 ± 0.92 & 35.24 ± 0.97 & 31.67 ± 1.24 & 31.74 ± 1.19 & 31.71 ± 1.48 \\
\midrule
\textsc{Feroma}     & \textcolor{blue}{42.22 ± 0.54} & \textbf{42.47 ± 0.32} & \textcolor{blue}{42.14 ± 0.32} &\textbf{ 41.18 ± 0.78} & \textbf{41.74 ± 0.25} &\textcolor{blue}{ 41.47 ± 0.48} & \textbf{40.76 ± 0.61} &\textbf{ 41.01 ± 0.50} & \textcolor{blue}{40.74 ± 0.64} \\
\bottomrule
\end{tabular}
\vspace{0.2em}
\caption{
Performance comparison across three different non-IID Levels of $P(Y|X)$ and three distribution drifting levels on the CIFAR-10 dataset.
}
\label{tab:main_appe_cifar10_pyx}
\end{table}

\begin{table}[htbp]
\centering
\tiny
\renewcommand{\arraystretch}{1.2}
\setlength{\tabcolsep}{4pt}
\begin{tabular}{lccccccccc}
\toprule
\textbf{Non-IID Level} & \multicolumn{3}{c}{\textbf{Low}} & \multicolumn{3}{c}{\textbf{Medium}} & \multicolumn{3}{c}{\textbf{High}} \\
\cmidrule(lr){2-4} \cmidrule(lr){5-7} \cmidrule(lr){8-10}
\textbf{\# Drifting} & \textbf{5 / 20} & \textbf{10 / 20} & \textbf{20 / 20} & \textbf{5 / 20} & \textbf{10 / 20} & \textbf{20 / 20} & \textbf{5 / 20} & \textbf{10 / 20} & \textbf{20 / 20} \\
\midrule
FedAvg     & 32.17 ± 3.46 & 29.12 ± 3.97 & 26.62 ± 5.79 & 25.30 ± 3.66 & 25.37 ± 2.66 & 21.87 ± 3.09 & \textcolor{blue}{26.09 ± 1.18} & 22.42 ± 2.15 & 23.92 ± 2.27 \\
FedRC      & 22.01 ± 2.43 & 22.06 ± 1.52 & 25.18 ± 3.29 & 18.71 ± 2.85 & 18.01 ± 1.87 & 20.00 ± 3.76 & 16.28 ± 1.71 & 17.35 ± 1.83 & 15.62 ± 1.05 \\
FedEM      & 20.87 ± 1.85 & 22.16 ± 4.10 & 22.98 ± 4.36 & 17.29 ± 1.90 & 18.33 ± 1.56 & 17.94 ± 1.78 & 15.71 ± 1.33 & 16.04 ± 2.11 & 16.27 ± 3.22 \\
FeSEM      & 28.27 ± 2.44 & \textcolor{blue}{29.70 ± 4.41} & \textcolor{blue}{29.65 ± 4.45} & 25.57 ± 1.75 & 23.16 ± 3.36 & 22.54 ± 3.93 & 23.41 ± 2.41 & 21.74 ± 1.32 & 19.97 ± 2.78 \\
CFL        & \textcolor{blue}{29.29 ± 2.46} & 27.27 ± 4.88 & 29.32 ± 3.62 & \textcolor{blue}{25.70 ± 3.39} & \textcolor{blue}{27.69 ± 5.36} & \textcolor{blue}{25.37 ± 4.29} & 22.85 ± 1.60 & \textcolor{blue}{27.85 ± 2.50} & \textcolor{blue}{25.02 ± 1.80} \\
IFCA       & 23.48 ± 7.55 & 29.57 ± 3.28 & 22.87 ± 6.26 & 19.48 ± 6.33 & 18.58 ± 3.92 & 16.99 ± 2.15 & 17.32 ± 3.86 & 17.54 ± 3.90 & 10.61 ± 0.96 \\
pFedMe     & 18.60 ± 3.04 & 22.47 ± 4.39 & 16.79 ± 3.10 & 16.65 ± 1.25 & 15.09 ± 2.31 & 12.75 ± 1.96 & 15.59 ± 1.87 & 14.09 ± 1.25 & 13.01 ± 0.94 \\
APFL       & 25.02 ± 5.18 & 26.81 ± 3.06 & 29.18 ± 6.42 & 23.17 ± 3.43 & 24.37 ± 1.86 & 22.87 ± 2.75 & 22.45 ± 1.77 & 24.00 ± 2.79 & 24.82 ± 2.76 \\
FedDrift   & 25.17 ± 7.09 & 28.72 ± 1.31 & 25.30 ± 5.45 & 19.48 ± 0.59 & 23.20 ± 3.84 & 20.53 ± 4.66 & 20.11 ± 3.67 & 22.37 ± 1.30 & 20.99 ± 2.88 \\
ATP        & 27.81 ± 1.81 & 24.99 ± 6.25 & 26.08 ± 6.81 & 24.52 ± 1.81 & 26.54 ± 1.74 & 24.12 ± 2.24 & 22.49 ± 1.94 & 22.87 ± 1.43 & 21.48 ± 2.82 \\
\midrule
\textsc{Feroma}     & \textbf{40.29 ± 3.29} & \textbf{38.32 ± 3.00} & \textbf{39.89 ± 2.23} & \textbf{38.54 ± 2.42} & \textbf{38.12 ± 2.48} & \textbf{36.96 ± 3.85} & \textbf{35.36 ± 1.86} & \textbf{35.83 ± 1.02} & \textbf{31.87 ± 1.76} \\
\bottomrule
\end{tabular}
\vspace{0.2em}
\caption{
Performance comparison across three different non-IID Levels of $P(X|Y)$ and three distribution drifting levels on the CIFAR-10 dataset.
}
\label{tab:main_appe_cifar10_pxy}
\end{table}

\begin{table}[htbp]
\centering
\tiny
\renewcommand{\arraystretch}{1.2}
\setlength{\tabcolsep}{4pt}
\begin{tabular}{lccccccccc}
\toprule
\textbf{Non-IID Level} & \multicolumn{3}{c}{\textbf{Low}} & \multicolumn{3}{c}{\textbf{Medium}} & \multicolumn{3}{c}{\textbf{High}} \\
\cmidrule(lr){2-4} \cmidrule(lr){5-7} \cmidrule(lr){8-10}
\textbf{\# Drifting} & \textbf{5 / 20} & \textbf{10 / 20} & \textbf{20 / 20} & \textbf{5 / 20} & \textbf{10 / 20} & \textbf{20 / 20} & \textbf{5 / 20} & \textbf{10 / 20} & \textbf{20 / 20} \\
\midrule
FedAvg     & 61.86 ± 2.02 & 63.82 ± 1.54 & 64.19 ± 2.15 & 69.66 ± 1.31 & 69.87 ± 1.28 & 70.10 ± 0.46 & 55.85 ± 1.90 & 54.46 ± 2.30 & 55.94 ± 3.49 \\
FedRC      & 46.24 ± 0.55 & 48.18 ± 2.34 & 50.42 ± 3.99 & 41.89 ± 3.53 & 44.61 ± 2.33 & 46.15 ± 5.89 & 12.55 ± 1.69 & 11.32 ± 0.82 & 11.65 ± 1.01 \\
FedEM      & 46.08 ± 0.67 & 48.66 ± 2.49 & 50.10 ± 3.85 & 42.06 ± 3.52 & 44.80 ± 2.20 & 46.06 ± 5.96 & 12.56 ± 1.70 & 11.33 ± 0.84 & 11.64 ± 1.00 \\
FeSEM      & 54.13 ± 4.07 & 54.84 ± 3.64 & 58.77 ± 5.76 & 65.00 ± 5.61 & 64.55 ± 4.52 & 66.59 ± 1.45 & 35.51 ± 6.56 & 38.13 ± 7.11 & 37.48 ± 5.87 \\
CFL        & \textcolor{blue}{65.82 ± 1.37} & \textcolor{blue}{66.51 ± 0.99} & \textcolor{blue}{67.89 ± 1.85} & 72.15 ± 1.41 & 71.89 ± 1.47 & 71.92 ± 0.39 & \textcolor{blue}{59.43 ± 1.67} & \textcolor{blue}{59.64 ± 1.10} & 58.13 ± 1.97 \\
IFCA       & 27.72 ± 4.23 & 29.86 ± 7.74 & 33.61 ± 8.65 & 33.60 ± 7.54 & 35.74 ± 8.13 & 36.89 ± 7.38 & 19.16 ± 5.33 & 23.13 ± 12.49 & 10.32 ± 1.63 \\
pFedMe     & 23.66 ± 1.94 & 28.71 ± 8.46 & 30.71 ± 3.25 & 41.81 ± 4.97 & 43.83 ± 4.32 & 47.87 ± 3.97 & 23.14 ± 7.06 & 24.29 ± 6.20 & 22.30 ± 4.67 \\
APFL       & 41.28 ± 1.82 & 46.68 ± 5.77 & 49.51 ± 2.94 & 62.39 ± 3.02 & 64.52 ± 2.27 & 65.16 ± 2.21 & 38.03 ± 6.09 & 40.97 ± 5.78 & 41.25 ± 4.07 \\
FedDrift   & 28.18 ± 2.48 & 33.83 ± 8.95 & 35.47 ± 3.53 & 40.77 ± 5.48 & 45.99 ± 6.78 & 48.31 ± 2.64 & 26.69 ± 8.62 & 28.36 ± 6.97 & 24.33 ± 4.42 \\
ATP        & 60.87 ± 3.38 & 63.19 ± 2.03 & 62.96 ± 5.33 &\textcolor{blue}{ 74.32 ± 0.48} & \textcolor{blue}{72.71 ± 2.86} &\textcolor{blue}{ 74.41 ± 0.98} & 54.80 ± 2.33 & 55.41 ± 1.17 & \textcolor{blue}{59.67 ± 2.37} \\
\midrule
\textsc{Feroma}     & \textbf{73.79 ± 0.41} & \textbf{73.83 ± 0.68} & \textbf{73.45 ± 0.76} & \textbf{74.84 ± 0.49} & \textbf{75.12 ± 0.60} & \textbf{75.03 ± 0.51} & \textbf{72.26 ± 0.37} & \textbf{71.62 ± 0.34} & \textbf{71.99 ± 0.65} \\
\bottomrule
\end{tabular}
\vspace{0.2em}
\caption{
Performance comparison across three different non-IID Levels of $P(X)$ and three distribution drifting levels on the FMNIST dataset.
}
\label{tab:main_appe_fmnist_px}
\end{table}

\begin{table}[htbp]
\centering
\tiny
\renewcommand{\arraystretch}{1.2}
\setlength{\tabcolsep}{4pt}
\begin{tabular}{lccccccccc}
\toprule
\textbf{Non-IID Level} & \multicolumn{3}{c}{\textbf{Low}} & \multicolumn{3}{c}{\textbf{Medium}} & \multicolumn{3}{c}{\textbf{High}} \\
\cmidrule(lr){2-4} \cmidrule(lr){5-7} \cmidrule(lr){8-10}
\textbf{\# Drifting} & \textbf{5 / 20} & \textbf{10 / 20} & \textbf{20 / 20} & \textbf{5 / 20} & \textbf{10 / 20} & \textbf{20 / 20} & \textbf{5 / 20} & \textbf{10 / 20} & \textbf{20 / 20} \\
\midrule
FedAvg     & 77.11 ± 12.18 & 72.21 ± 4.40 & 83.51 ± 5.09 & 65.95 ± 6.90 & 68.24 ± 10.91 & 79.23 ± 5.01 & \textcolor{blue}{66.09 ± 8.88} & 55.08 ± 16.79 & 66.47 ± 20.60 \\
FedRC      & 73.33 ± 13.70 & 68.69 ± 6.19 & 73.02 ± 12.34 & 64.14 ± 3.60 & 65.21 ± 9.05 & 62.11 ± 12.16 & 56.67 ± 5.41 & 53.72 ± 5.36 & 66.30 ± 11.81 \\
FedEM      & 74.14 ± 9.01 & \textcolor{blue}{76.95 ± 5.14} & 76.74 ± 8.07 & 60.32 ± 12.02 & 64.75 ± 4.05 & 68.48 ± 12.17 & 51.09 ± 9.43 & 60.92 ± 10.01 & 66.79 ± 11.45 \\
FeSEM      & 73.41 ± 11.73 & 69.42 ± 9.37 & 69.29 ± 8.17 & 62.55 ± 5.86 & 60.59 ± 9.05 & 67.36 ± 8.50 & 50.85 ± 6.67 & 44.75 ± 8.58 & 64.80 ± 6.96 \\
CFL        & \textcolor{blue}{73.96 ± 13.42} & 73.43 ± 7.40 & \textcolor{blue}{77.72 ± 8.62 }& \textcolor{blue}{70.97 ± 4.60} & \textcolor{blue}{71.86 ± 8.44} & \textcolor{blue}{80.63 ± 5.11} & 65.63 ± 4.32 & \textcolor{blue}{62.01 ± 12.34} & \textcolor{blue}{72.70 ± 9.01} \\
IFCA       & 39.34 ± 12.56 & 42.07 ± 11.56 & 37.18 ± 13.17 & 37.72 ± 15.67 & 37.57 ± 7.43 & 27.73 ± 9.95 & 25.37 ± 7.80 & 21.58 ± 3.61 & 24.19 ± 6.23 \\
pFedMe     & 43.05 ± 12.37 & 44.31 ± 8.35 & 38.76 ± 17.94 & 38.08 ± 13.36 & 33.00 ± 11.55 & 22.77 ± 6.17 & 31.35 ± 5.91 & 24.93 ± 7.18 & 22.75 ± 5.67 \\
APFL       & 58.41 ± 9.33 & 56.31 ± 5.97 & 56.14 ± 12.79 & 51.51 ± 7.18 & 49.41 ± 9.51 & 44.99 ± 5.45 & 42.94 ± 4.82 & 42.26 ± 8.01 & 48.67 ± 8.59 \\
FedDrift   & 45.09 ± 12.05 & 44.59 ± 11.60 & 41.92 ± 17.64 & 34.30 ± 6.98 & 44.59 ± 11.60 & 29.84 ± 6.98 & 55.02 ± 10.12 & 34.61 ± 4.94 & 35.46 ± 6.49 \\
ATP        & 65.72 ± 19.17 & 51.93 ± 14.65 & 43.36 ± 32.78 & 61.35 ± 17.68 & 67.94 ± 18.31 & 55.95 ± 26.14 & 48.92 ± 27.49 & 46.53 ± 24.37 & 46.90 ± 26.28 \\
\midrule
\textsc{Feroma}     & \textbf{96.58 ± 1.81} & \textbf{97.82 ± 1.46} &\textbf{ 97.82 ± 2.50} & \textbf{93.48 ± 4.75} & \textbf{96.09 ± 2.78} & \textbf{97.30 ± 1.88} & \textbf{93.26 ± 5.43} & \textbf{95.04 ± 4.29} & \textbf{95.50 ± 2.87} \\
\bottomrule
\end{tabular}
\vspace{0.2em}
\caption{
Performance comparison across three different non-IID Levels of $P(Y)$ and three distribution drifting levels on the FMNIST dataset.
}
\label{tab:main_appe_fmnist_py}
\end{table}

\begin{table}[htbp]
\centering
\tiny
\renewcommand{\arraystretch}{1.2}
\setlength{\tabcolsep}{4pt}
\begin{tabular}{lccccccccc}
\toprule
\textbf{Non-IID Level} & \multicolumn{3}{c}{\textbf{Low}} & \multicolumn{3}{c}{\textbf{Medium}} & \multicolumn{3}{c}{\textbf{High}} \\
\cmidrule(lr){2-4} \cmidrule(lr){5-7} \cmidrule(lr){8-10}
\textbf{\# Drifting} & \textbf{5 / 20} & \textbf{10 / 20} & \textbf{20 / 20} & \textbf{5 / 20} & \textbf{10 / 20} & \textbf{20 / 20} & \textbf{5 / 20} & \textbf{10 / 20} & \textbf{20 / 20} \\
\midrule
FedAvg     & 62.98 ± 1.74 & 61.83 ± 2.81 & 62.62 ± 2.74 & 54.90 ± 2.56 & 54.45 ± 2.28 & 54.32 ± 2.49 & 50.96 ± 2.68 & 50.88 ± 3.46 & 50.03 ± 3.31 \\
FedRC      & 58.95 ± 3.00 & 58.25 ± 3.15 & 58.66 ± 2.79 & 26.85 ± 11.47 & 30.44 ± 9.46 & 28.24 ± 7.44 & 20.77 ± 4.16 & 20.39 ± 4.02 & 19.06 ± 4.58 \\
FedEM      & 59.08 ± 3.05 & 58.23 ± 3.13 & 58.61 ± 2.71 & 26.83 ± 11.46 & 30.43 ± 9.45 & 28.25 ± 7.45 & 20.77 ± 4.16 & 20.40 ± 4.02 & 19.06 ± 4.59 \\
FeSEM      & 66.49 ± 2.08 & 66.50 ± 2.45 & 67.38 ± 2.03 & 64.98 ± 1.60 & 66.24 ± 1.46 & 64.84 ± 1.59 & 63.65 ± 2.65 & 65.37 ± 1.68 & 64.42 ± 3.99 \\
CFL        & 64.04 ± 1.44 & 63.18 ± 2.53 & 63.47 ± 2.20 & 56.21 ± 1.85 & 55.72 ± 2.18 & 55.63 ± 2.12 & 51.07 ± 3.35 & 52.08 ± 3.61 & 50.25 ± 3.32 \\
IFCA       & 69.58 ± 3.35 & 68.59 ± 2.98 & 68.28 ± 4.08 & 61.89 ± 1.76 & 60.78 ± 3.52 & 61.45 ± 2.78 & 49.74 ± 14.68 & 50.01 ± 9.74 & 42.42 ± 16.44 \\
pFedMe     & 76.73 ± 0.34 & 76.42 ± 0.30 & 76.30 ± 0.35 & 76.43 ± 0.68 & 75.74 ± 0.15 & \textcolor{blue}{75.71 ± 0.38} &\textcolor{blue}{ 75.91 ± 0.38} & \textbf{75.33 ± 0.24} & \textbf{74.70 ± 0.36} \\
APFL       & \textbf{78.63 ± 0.54} &\textbf{ 78.06 ± 0.59} & \textbf{77.90 ± 0.68} & \textbf{77.12 ± 0.82 }& \textbf{76.94 ± 0.69} & \textbf{75.82 ± 1.03} & \textbf{76.31 ± 0.52} & \textcolor{blue}{75.21 ± 0.64} & \textcolor{blue}{74.31 ± 0.78} \\
FedDrift   & \textcolor{blue}{77.56 ± 0.74} & \textcolor{blue}{76.93 ± 1.55} &\textcolor{blue}{ 77.77 ± 0.30} & \textcolor{blue}{75.52 ± 1.28} & \textcolor{blue}{76.00 ± 0.60} & 75.12 ± 0.83 & 74.75 ± 0.94 & 73.22 ± 0.50 & 71.89 ± 0.97 \\
ATP        & 64.48 ± 1.70 & 63.69 ± 2.61 & 63.63 ± 2.53 & 56.88 ± 2.56 & 56.39 ± 1.96 & 56.51 ± 3.13 & 53.05 ± 3.06 & 53.50 ± 3.54 & 52.00 ± 3.23 \\
\midrule
\textsc{Feroma}     & 75.37 ± 0.33 & 75.23 ± 0.42 & 75.25 ± 0.27 & 74.83 ± 0.38 & 74.64 ± 0.37 & 74.69 ± 0.36 & 74.29 ± 0.37 & 73.83 ± 0.42 & 73.93 ± 0.26 \\
\bottomrule
\end{tabular}
\vspace{0.2em}
\caption{
Performance comparison across three different non-IID Levels of $P(Y|X)$ and three distribution drifting levels on the FMNIST dataset.
}
\label{tab:main_appe_fmnist_pyx}
\end{table}

\begin{table}[htbp]
\centering
\tiny
\renewcommand{\arraystretch}{1.2}
\setlength{\tabcolsep}{4pt}
\begin{tabular}{lccccccccc}
\toprule
\textbf{Non-IID Level} & \multicolumn{3}{c}{\textbf{Low}} & \multicolumn{3}{c}{\textbf{Medium}} & \multicolumn{3}{c}{\textbf{High}} \\
\cmidrule(lr){2-4} \cmidrule(lr){5-7} \cmidrule(lr){8-10}
\textbf{\# Drifting} & \textbf{5 / 20} & \textbf{10 / 20} & \textbf{20 / 20} & \textbf{5 / 20} & \textbf{10 / 20} & \textbf{20 / 20} & \textbf{5 / 20} & \textbf{10 / 20} & \textbf{20 / 20} \\
\midrule
FedAvg     & 65.15 ± 10.19 & 69.65 ± 3.73 & 67.79 ± 3.94 & 64.46 ± 1.14 & 66.62 ± 2.34 & 66.23 ± 4.61 & 64.09 ± 1.64 & 61.99 ± 5.29 & 64.42 ± 2.26 \\
FedRC      & 48.93 ± 5.84 & 60.64 ± 5.33 & 52.11 ± 4.32 & 43.63 ± 4.19 & 48.53 ± 5.81 & 46.30 ± 9.01 & 34.81 ± 10.03 & 33.24 ± 9.71 & 36.21 ± 7.53 \\
FedEM      & 56.66 ± 5.44 & 59.05 ± 2.30 & 62.28 ± 3.88 & 44.11 ± 5.97 & 43.58 ± 4.64 & 42.91 ± 11.92 & 44.28 ± 9.71 & 37.80 ± 10.92 & 34.75 ± 8.75 \\
FeSEM      & 63.18 ± 2.08 & 63.19 ± 3.29 & 65.68 ± 1.07 & 55.72 ± 5.48 & 54.20 ± 4.28 & 54.19 ± 4.57 & 47.71 ± 4.15 & 44.83 ± 5.24 & 48.73 ± 7.02 \\
CFL        & \textcolor{blue}{71.20 ± 1.69} & \textcolor{blue}{69.80 ± 2.05} & \textcolor{blue}{72.93 ± 2.53} & 65.72 ± 4.45 & 66.24 ± 2.24 & 68.24 ± 1.58 & 63.27 ± 4.00 & 66.18 ± 4.57 & \textcolor{blue}{65.05 ± 2.33} \\
IFCA       & 34.03 ± 15.83 & 40.55 ± 10.38 & 23.44 ± 11.22 & 33.76 ± 12.59 & 30.71 ± 8.15 & 23.22 ± 6.64 & 24.94 ± 3.71 & 18.30 ± 6.85 & 23.63 ± 10.05 \\
pFedMe     & 37.33 ± 7.78 & 41.24 ± 3.44 & 35.80 ± 10.07 & 35.75 ± 7.34 & 30.55 ± 5.25 & 28.78 ± 5.66 & 30.81 ± 7.53 & 28.17 ± 3.18 & 31.60 ± 5.50 \\
APFL       & 54.85 ± 6.85 & 56.22 ± 4.92 & 56.31 ± 7.42 & 47.96 ± 6.79 & 50.92 ± 8.42 & 48.10 ± 5.86 & 49.11 ± 10.42 & 45.82 ± 5.65 & 46.84 ± 2.19 \\
FedDrift   & 40.93 ± 7.29 & 46.06 ± 5.83 & 46.16 ± 8.85 & 38.03 ± 10.57 & 34.23 ± 6.04 & 38.54 ± 9.45 & 36.15 ± 6.39 & 37.69 ± 4.47 & 38.31 ± 8.71 \\
ATP        & 70.44 ± 2.72 & 69.64 ± 1.43 & 71.45 ± 4.13 & \textcolor{blue}{67.13 ± 4.32} & \textcolor{blue}{67.65 ± 6.12} & \textcolor{blue}{69.13 ± 1.35} &\textcolor{blue}{ 66.85 ± 2.51} & \textcolor{blue}{66.97 ± 4.65} & 62.36 ± 4.34 \\
\midrule
\textsc{Feroma}     & \textbf{78.74 ± 1.78} &\textbf{ 75.17 ± 5.05} & \textbf{78.98 ± 1.49} & \textbf{75.15 ± 3.95} & \textbf{76.95 ± 3.40} & \textbf{73.12 ± 6.37 }& \textbf{74.21 ± 2.44} & \textbf{76.79 ± 3.44} & \textbf{70.23 ± 7.08 }\\
\bottomrule
\end{tabular}
\vspace{0.2em}
\caption{
Performance comparison across three different non-IID Levels of $P(X|Y)$ and three distribution drifting levels on the FMNIST dataset.
}
\label{tab:main_appe_fmnist_pxy}
\end{table}

\begin{table}[htbp]
\centering
\tiny
\renewcommand{\arraystretch}{1.2}
\setlength{\tabcolsep}{4pt}
\begin{tabular}{lccccccccc}
\toprule
\textbf{Non-IID Level} & \multicolumn{3}{c}{\textbf{Low}} & \multicolumn{3}{c}{\textbf{Medium}} & \multicolumn{3}{c}{\textbf{High}} \\
\cmidrule(lr){2-4} \cmidrule(lr){5-7} \cmidrule(lr){8-10}
\textbf{\# Drifting} & \textbf{5 / 20} & \textbf{10 / 20} & \textbf{20 / 20} & \textbf{5 / 20} & \textbf{10 / 20} & \textbf{20 / 20} & \textbf{5 / 20} & \textbf{10 / 20} & \textbf{20 / 20} \\
\midrule
FedAvg     & 43.85 ± 0.80 & 45.13 ± 0.68 & 44.87 ± 0.87 & 7.47 ± 5.13 & 13.74 ± 4.59 & 12.84 ± 2.75 & 10.52 ± 8.51 & 8.58 ± 4.26 & 8.30 ± 3.61 \\
FedRC      & \textbf{44.97 ± 0.73} & \textbf{46.90 ± 0.73} & \textbf{48.07 ± 0.83} & \textcolor{blue}{13.08 ± 0.64} & 16.64 ± 4.18 & 15.70 ± 1.96 & \textcolor{blue}{19.97 ± 5.97} & 11.93 ± 2.50 & 13.97 ± 2.67 \\
FedEM      & \textbf{44.97 ± 0.7}3 & \textcolor{blue}{46.89 ± 0.72} & \textbf{48.07 ± 0.83} & \textcolor{blue}{13.08 ± 0.64} & 16.63 ± 4.19 & 15.69 ± 1.95 & \textcolor{blue}{19.97 ± 5.98} & 11.93 ± 2.51 & 13.97 ± 2.68 \\
FeSEM      & 37.76 ± 1.27 & 40.33 ± 1.92 & 43.35 ± 2.04 & 11.57 ± 0.56 & 18.50 ± 4.07 & \textcolor{blue}{15.92 ± 0.40} & 15.91 ± 3.34 & 10.62 ± 2.23 & 12.31 ± 1.79 \\
CFL        & 43.73 ± 0.59 & 45.90 ± 0.55 & 46.98 ± 0.68 & 13.04 ± 1.82 & \textcolor{blue}{15.91 ± 4.00} & 14.72 ± 1.56 & 18.95 ± 5.76 & 11.94 ± 2.42 & 11.45 ± 2.66 \\
IFCA       & 43.19 ± 0.45 & 44.95 ± 0.68 & 45.69 ± 0.59 & 6.85 ± 2.79 & 0.99 ± 0.05 & 8.68 ± 4.58 & 9.43 ± 5.60 & 4.16 ± 0.74 & 6.79 ± 4.52 \\
pFedMe     & 18.15 ± 0.85 & 20.58 ± 1.55 & 22.31 ± 0.48 & 7.17 ± 1.20 & 9.66 ± 2.02 & 10.43 ± 0.20 & 8.49 ± 2.11 & 8.23 ± 1.32 & 8.71 ± 0.87 \\
APFL       & 37.54 ± 0.29 & 39.47 ± 1.43 & 40.66 ± 0.73 & 12.85 ± 0.59 & 15.70 ± 3.41 & 17.01 ± 0.70 & 16.73 ± 5.16 & \textcolor{blue}{12.96 ± 1.77} & \textcolor{blue}{14.38 ± 2.67} \\
FedDrift   & 21.36 ± 1.13 & 23.62 ± 4.06 & 33.93 ± 3.34 & 8.72 ± 2.36 & 13.81 ± 3.54 & 14.71 ± 1.15 & 10.16 ± 3.69 & 10.00 ± 2.13 & 11.87 ± 1.71 \\
ATP        & 11.77 ± 5.80 & 19.05 ± 3.56 & 14.29 ± 6.41 & 3.13 ± 1.04 & 3.47 ± 1.65 & 5.28 ± 1.49 & 2.96 ± 0.36 & 3.65 ± 0.85 & 5.56 ± 1.57 \\
\midrule
\textsc{Feroma}     & 39.83 ± 0.62 & 39.40 ± 0.79 & 39.37 ± 0.81 & \textbf{36.97 ± 0.97} & \textbf{36.73 ± 0.46} & \textbf{36.53 ± 0.74} & \textbf{32.74 ± 1.47} & \textbf{34.15 ± 0.44} & \textbf{33.99 ± 0.89} \\
\bottomrule
\end{tabular}
\vspace{0.2em}
\caption{
Performance comparison across three different non-IID Levels of $P(X)$ and three distribution drifting levels on the CIFAR-100 dataset.
}
\label{tab:main_appe_cifar100_px}
\end{table}

\begin{table}[htbp]
\centering
\tiny
\renewcommand{\arraystretch}{1.2}
\setlength{\tabcolsep}{4pt}
\begin{tabular}{lccccccccc}
\toprule
\textbf{Non-IID Level} & \multicolumn{3}{c}{\textbf{Low}} & \multicolumn{3}{c}{\textbf{Medium}} & \multicolumn{3}{c}{\textbf{High}} \\
\cmidrule(lr){2-4} \cmidrule(lr){5-7} \cmidrule(lr){8-10}
\textbf{\# Drifting} & \textbf{5 / 20} & \textbf{10 / 20} & \textbf{20 / 20} & \textbf{5 / 20} & \textbf{10 / 20} & \textbf{20 / 20} & \textbf{5 / 20} & \textbf{10 / 20} & \textbf{20 / 20} \\
\midrule
FedAvg     & 35.10 ± 7.62 & 35.82 ± 5.54 & \textcolor{blue}{41.28 ± 5.48} & \textcolor{blue}{36.65 ± 3.67} & \textcolor{blue}{40.69 ± 1.77} & 41.53 ± 3.70 & 33.17 ± 5.28 & 35.96 ± 3.40 & 37.42 ± 2.93 \\
FedRC      & 34.83 ± 5.04 & \textcolor{blue}{41.09 ± 6.33} & 38.67 ± 3.63 & 32.36 ± 5.43 & 36.92 ± 3.39 & 42.43 ± 3.39 & \textcolor{blue}{40.45 ± 4.58} & 39.98 ± 2.23 & \textcolor{blue}{42.35 ± 3.52} \\
FedEM      & \textcolor{blue}{36.03 ± 5.49} & 41.05 ± 6.33 & 38.40 ± 3.46 & 32.33 ± 5.33 & 36.80 ± 3.50 & 42.12 ± 3.38 & 40.32 ± 4.17 & \textcolor{blue}{41.86 ± 3.48} & 42.17 ± 3.29 \\
FeSEM      & 34.43 ± 7.45 & 35.61 ± 4.14 & 36.29 ± 3.80 & 29.09 ± 6.25 & 32.54 ± 3.23 & 30.65 ± 3.77 & 31.02 ± 4.26 & 32.24 ± 1.76 & 35.33 ± 2.50 \\
CFL        & 33.89 ± 6.66 & 37.97 ± 5.32 & 33.51 ± 4.11 & 29.53 ± 3.35 & 37.14 ± 4.07 & 39.35 ± 4.70 & 35.67 ± 5.18 & 35.24 ± 2.77 & 38.48 ± 2.87 \\
IFCA       & 11.28 ± 7.98 & 10.18 ± 2.18 & 12.97 ± 3.40 & 2.27 ± 1.28 & 8.85 ± 8.63 & 4.66 ± 1.67 & 2.45 ± 0.72 & 2.44 ± 0.36 & 2.65 ± 0.49 \\
pFedMe     & 16.63 ± 5.20 & 13.85 ± 3.22 & 16.55 ± 1.22 & 11.88 ± 2.94 & 16.49 ± 4.73 & 11.26 ± 3.07 & 13.61 ± 2.90 & 11.87 ± 3.06 & 11.59 ± 3.26 \\
APFL       & 35.11 ± 6.15 & 34.02 ± 1.18 & 34.25 ± 0.85 & 29.31 ± 4.51 & 31.13 ± 0.77 & 31.86 ± 4.09 & 30.90 ± 2.53 & 30.73 ± 2.87 & 34.72 ± 3.24 \\
FedDrift   & 24.21 ± 7.61 & 21.91 ± 4.63 & 24.89 ± 2.87 & 18.08 ± 3.98 & 23.51 ± 6.41 & 17.09 ± 4.13 & 18.05 ± 3.02 & 17.42 ± 2.72 & 22.81 ± 4.11 \\
ATP        & 15.65 ± 4.07 & 36.75 ± 6.32 & 40.72 ± 4.09 & 15.68 ± 5.18 & 36.71 ± 5.50 & \textcolor{blue}{42.90 ± 6.47} & 18.53 ± 10.89 & 35.31 ± 4.91 & 40.81 ± 4.82 \\
\midrule
\textsc{Feroma}     & \textbf{44.65 ± 4.42} & \textbf{51.03 ± 6.51} & \textbf{51.90 ± 2.13} & \textbf{48.43 ± 3.40} & \textbf{48.51 ± 2.59} & \textbf{52.90 ± 4.83} & \textbf{40.82 ± 5.57} & \textbf{47.56 ± 6.77} & \textbf{45.91 ± 3.54} \\
\bottomrule
\end{tabular}
\vspace{0.2em}
\caption{
Performance comparison across three different non-IID Levels of $P(Y)$ and three distribution drifting levels on the CIFAR-100 dataset.
}
\label{tab:main_appe_cifar100_py}
\end{table}

\begin{table}[htbp]
\centering
\tiny
\renewcommand{\arraystretch}{1.2}
\setlength{\tabcolsep}{4pt}
\begin{tabular}{lccccccccc}
\toprule
\textbf{Non-IID Level} & \multicolumn{3}{c}{\textbf{Low}} & \multicolumn{3}{c}{\textbf{Medium}} & \multicolumn{3}{c}{\textbf{High}} \\
\cmidrule(lr){2-4} \cmidrule(lr){5-7} \cmidrule(lr){8-10}
\textbf{\# Drifting} & \textbf{5 / 20} & \textbf{10 / 20} & \textbf{20 / 20} & \textbf{5 / 20} & \textbf{10 / 20} & \textbf{20 / 20} & \textbf{5 / 20} & \textbf{10 / 20} & \textbf{20 / 20} \\
\midrule
FedAvg     & 37.69 ± 0.81 & 37.84 ± 0.79 & 38.16 ± 0.98 & 27.39 ± 1.10 & 27.44 ± 1.26 & 27.44 ± 1.20 & 15.83 ± 0.49 & 15.75 ± 0.31 & 15.90 ± 0.40 \\
FedRC      & \textcolor{blue}{38.94 ± 0.80} & 38.98 ± 0.96 & 39.23 ± 0.94 & 28.14 ± 1.12 & 28.14 ± 1.15 & 28.16 ± 1.33 & 16.24 ± 0.41 & 16.19 ± 0.40 & 16.09 ± 0.33 \\
FedEM      & \textcolor{blue}{38.94 ± 0.80} & 38.98 ± 0.96 & 39.23 ± 0.94 & 28.14 ± 1.12 & 28.14 ± 1.15 & 28.16 ± 1.33 & 16.24 ± 0.41 & 16.19 ± 0.40 & 16.09 ± 0.33 \\
FeSEM      & 35.99 ± 0.89 & 34.00 ± 1.04 & 33.39 ± 1.35 & 26.45 ± 1.11 & 26.23 ± 1.16 & 25.84 ± 1.30 & 16.00 ± 0.58 & 16.61 ± 0.55 & 16.47 ± 0.51 \\
CFL        & 38.28 ± 0.69 & 38.61 ± 0.84 & 38.57 ± 0.68 & 27.78 ± 1.13 & 27.69 ± 1.21 & 27.63 ± 1.31 & 16.03 ± 0.42 & 15.92 ± 0.39 & 15.90 ± 0.41 \\
IFCA       & 38.17 ± 0.74 & 38.36 ± 0.87 & 38.54 ± 0.83 & 27.78 ± 1.25 & 27.62 ± 1.30 & 27.83 ± 1.31 & 0.97 ± 0.05 & 1.00 ± 0.06 & 0.97 ± 0.09 \\
pFedMe     & 24.81 ± 0.19 & 24.88 ± 0.17 & 24.41 ± 0.24 & 24.28 ± 0.26 & 24.73 ± 0.25 & 23.89 ± 0.24 & 24.34 ± 0.65 & 24.32 ± 0.27 & 22.34 ± 0.14 \\
APFL       & \textbf{41.52 ± 0.19} & \textbf{42.07 ± 0.85} & \textbf{42.25 ± 0.48} & \textcolor{blue}{35.44 ± 0.49} & \textbf{35.79 ± 0.21} & \textbf{36.23 ± 0.28} & \textcolor{blue}{29.43 ± 0.37} & \textcolor{blue}{29.85 ± 0.38} & \textcolor{blue}{29.88 ± 0.24} \\
FedDrift   & 28.70 ± 0.09 & 26.42 ± 1.03 & 36.93 ± 1.26 & 26.56 ± 0.32 & 25.76 ± 0.35 & 26.87 ± 0.88 & 25.08 ± 0.32 & 19.23 ± 0.88 & 15.98 ± 0.60 \\
ATP        & 19.63 ± 1.67 & 25.84 ± 1.81 & 20.18 ± 3.30 & 17.03 ± 2.41 & 19.95 ± 2.22 & 20.22 ± 1.81 & 12.41 ± 0.66 & 12.16 ± 0.53 & 15.54 ± 0.79 \\
\midrule
\textsc{Feroma}     & 38.88 ± 0.28 & \textcolor{blue}{39.62 ± 0.50} & \textcolor{blue}{39.36 ± 0.30} & \textbf{35.63 ± 0.30} & \textcolor{blue}{35.71 ± 0.29} & \textcolor{blue}{35.72 ± 0.08} & \textbf{31.75 ± 0.14 }& \textbf{32.46 ± 0.27} & \textbf{32.15 ± 0.25} \\
\bottomrule
\end{tabular}
\vspace{0.2em}
\caption{
Performance comparison across three different non-IID Levels of $P(Y|X)$ and three distribution drifting levels on the CIFAR-100 dataset.
}
\label{tab:main_appe_cifar100_pyx}
\end{table}

\begin{table}[htbp]
\centering
\tiny
\renewcommand{\arraystretch}{1.2}
\setlength{\tabcolsep}{4pt}
\begin{tabular}{lccccccccc}
\toprule
\textbf{Non-IID Level} & \multicolumn{3}{c}{\textbf{Low}} & \multicolumn{3}{c}{\textbf{Medium}} & \multicolumn{3}{c}{\textbf{High}} \\
\cmidrule(lr){2-4} \cmidrule(lr){5-7} \cmidrule(lr){8-10}
\textbf{\# Drifting} & \textbf{5 / 20} & \textbf{10 / 20} & \textbf{20 / 20} & \textbf{5 / 20} & \textbf{10 / 20} & \textbf{20 / 20} & \textbf{5 / 20} & \textbf{10 / 20} & \textbf{20 / 20} \\
\midrule
FedAvg     & 33.07 ± 7.39 & 29.13 ± 8.76 & 28.00 ± 5.13 & 23.18 ± 7.07 & 28.56 ± 8.56 & 21.69 ± 1.68 & 25.20 ± 8.82 & 24.30 ± 6.84 & 27.08 ± 4.87 \\
FedRC      & 25.46 ± 2.53 & 35.48 ± 11.24 & 29.95 ± 5.14 & 26.94 ± 2.37 & \textcolor{blue}{32.09 ± 3.38} & 30.88 ± 9.80 & 31.46 ± 7.76 & 27.68 ± 0.91 & \textcolor{blue}{30.58 ± 7.60} \\
FedEM      & \textbf{45.53 ± 3.56} & \textcolor{blue}{37.30 ± 6.95} & \textcolor{blue}{38.41 ± 2.37} & \textcolor{blue}{37.70 ± 6.61} & 31.42 ± 6.46 & \textcolor{blue}{33.02 ± 5.90} & \textcolor{blue}{26.20 ± 4.04} & \textcolor{blue}{29.08 ± 2.66} & 29.85 ± 2.40 \\
FeSEM      & 27.20 ± 5.57 & 29.57 ± 6.20 & 24.66 ± 4.30 & 22.86 ± 9.39 & 21.38 ± 3.72 & 23.60 ± 5.06 & 18.62 ± 4.58 & 21.19 ± 3.14 & 19.88 ± 2.59 \\
CFL        & 28.45 ± 2.88 & 29.24 ± 6.12 & 34.74 ± 4.86 & 33.92 ± 4.22 & 29.60 ± 7.61 & 25.33 ± 2.70 & 23.74 ± 4.10 & 22.52 ± 3.53 & 24.20 ± 1.50 \\
IFCA       & 10.90 ± 7.53 & 23.22 ± 13.20 & 22.02 ± 2.80 & 22.78 ± 8.67 & 14.63 ± 6.90 & 17.00 ± 7.77 & 7.53 ± 6.56 & 7.76 ± 4.82 & 7.68 ± 6.20 \\
pFedMe     & 17.02 ± 2.99 & 11.58 ± 3.07 & 12.74 ± 0.32 & 10.83 ± 0.79 & 14.38 ± 4.82 & 10.37 ± 1.07 & 11.87 ± 1.52 & 10.72 ± 1.21 & 14.19 ± 0.71 \\
APFL       & 27.51 ± 7.08 & 29.31 ± 6.17 & 27.94 ± 3.97 & 30.56 ± 5.32 & 29.41 ± 2.92 & 30.33 ± 3.73 & 20.23 ± 2.38 & 25.26 ± 5.16 & 26.20 ± 2.74 \\
FedDrift   & 22.58 ± 3.57 & 19.13 ± 6.14 & 24.14 ± 0.75 & 15.26 ± 6.22 & 19.76 ± 3.65 & 12.51 ± 3.63 & 13.61 ± 2.02 & 14.84 ± 4.35 & 19.21 ± 4.50 \\
ATP        & 6.38 ± 1.63 & 11.15 ± 1.96 & 12.95 ± 2.72 & 7.01 ± 0.73 & 9.56 ± 0.81 & 12.39 ± 1.99 & 5.55 ± 2.03 & 8.40 ± 2.85 & 13.70 ± 4.16 \\
\midrule
\textsc{Feroma}     & \textcolor{blue}{40.24 ± 1.79 }& \textbf{40.64 ± 1.00} & \textbf{40.41 ± 2.24} & \textbf{39.53 ± 1.31} & \textbf{38.26 ± 1.94} & \textbf{39.73 ± 1.34} & \textbf{37.92 ± 0.86} & \textbf{39.25 ± 1.08} & \textbf{37.99 ± 0.87} \\
\bottomrule
\end{tabular}
\vspace{0.2em}
\caption{
Performance comparison across three different non-IID Levels of $P(X|Y)$ and three distribution drifting levels on the CIFAR-100 dataset.
}
\label{tab:main_appe_cifar100_pxy}
\end{table}

\subsection{Scaling the number of clients} \label{app.exp_scale}

We evaluate the scalability and efficiency of the proposed approach across varying numbers of clients on MNIST dataset, with the non-IID Level Medium (see \autoref{app.main_exp} for more details).
To ensure each client has sufficient local training data, we adjust the dataset size based on the total number of clients.
For 10 clients, we reduce the dataset to half the size used in the main experiments (\autoref{app.main_exp}); for 20 clients, we retain the same dataset size.
For larger scales, we increase the dataset size via duplication: duplicating the dataset once for 50 clients, and twice for 100 clients.

Tables~\ref{tab:n_client_all} to Table~\ref{tab:n_client_pxy} provide detailed results for \textsc{Flux} and baseline methods, evaluated across increasing numbers of clients and various distribution shift types and levels.
We could not evaluate the performance of FedDrift under 50 and 100 clients due to prohibitive memory and computational costs (see Appendix \ref{app.co.li} for details).
For the baselines that do not provide a solution for test-only clients,
we weight all models by the number of clients in the cluster, and use the expectation that weights all model outputs as an estimation of the predicted labels.

\begin{table}[htbp]
\centering
\scriptsize
\renewcommand{\arraystretch}{1.2}
\setlength{\tabcolsep}{4pt}
\begin{tabular}{lcccccccc}
\toprule
\textbf{\# Clients} & \multicolumn{2}{c}{\textbf{10 Clients}} & \multicolumn{2}{c}{\textbf{20 Clients}} & \multicolumn{2}{c}{\textbf{50 Clients}} & \multicolumn{2}{c}{\textbf{100 Clients}} \\
\cmidrule(lr){2-3} \cmidrule(lr){4-5} \cmidrule(lr){6-7} \cmidrule(lr){8-9}
\textbf{Algorithm} & \textbf{Accuracy} & \textbf{Time} & \textbf{Accuracy} & \textbf{Time} & \textbf{Accuracy} & \textbf{Time} & \textbf{Accuracy} & \textbf{Time} \\
\midrule
FedAvg     & 74.85 ± 5.71 & \textbf{3.77} & 73.98 ± 3.04 & \textbf{5.91} & 74.00 ± 3.33 & \textbf{7.16} & 73.27 ± 3.84 & \textbf{8.28} \\
FedRC      & 29.84 ± 7.47 & 8.63 & 29.44 ± 5.35 & 9.83 & 25.02 ± 3.94 & 11.36 & 24.06 ± 3.44 & 12.38 \\
FedEM      & 32.01 ± 8.56 & 8.53 & 26.78 ± 4.58 & 9.27 & 25.95 ± 4.28 & 11.27 & 24.77 ± 3.03 & 12.11 \\
FeSEM      & 73.02 ± 6.63 & 7.11 & 72.58 ± 4.65 & 7.65 & 73.11 ± 3.93 & 9.12 & 74.30 ± 3.36 & 9.93 \\
CFL        & \textcolor{blue}{77.17 ± 6.96} & 6.32 & \textcolor{blue}{78.22 ± 4.21} & 6.57 &\textcolor{blue}{ 76.92 ± 4.18} & 8.52 & \textcolor{blue}{76.25 ± 2.88} & 8.74 \\
IFCA       & 37.16 ± 10.44 & 6.77 & 45.99 ± 9.90 & 8.14 & 48.40 ± 6.12 & 9.66 & 43.15 ± 5.22 & 10.66 \\
pFedMe     & 55.20 ± 12.16 & 5.65 & 55.36 ± 8.79 & 6.60 & 54.40 ± 3.48 & 7.71 & 49.77 ± 5.08 & 9.01 \\
APFL       & 71.43 ± 8.02 & 7.40 & 72.24 ± 5.31 & 8.38 & 72.99 ± 4.18 & 10.05 & 68.39 ± 4.96 & 10.23 \\
FedDrift   & 63.34 ± 11.52 & 8.74 & 57.88 ± 9.40 & 9.56 & N/A & N/A & N/A & N/A \\
ATP        & 71.39 ± 8.95 & 4.95 & 74.83 ± 4.37 & 6.51 & 72.46 ± 4.55 & 7.74 & 71.21 ± 5.45 & 8.68 \\
\midrule
\textsc{Feroma} & \textbf{91.27 ± 0.67} & \textcolor{blue}{4.84} & \textbf{90.17 ± 1.96} & \textcolor{blue}{6.43} & \textbf{90.16 ± 1.98} & \textcolor{blue}{7.67} & \textbf{87.06 ± 2.70} & \textcolor{blue}{8.61} \\
\bottomrule
\end{tabular}
\vspace{0.2em}
\caption{
Performance comparison across the number of clients in 10, 20, 50, and 100, summarizing all four types of heterogeneity ($P(X)$, $P(Y)$, $P(Y|X)$, $P(X|Y)$) on the MNIST dataset. Time is reported in \(\log_2\) seconds.
}
\label{tab:n_client_all}
\end{table}

\begin{table}[htbp]
\centering
\scriptsize
\renewcommand{\arraystretch}{1.2}
\setlength{\tabcolsep}{4pt}
\begin{tabular}{lcccccccc}
\toprule
\textbf{\# Clients} & \multicolumn{2}{c}{\textbf{10 Clients}} & \multicolumn{2}{c}{\textbf{20 Clients}} & \multicolumn{2}{c}{\textbf{50 Clients}} & \multicolumn{2}{c}{\textbf{100 Clients}} \\
\cmidrule(lr){2-3} \cmidrule(lr){4-5} \cmidrule(lr){6-7} \cmidrule(lr){8-9}
\textbf{Algorithm} & \textbf{Accuracy} & \textbf{Time} & \textbf{Accuracy} & \textbf{Time} & \textbf{Accuracy} & \textbf{Time} & \textbf{Accuracy} & \textbf{Time} \\
\midrule
FedAvg     & 74.20 ± 3.72 & \textbf{3.54} & 74.64 ± 1.50 & \textbf{5.74} & 74.50 ± 3.06 & \textbf{7.15} & 71.05 ± 3.41 & \textbf{8.16} \\
FedRC      & 11.44 ± 0.18 & 8.34 & 11.35 ± 0.00 & 9.60 & 11.35 ± 0.00 & 10.95 & 11.35 ± 0.00 & 11.43 \\
FedEM      & 11.44 ± 0.18 & 8.07 & 11.35 ± 0.00 & 8.80 & 11.35 ± 0.00 & 10.13 & 11.35 ± 0.00 & 11.49 \\
FeSEM      & 69.43 ± 3.59 & 6.90 & 72.98 ± 4.49 & 7.30 & 74.54 ± 1.80 & 8.50 & 75.48 ± 1.34 & 9.38 \\
CFL        & \textcolor{blue}{79.66 ± 3.93} & 6.47 & \textcolor{blue}{81.08 ± 2.36} & 6.38 & \textcolor{blue}{78.73 ± 3.18} & 8.40 & \textcolor{blue}{77.04 ± 1.24} & \textcolor{blue}{8.52} \\
IFCA       & 22.70 ± 6.75 & 6.51 & 39.79 ± 4.79 & 7.56 & 47.86 ± 9.25 & 9.43 & 42.23 ± 3.40 & 10.15 \\
pFedMe     & 52.90 ± 1.71 & 5.57 & 46.24 ± 6.37 & 6.66 & 50.62 ± 3.74 & \textcolor{blue}{7.63} & 47.22 ± 1.47 & 9.06 \\
APFL       & 72.89 ± 2.14 & 7.01 & 71.85 ± 3.77 & 7.94 & 74.07 ± 0.96 & 9.97 & 72.40 ± 1.04 & 10.12 \\
FedDrift   & 60.13 ± 5.80 & 8.74 & 46.86 ± 8.80 & 9.55 & N/A & N/A & N/A & N/A \\
ATP        & 71.08 ± 7.72 & \textcolor{blue}{3.95} & 75.81 ± 2.38 & 6.43 & 76.44 ± 0.56 & 7.79 & 74.96 ± 1.54 & 8.81 \\
\midrule
\textsc{Feroma} & \textbf{87.51 ± 1.19} & 4.00 & \textbf{86.45 ± 1.22} & \textcolor{blue}{6.32 }& \textbf{87.69 ± 0.91} & 7.88 & \textbf{83.09 ± 1.29} & 8.87 \\
\bottomrule
\end{tabular}
\vspace{0.2em}
\caption{
Performance comparison across heterogeneity type $P(X)$ on the MNIST dataset. Time is reported in \(\log_2\) seconds.
}
\label{tab:n_client_px}
\end{table}

\begin{table}[htbp]
\centering
\scriptsize
\renewcommand{\arraystretch}{1.2}
\setlength{\tabcolsep}{4pt}
\begin{tabular}{lcccccccc}
\toprule
\textbf{\# Clients} & \multicolumn{2}{c}{\textbf{10 Clients}} & \multicolumn{2}{c}{\textbf{20 Clients}} & \multicolumn{2}{c}{\textbf{50 Clients}} & \multicolumn{2}{c}{\textbf{100 Clients}} \\
\cmidrule(lr){2-3} \cmidrule(lr){4-5} \cmidrule(lr){6-7} \cmidrule(lr){8-9}
\textbf{Algorithm} & \textbf{Accuracy} & \textbf{Time} & \textbf{Accuracy} & \textbf{Time} & \textbf{Accuracy} & \textbf{Time} & \textbf{Accuracy} & \textbf{Time} \\
\midrule
FedAvg     & \textcolor{blue}{85.98 ± 6.45} & \textbf{4.34} & 84.82 ± 4.58 & \textbf{6.19} & 80.64 ± 3.01 & \textbf{7.30} & \textcolor{blue}{84.38 ± 6.68} & \textbf{8.41} \\
FedRC      & 68.89 ± 10.89 & 8.88 & 75.08 ± 5.67 & 10.32 & 63.79 ± 5.65 & 11.52 & 60.82 ± 6.26 & 12.25 \\
FedEM      & 72.31 ± 9.84 & 8.85 & 66.08 ± 5.81 & 9.42 & 67.11 ± 5.95 & 11.19 & 62.14 ± 4.71 & 11.87 \\
FeSEM      & 78.62 ± 6.58 & 7.44 & 72.37 ± 3.53 & 7.97 & 75.11 ± 7.31 & 9.19 & 77.53 ± 5.33 & 9.88 \\
CFL        & 81.56 ± 12.51 & 6.89 & 85.21 ± 5.86 & 7.03 & \textcolor{blue}{81.90 ± 6.52} & 8.96 & 82.39 ± 5.34 & 9.05 \\
IFCA       & 36.77 ± 17.81 & 7.10 & 35.27 ± 10.01 & 8.20 & 37.22 ± 3.98 & 10.05 & 26.81 ± 6.73 & 10.74 \\
pFedMe     & 38.59 ± 18.23 & 6.06 & 39.32 ± 13.26 & 7.00 & 34.31 ± 4.61 & 8.10 & 25.16 ± 7.59 & 9.53 \\
APFL       & 53.90 ± 13.88 & 7.68 & 56.56 ± 8.56 & 8.70 & 55.12 ± 6.09 & 10.60 & 47.32 ± 8.51 & 10.60 \\
FedDrift   & 45.60 ± 16.15 & 8.87 & 43.46 ± 14.96 & 9.69 & N/A & N/A & N/A & N/A \\
ATP        & 80.42 ± 10.16 & 4.93 & \textcolor{blue}{86.25 ± 7.72} & 6.62 & 81.29 ± 8.67 & 7.91 & 74.85 ± 8.51 & 8.79 \\
\midrule
\textsc{Feroma} & \textbf{99.12 ± 0.37} & \textcolor{blue}{4.84} & \textbf{95.75 ± 3.62} & \textcolor{blue}{6.54} & \textbf{94.77 ± 3.74} & \textcolor{blue}{7.73} & \textbf{89.01 ± 5.03} & \textcolor{blue}{8.71} \\
\bottomrule
\end{tabular}
\vspace{0.2em}
\caption{
Performance comparison across heterogeneity type $P(Y)$ on the MNIST dataset. Time is reported in \(\log_2\) seconds.
}
\label{tab:n_client_py}
\end{table}

\begin{table}[htbp]
\centering
\scriptsize
\renewcommand{\arraystretch}{1.2}
\setlength{\tabcolsep}{4pt}
\begin{tabular}{lcccccccc}
\toprule
\textbf{\# Clients} & \multicolumn{2}{c}{\textbf{10 Clients}} & \multicolumn{2}{c}{\textbf{20 Clients}} & \multicolumn{2}{c}{\textbf{50 Clients}} & \multicolumn{2}{c}{\textbf{100 Clients}} \\
\cmidrule(lr){2-3} \cmidrule(lr){4-5} \cmidrule(lr){6-7} \cmidrule(lr){8-9}
\textbf{Algorithm} & \textbf{Accuracy} & \textbf{Time} & \textbf{Accuracy} & \textbf{Time} & \textbf{Accuracy} & \textbf{Time} & \textbf{Accuracy} & \textbf{Time} \\
\midrule
FedAvg     & 66.16 ± 1.87 & \textbf{3.45} & 65.75 ± 1.63 & \textbf{5.88} & 63.30 ± 0.91 & \textbf{7.20} & 62.92 ± 0.69 & \textcolor{blue}{8.37} \\
FedRC      & 13.84 ± 3.24 & 8.98 & 12.91 ± 2.29 & 9.87 & 10.50 ± 0.42 & 11.90 & 11.48 ± 1.49 & 13.45 \\
FedEM      & 13.84 ± 3.24 & 8.85 & 12.91 ± 2.29 & 9.81 & 10.50 ± 0.42 & 12.22 & 11.49 ± 1.49 & 13.11 \\
FeSEM      & 81.24 ± 2.17 & 7.17 & 75.43 ± 1.55 & 7.90 & 73.89 ± 1.31 & 9.84 & 70.97 ± 0.67 & 10.67 \\
CFL        & 68.73 ± 1.90 & 5.35 & 67.26 ± 1.23 & 6.25 & 65.71 ± 0.75 & 8.14 & 64.87 ± 0.83 & 8.72 \\
IFCA       & 69.66 ± 1.58 & 6.90 & 69.70 ± 4.02 & 8.84 & 70.14 ± 4.59 & 9.95 & 69.12 ± 5.36 & 11.32 \\
pFedMe     & 91.31 ± 0.32 & \textcolor{blue}{5.32} & 90.95 ± 0.47 & \textcolor{blue}{6.17} & \textcolor{blue}{90.01 ± 0.36} & \textcolor{blue}{7.38} & \textcolor{blue}{88.39 ± 0.46} & \textbf{8.32} \\
APFL       & \textcolor{blue}{91.85 ± 0.47} & 7.64 & \textcolor{blue}{91.61 ± 0.68} & 8.36 & \textbf{91.05 ± 0.35} & 9.46 & \textbf{90.09 ± 0.62} & 9.99 \\
FedDrift   & \textbf{95.56 ± 1.48} & 8.53 &\textbf{ 96.59 ± 0.26} & 9.41 & N/A & N/A & N/A & N/A \\
ATP        & 66.40 ± 1.23 & 5.39 & 64.67 ± 2.82 & 6.28 & 63.24 ± 1.45 & 7.50 & 61.57 ± 1.53 & 8.56 \\
\midrule
\textsc{Feroma} & 88.81 ± 0.40 & 5.74 & 88.59 ± 0.62 & 6.24 & 88.40 ± 0.47 & 7.44 & 88.10 ± 0.57 & 8.44 \\
\bottomrule
\end{tabular}
\vspace{0.2em}
\caption{
Performance comparison across heterogeneity type $P(Y|X)$ on the MNIST dataset. Time is reported in \(\log_2\) seconds.
}
\label{tab:n_client_pyx}
\end{table}

\begin{table}[htbp]
\centering
\scriptsize
\renewcommand{\arraystretch}{1.2}
\setlength{\tabcolsep}{4pt}
\begin{tabular}{lcccccccc}
\toprule
\textbf{\# Clients} & \multicolumn{2}{c}{\textbf{10 Clients}} & \multicolumn{2}{c}{\textbf{20 Clients}} & \multicolumn{2}{c}{\textbf{50 Clients}} & \multicolumn{2}{c}{\textbf{100 Clients}} \\
\cmidrule(lr){2-3} \cmidrule(lr){4-5} \cmidrule(lr){6-7} \cmidrule(lr){8-9}
\textbf{Algorithm} & \textbf{Accuracy} & \textbf{Time} & \textbf{Accuracy} & \textbf{Time} & \textbf{Accuracy} & \textbf{Time} & \textbf{Accuracy} & \textbf{Time} \\
\midrule
FedAvg     & 73.04 ± 8.44 & \textbf{3.57} & 70.71 ± 3.32 & \textbf{5.76} & 77.57 ± 5.02 & \textbf{6.98 }& 74.72 ± 1.54 & \textbf{8.13} \\
FedRC      & 25.20 ± 9.70 & 8.13 & 18.43 ± 8.77 & 9.34 & 14.43 ± 5.46 & 10.80 & 12.57 ± 2.44 & 11.29 \\
FedEM      & 30.45 ± 13.63 & 8.16 & 16.79 ± 6.72 & 8.77 & 14.85 ± 6.14 & 10.71 & 14.10 ± 3.52 & 11.10 \\
FeSEM      & 62.78 ± 10.71 & 6.84 & 69.53 ± 7.16 & 7.26 & 68.90 ± 1.84 & 8.53 & 73.22 ± 3.80 & 9.39 \\
CFL        & \textcolor{blue}{78.74 ± 4.27} & 6.16 & \textcolor{blue}{79.31 ± 5.43} & 6.49 & \textcolor{blue}{81.35 ± 4.08} & 8.46 & \textcolor{blue}{80.69 ± 1.53} & 8.64 \\
IFCA       & 19.49 ± 8.37 & 6.47 & 39.20 ± 15.89 & 7.55 & 38.37 ± 5.22 & 8.98 & 34.44 ± 4.84 & 10.04 \\
pFedMe     & 38.00 ± 16.00 & 5.57 & 44.92 ± 9.61 & \textcolor{blue}{6.44} & 42.67 ± 3.61 & 7.64 & 38.31 ± 6.57 & 8.89 \\
APFL       & 67.09 ± 7.75 & 7.16 & 68.95 ± 4.98 & 8.41 & 71.71 ± 5.63 & 9.94 & 63.73 ± 4.93 & 10.12 \\
FedDrift   & 52.08 ± 15.30 & 8.79 & 44.62 ± 7.23 & 9.58 & N/A & N/A & N/A & N/A \\
ATP        & 67.67 ± 12.49 & 5.17 & 72.62 ± 1.79 & 6.67 & 68.87 ± 2.28 & 7.73 & 73.47 ± 6.46 & 8.53 \\
\midrule
\textsc{Feroma} & \textbf{89.64 ± 0.29} & \textcolor{blue}{4.05 }& \textbf{89.90 ± 0.63} & 6.60 & \textbf{89.77 ± 0.82} & \textcolor{blue}{7.60} & \textbf{88.04 ± 1.41} & \textcolor{blue}{8.37} \\
\bottomrule
\end{tabular}
\vspace{0.2em}
\caption{
Performance comparison across heterogeneity type $P(X|Y)$ on the MNIST dataset. Time is reported in \(\log_2\) seconds.
}
\label{tab:n_client_pxy}
\end{table}

\subsection{Results on real-world datasets} \label{app.chexpert}

Following the setup in \autoref{app.anda}, we compare the performance of \textsc{Feroma} against baseline methods on both the CheXpert and Office-Home datasets (baseline ATP cannot be adapted to the multi-label classification task of CheXpert dataset).
As shown in \autoref{tab:main_chex} and \autoref{tab:main_office},
\textsc{Feroma} consistently achieves top-tier performance across various settings of non-IID severity and degrees of distribution drift. Specifically, it either outperforms or closely matches the best-performing methods in nearly all configurations. While personalized methods such as ATP and APFL occasionally yield strong results—particularly in low-drift scenarios, their performance tends to degrade under increasing drift levels, indicating overfitting to local data. In contrast, \textsc{Feroma} maintains stable and high performance even in highly drifted or non-IID environments, demonstrating strong robustness and generalization without relying on explicit personalization.

\begin{table}[htbp]
\centering
\tiny
\renewcommand{\arraystretch}{1.2}
\setlength{\tabcolsep}{4pt}
\begin{tabular}{lccccccccc}
\toprule
\textbf{Non-IID Level} & \multicolumn{3}{c}{\textbf{Low}} & \multicolumn{3}{c}{\textbf{Medium}} & \multicolumn{3}{c}{\textbf{High}} \\
\cmidrule(lr){2-4} \cmidrule(lr){5-7} \cmidrule(lr){8-10}
\textbf{\# Drifting} & \textbf{5 / 20} & \textbf{10 / 20} & \textbf{20 / 20} & \textbf{5 / 20} & \textbf{10 / 20} & \textbf{20 / 20} & \textbf{5 / 20} & \textbf{10 / 20} & \textbf{20 / 20} \\
\midrule
FedAvg     & 59.50 ± 1.04 & 62.07 ± 6.04 & 57.68 ± 2.05 & 60.91 ± 1.19 & 59.34 ± 5.33 & 61.26 ± 1.25 & 57.38 ± 1.17 & 56.76 ± 2.00 & 56.62 ± 1.31 \\
FedRC      & 57.07 ± 3.12 & 58.80 ± 0.15 & 56.77 ± 0.19 & 54.38 ± 0.32 & 55.23 ± 2.43 & 55.79 ± 3.43 & 52.76 ± 1.24 & 53.55 ± 1.54 & 51.46 ± 1.23 \\
FedEM      & 56.31 ± 1.27 & 55.12 ± 0.24 & 56.54 ± 3.42 & 53.58 ± 0.32 & 53.45 ± 2.34 & 52.44 ± 1.23 & 50.45 ± 2.34 & 51.34 ± 3.34 & 50.44 ± 2.32 \\
FeSEM      & 64.09 ± 3.63 & 63.76 ± 0.05 & 64.45 ± 0.18 & 59.33 ± 0.29 & 60.76 ± 0.18 & 61.32 ± 0.14 & 59.23 ± 2.43 & 58.55 ± 1.35 & 57.54 ± 3.44 \\
CFL        & 65.09 ± 5.21 & 64.85 ± 0.07 & 63.73 ± 0.24 & 62.34 ± 0.29 & 62.21 ± 0.22 & 60.31 ± 0.16 & 60.44 ± 0.45 & 61.43 ± 0.83 & 60.43 ± 3.91 \\
IFCA       & 55.94 ± 0.74 & 55.12 ± 1.32 & 54.31 ± 4.35 & 52.54 ± 1.54 & 52.00 ± 2.43 & 51.33 ± 2.78 & 51.14 ± 0.92 & 50.88 ± 3.44 & 51.54 ± 4.53 \\
pFedMe     & 61.70 ± 0.64 & 61.12 ± 2.78 & 58.68 ± 1.46 & 59.92 ± 0.31 & 57.56 ± 0.14 & 57.88 ± 0.11 & 55.54 ± 0.04 & 55.48 ± 0.03 & 56.07 ± 0.28 \\
APFL       & 56.35 ± 1.23 & 56.90 ± 0.06 & 55.90 ± 0.18 & 54.78 ± 1.72 & 54.83 ± 0.53 & 53.55 ± 1.35 & 53.78 ± 0.28 & 52.83 ± 0.12 & 51.00 ± 1.02 \\
FedDrift   & \textcolor{blue}{75.79 ± 0.24} & \textcolor{blue}{74.86 ± 0.20} & \textcolor{blue}{74.03 ± 0.60} & \textcolor{blue}{71.70 ± 0.64} & \textbf{72.02 ± 0.48} & \textbf{71.91 ± 0.20} & \textbf{70.47 ± 0.94} & \textbf{70.48 ± 1.59} & \textbf{69.11 ± 1.31} \\
ATP        & N/A & N/A &N/A & N/A &N/A & N/A & N/A & N/A & N/A \\
\midrule
\textsc{Feroma} & \textbf{76.32 ± 0.05} & \textbf{75.99 ± 0.14} & \textbf{76.64 ± 0.72} & \textbf{72.16 ± 0.67} & \textcolor{blue}{71.96 ± 0.15} & \textcolor{blue}{71.82 ± 0.69} & \textcolor{blue}{69.35 ± 0.85} & \textcolor{blue}{68.90 ± 0.70} & \textcolor{blue}{68.59 ± 0.75} \\
\bottomrule
\end{tabular}
\vspace{0.2em}
\caption{
Performance comparison across three different non-IID Levels and three distribution drifting levels on the CheXpert dataset.
}
\label{tab:main_chex}
\end{table}

\begin{table}[htbp]
\centering
\tiny
\renewcommand{\arraystretch}{1.2}
\setlength{\tabcolsep}{4pt}
\begin{tabular}{lccc}
\toprule
\textbf{\# Drifting} & \textbf{5 / 20} & \textbf{10 / 20} & \textbf{20 / 20} \\
\midrule
FedAvg     & 41.83 ± 1.44 & 40.11 ± 1.42 & 41.02 ± 0.89 \\
FedRC      & 13.83 ± 5.30 & 13.32 ± 1.44 & 16.63 ± 2.50 \\
FedEM      & 14.32 ± 4.30 & 14.86 ± 0.50 & 17.43 ± 2.50 \\
FeSEM      & 35.47 ± 0.17 & 33.60 ± 0.86 & 32.40 ± 1.31 \\
CFL        & 36.27 ± 0.33 & 33.93 ± 0.77 & 34.33 ± 1.60 \\
IFCA       & 40.40 ± 4.54 & 37.17 ± 2.12 & 22.83 ± 5.90 \\
pFedMe     & 37.70 ± 1.04 & 34.53 ± 2.05 & 31.70 ± 1.53 \\
APFL       & \textbf{44.19 ± 2.98} & 39.99 ± 0.50 & 34.08 ± 2.20 \\
FedDrift   & 42.08 ± 3.94 &\textcolor{blue}{ 42.34 ± 0.43} & \textbf{41.94 ± 1.12} \\
ATP        & 41.90 ± 3.13 & 40.88 ± 4.29 & 39.55 ± 5.26 \\
\midrule
\textsc{Feroma} & \textcolor{blue}{43.43 ± 0.86} & \textbf{42.75 ± 1.21} & \textcolor{blue}{41.11 ± 1.95} \\
\bottomrule
\end{tabular}
\vspace{0.2em}
\caption{
Performance comparison across three distribution drifting levels on the Office-Home dataset.
}
\label{tab:main_office}
\end{table}

\subsection{Comparison with non-FL test-time adaptation approach}\label{app.cola}

In this section, we compare \textsc{Feroma} with CoLA~\citep{cola}, which performs collaborative test-time adaptation by sharing compact domain-knowledge vectors across devices and updating each client using lightweight similarity-based or reprogramming-based adjustments to handle domain shifts during inference. Despite that CoLA is not originally proposed for FL, we include it as a strong baseline as its design aligns with \textsc{Feroma}’s ability to handle test-time distribution changes.

\begin{table}[htbp]
\centering
\tiny
\renewcommand{\arraystretch}{1.2}
\setlength{\tabcolsep}{4pt}
\begin{tabular}{lccccccccc}
\toprule
\textbf{Non-IID Level} & \multicolumn{3}{c}{\textbf{Low}} & \multicolumn{3}{c}{\textbf{Medium}} & \multicolumn{3}{c}{\textbf{High}} \\
\cmidrule(lr){2-4} \cmidrule(lr){5-7} \cmidrule(lr){8-10}
\textbf{\# Drifting} & \textbf{5 / 20} & \textbf{10 / 20} & \textbf{20 / 20} & \textbf{5 / 20} & \textbf{10 / 20} & \textbf{20 / 20} & \textbf{5 / 20} & \textbf{10 / 20} & \textbf{20 / 20} \\
\midrule
CoLA   & 89.0 ± 0.2 & 88.3 ± 0.3 & 88.5 ± 0.5 & 82.6 ± 4.1 & 82.2 ± 1.9 & 84.9 ± 0.3 & 80.4 ± 4.1 & 84.2 ± 0.3 & 84.6 ± 0.6 \\
FedAvg & 66.5 ± 1.2 & 67.9 ± 1.5 & 72.1 ± 3.5 & 73.2 ± 4.0 & 73.7 ± 2.3 & 73.2 ± 1.8 & 54.8 ± 1.8 & 54.3 ± 2.7 & 58.5 ± 5.9 \\
\textsc{Feroma}
       & 88.9 ± 0.3 & 88.7 ± 0.3 & 89.5 ± 0.4
       & 87.1 ± 2.3 & 86.5 ± 2.5 & 86.1 ± 3.4
       & 86.4 ± 0.2 & 85.8 ± 0.4 & 85.7 ± 1.2 \\
\bottomrule
\end{tabular}
\vspace{0.2em}
\caption{
Performance comparison among CoLA, FedAvg, and \textsc{Feroma} with heterogeneity type $P(X)$, across three different non-IID Levels and three distribution drifting levels.
}
\label{tab:cola_px}
\end{table}

\begin{table}[htbp]
\centering
\tiny
\renewcommand{\arraystretch}{1.2}
\setlength{\tabcolsep}{4pt}
\begin{tabular}{lccccccccc}
\toprule
\textbf{Non-IID Level} & \multicolumn{3}{c}{\textbf{Low}} & \multicolumn{3}{c}{\textbf{Medium}} & \multicolumn{3}{c}{\textbf{High}} \\
\cmidrule(lr){2-4} \cmidrule(lr){5-7} \cmidrule(lr){8-10}
\textbf{\# Drifting} & \textbf{5 / 20} & \textbf{10 / 20} & \textbf{20 / 20} & \textbf{5 / 20} & \textbf{10 / 20} & \textbf{20 / 20} & \textbf{5 / 20} & \textbf{10 / 20} & \textbf{20 / 20} \\
\midrule
CoLA   & 94.7 ± 3.9 & 98.7 ± 0.5 & 99.2 ± 1.5 & 94.2 ± 4.7 & 96.3 ± 2.6 & 99.0 ± 0.0 & 94.0 ± 0.3 & 90.6 ± 7.6 & 96.7 ± 2.4 \\
FedAvg & 75.1 ± 12.4 & 83.8 ± 8.8 & 94.6 ± 1.6 & 74.5 ± 5.8 & 82.8 ± 9.4 & 90.8 ± 1.4 & 70.4 ± 11.9 & 74.7 ± 11.6 & 89.4 ± 5.8 \\
\textsc{Feroma}
       & 96.5 ± 3.2 & 98.7 ± 0.4 & 99.4 ± 0.3
       & 97.1 ± 1.8 & 97.9 ± 2.0 & 98.7 ± 1.1
       & 96.9 ± 1.9 & 97.2 ± 3.0 & 99.3 ± 0.3 \\
\bottomrule
\end{tabular}
\vspace{0.2em}
\caption{
Performance comparison among CoLA, FedAvg, and \textsc{Feroma} with heterogeneity type $P(Y)$, across three different non-IID Levels and three distribution drifting levels.
}
\label{tab:cola_py}
\end{table}

\begin{table}[htbp]
\centering
\tiny
\renewcommand{\arraystretch}{1.2}
\setlength{\tabcolsep}{4pt}
\begin{tabular}{lccccccccc}
\toprule
\textbf{Non-IID Level} & \multicolumn{3}{c}{\textbf{Low}} & \multicolumn{3}{c}{\textbf{Medium}} & \multicolumn{3}{c}{\textbf{High}} \\
\cmidrule(lr){2-4} \cmidrule(lr){5-7} \cmidrule(lr){8-10}
\textbf{\# Drifting} & \textbf{5 / 20} & \textbf{10 / 20} & \textbf{20 / 20} & \textbf{5 / 20} & \textbf{10 / 20} & \textbf{20 / 20} & \textbf{5 / 20} & \textbf{10 / 20} & \textbf{20 / 20} \\
\midrule
CoLA   & 84.5 ± 2.6 & 85.4 ± 1.5 & 82.9 ± 0.4 & 76.5 ± 3.8 & 80.2 ± 1.3 & 77.1 ± 4.0 & 78.3 ± 1.8 & 78.9 ± 0.4 & 77.8 ± 1.3 \\
FedAvg & 73.5 ± 1.6 & 72.9 ± 0.8 & 72.9 ± 1.0 & 64.9 ± 1.7 & 65.0 ± 2.1 & 64.2 ± 2.0 & 57.1 ± 2.1 & 57.2 ± 2.0 & 56.4 ± 1.1 \\
\textsc{Feroma}
       & 89.6 ± 0.5 & 89.5 ± 0.5 & 89.8 ± 0.8
       & 88.9 ± 0.5 & 88.9 ± 0.7 & 89.0 ± 0.6
       & 88.2 ± 0.7 & 88.1 ± 0.5 & 88.4 ± 0.7 \\
\bottomrule
\end{tabular}
\vspace{0.2em}
\caption{
Performance comparison among CoLA, FedAvg, and \textsc{Feroma} with heterogeneity type $P(Y \mid X)$, across three different non-IID Levels and three distribution drifting levels.
}
\label{tab:cola_pyx}
\end{table}

\begin{table}[htbp]
\centering
\tiny
\renewcommand{\arraystretch}{1.2}
\setlength{\tabcolsep}{4pt}
\begin{tabular}{lccccccccc}
\toprule
\textbf{Non-IID Level} & \multicolumn{3}{c}{\textbf{Low}} & \multicolumn{3}{c}{\textbf{Medium}} & \multicolumn{3}{c}{\textbf{High}} \\
\cmidrule(lr){2-4} \cmidrule(lr){5-7} \cmidrule(lr){8-10}
\textbf{\# Drifting} & \textbf{5 / 20} & \textbf{10 / 20} & \textbf{20 / 20} & \textbf{5 / 20} & \textbf{10 / 20} & \textbf{20 / 20} & \textbf{5 / 20} & \textbf{10 / 20} & \textbf{20 / 20} \\
\midrule
CoLA   & 86.7 ± 0.2 & 89.0 ± 1.7 & 85.9 ± 3.3 & 86.1 ± 1.3 & 82.6 ± 7.2 & 84.3 ± 0.4 & 88.1 ± 0.9 & 79.2 ± 6.9 & 88.9 ± 0.6 \\
FedAvg & 73.4 ± 9.8 & 81.6 ± 2.9 & 77.7 ± 3.3 & 73.3 ± 4.0 & 73.1 ± 4.1 & 75.0 ± 4.1 & 71.6 ± 5.1 & 74.3 ± 5.1 & 69.9 ± 9.8 \\
\textsc{Feroma}
       & 87.3 ± 4.8 & 88.8 ± 1.9 & 89.9 ± 2.1
       & 87.0 ± 4.5 & 89.0 ± 1.6 & 90.1 ± 0.8
       & 89.4 ± 1.2 & 88.3 ± 1.1 & 89.7 ± 0.6 \\
\bottomrule
\end{tabular}
\vspace{0.2em}
\caption{
Performance comparison among CoLA, FedAvg, and \textsc{Feroma} with heterogeneity type $P(X \mid Y)$, across three different non-IID Levels and three distribution drifting levels.
}
\label{tab:cola_pxy}
\end{table}

The experiment settings are the same as reported in our main experiments.
As shown from Table~\ref{tab:cola_px} to Table~\ref{tab:cola_pxy}, CoLA provides strong test-time robustness due to its domain-adaptation mechanism, but it does not address training-time distribution drift; consequently, while its performance is competitive, it does not surpass \textsc{Feroma}, which handles both drift and shift during training.

\subsection{Generalizability of DPE to time-Series data}
\label{app.tsd}

\textsc{Feroma} does not make any assumption about the underlying data modality. Instead, it relies on the DPE as an abstraction layer that compresses potentially high-dimensional, multi-modal, or long-tailed client data into a compact distributional representation. As long as a suitable feature extractor exists for a given modality, \textsc{Feroma} can operate on top of it without modification to the aggregation or mapping mechanisms.

In the main experiments, we instantiate the DPE using few-round trained models for image-based tasks, as this provides a practical and lightweight way to capture distributional structure while remaining easy to integrate into standard FL pipelines. Importantly, this choice is not intrinsic to \textsc{Feroma}. The DPE design is intentionally model-agnostic and can be instantiated with alternative feature extractors tailored to different data types.

To evaluate the generalizability of the DPE abstraction beyond image data, we adapt FEROMA to a time-series classification task using the UCI Human Activity Recognition (UCI-HAR)~\citep{ucihar} dataset. This dataset consists of multi-dimensional sensor signals and exhibits strong temporal dependencies and multi-modal feature characteristics. Due to its relatively small scale, we set the number of clients to 10 and avoid data augmentation in order to preserve its real-world structure. Client heterogeneity is introduced via Dirichlet sampling over label distributions with concentration parameter~$\alpha$, where smaller $\alpha$ corresponds to stronger non-IIDness in $P(Y)$.

Table~\ref{tab:uci_har} reports the results across different heterogeneity levels. \textsc{Feroma} consistently outperforms all baselines by a large margin under severe heterogeneity, and remains competitive as heterogeneity decreases. These results indicate that distribution-profile-based aggregation remains effective even when client feature spaces are time-dependent and multi-modal, and that the DPE abstraction is not limited to image-based representations.

\begin{table}[htbp]
\centering
\tiny
\renewcommand{\arraystretch}{1.2}
\setlength{\tabcolsep}{3.5pt}
\begin{tabular}{lccccccccccc}
\toprule
$\alpha$
& FedAvg
& \textbf{\textsc{Feroma}}
& FedRC
& FedEM
& FeSEM
& CFL
& IFCA
& pFedMe
& APFL
& FedDrift
& ATP \\
\midrule
0.4 & 37.3 ± 4.4 & 61.5 ± 3.3 & 39.9 ± 4.8 & 39.9 ± 4.9 & 24.2 ± 6.4 & 38.5 ± 7.8 & 34.0 ± 3.7 & 26.5 ± 9.1 & 35.5 ± 3.5 & 32.0 ± 2.8 & 38.3 ± 4.3 \\
0.5 & 35.1 ± 3.5 & 53.9 ± 2.0 & 39.1 ± 5.2 & 39.1 ± 5.3 & 35.5 ± 11.2 & 39.3 ± 7.1 & 37.3 ± 3.2 & 25.6 ± 7.5 & 42.4 ± 8.5 & 37.9 ± 29.8 & 40.4 ± 5.5 \\
0.6 & 36.9 ± 5.0 & 58.0 ± 5.2 & 40.5 ± 4.2 & 40.5 ± 4.3 & 34.8 ± 7.5 & 40.1 ± 3.6 & 39.0 ± 5.3 & 30.7 ± 8.6 & 46.2 ± 9.6 & 38.9 ± 7.7 & 41.3 ± 8.3 \\
0.7 & 37.7 ± 1.6 & 50.9 ± 7.9 & 41.1 ± 4.5 & 41.2 ± 4.5 & 38.1 ± 5.0 & 44.6 ± 7.1 & 40.4 ± 5.6 & 30.9 ± 5.3 & 46.6 ± 5.5 & 38.3 ± 10.6 & 43.9 ± 6.5 \\
0.8 & 43.0 ± 3.5 & 53.4 ± 2.5 & 42.9 ± 3.7 & 43.0 ± 3.8 & 40.1 ± 6.7 & 41.5 ± 3.3 & 42.0 ± 3.3 & 36.2 ± 7.9 & 50.0 ± 7.6 & 39.1 ± 8.2 & 45.1 ± 3.3 \\
\bottomrule
\end{tabular}
\vspace{0.2em}
\caption{
Performance on the UCI-HAR time-series dataset under varying label distribution heterogeneity.
Smaller $\alpha$ indicates stronger non-IIDness in $P(Y)$.
}
\label{tab:uci_har}
\end{table}

\subsection{\textsc{Feroma} robustness against unseen or out-of-support distributions}
\label{app.robust}

\textsc{Feroma} is not explicitly designed to handle completely unseen or out-of-support data distributions. Nevertheless, robustness to such scenarios is implicitly evaluated throughout our experimental setup. In the drifting dataset generation process, test-time distributions may differ from those observed during training, particularly under high non-IID severity and low drifting frequency (e.g., non-IID level \emph{High} with drifting $5/20$ in Table~2). These cases, analyzed in Appendix~B, naturally include test distributions that are not present in recent training rounds.

To more explicitly evaluate robustness against truly unseen distributions, we conduct an additional experiment on MNIST with 20 clients under medium non-IID conditions. During testing, we vary the proportion of clients whose data distributions are unseen during training, and report the final test accuracy averaged over drifting frequencies $5$, $10$, and $20$ (out of 20 total rounds).

Table~\ref{tab:unseen} reports the results. When no clients encounter unseen distributions, \textsc{Feroma} achieves its standard baseline performance. As the unseen rate increases, FedAvg exhibits a steady degradation in accuracy. In contrast, \textsc{Feroma} maintains strong performance even in extreme settings where all clients face unseen test distributions, indicating that profile-based model assignment and aggregation generalize beyond the training support.

\begin{table}[htbp]
\centering
\tiny
\renewcommand{\arraystretch}{1.2}
\setlength{\tabcolsep}{4pt}
\begin{tabular}{lcccccc}
\toprule
\textbf{Unseen Rate} 
& 0\% & 20\% & 40\% & 60\% & 80\% & 100\% \\
\midrule
FedAvg 
& 73.25 ± 4.45 & 69.12 ± 2.84 & 65.20 ± 0.76 & 62.60 ± 0.31 & 59.59 ± 0.78 & 53.05 ± 4.17 \\
\textsc{Feroma}
& 86.70 ± 1.56 & 86.12 ± 1.28 & 85.79 ± 1.39 & 85.10 ± 0.09 & 84.98 ± 0.05 & 84.80 ± 0.12 \\
\bottomrule
\end{tabular}
\vspace{0.2em}
\caption{
Robustness to unseen test distributions on MNIST with 20 clients under medium non-IID conditions.
Results are averaged over drifting frequencies $5$, $10$, and $20$ (out of 20 rounds).
}
\label{tab:unseen}
\end{table}

Overall, these results indicate that although \textsc{Feroma} does not explicitly model out-of-support distributions, its distribution-profile-based aggregation and model assignment mechanisms provide strong empirical robustness under severe test-time distribution shifts.

\subsection{Ablation study on Monte-Carlo subsampling} \label{app.monte}
The Distribution Profile Extractor (DPE) must satisfy Requirement~\ref{req:R3} on controlled stochasticity. As described in Section~\ref{pg:dpe_implementation}, our implementation enforces this property via a Monte Carlo subsampling strategy. Specifically, for each client, we apply \(M\) independent Bernoulli masks with sampling probability \(\gamma = 0.5\), and compute distribution profiles over the resulting subsets. The variance introduced by this process is governed by the number of subsamples \(M\), the number of used data points \(v^{(k)}\), and the proxy variance \(\tau^2\) of the latent coordinates, as derived in Equation~\ref{eq:covariance_sto}. Among these parameters, \(M\) and \(\gamma\) are design choices under our control. In this ablation, we study the impact of varying \(M \in \{1, 2, 3\}\), keeping \(\gamma = 0.5\) fixed, across all four types of distribution shifts (\(P(X)\), \(P(Y)\), \(P(Y|X)\), \(P(X|Y)\)) and three heterogeneity levels (low, medium, high), on both MNIST and CIFAR-10.

The results, reported in Tables~\ref{tab:stoc_mnist_px_py}–\ref{tab:stoc_cifar10_pyx_pxy}, show that \textsc{Feroma} is robust to the choice of \(M\), with only minor accuracy differences across values.  
Nevertheless, slightly better performance is observed with larger \(M\), suggesting that reducing the variance of the extracted profiles improves model association and final accuracy. This supports the intuition that while stochasticity is essential for privacy and regularization, excessive noise may degrade the reliability of distribution similarity computations.

\subsection{Ablation study on threshold profile association} \label{app.threshold}
To prevent noisy or weakly related distributions from influencing model aggregation, we apply a thresholding mechanism to the profile similarity weights. Specifically, a threshold \(\tau\) is introduced to discard low-similarity associations, thereby promoting aggregation only among clients with sufficiently aligned distributions.  
This design is motivated by clustered FL principles, where collaboration is restricted to similar clients, but implemented here in a soft and data-driven manner.  
The thresholding step is applied after computing the similarity-based weights \(w_{t}^{(k,j)}\), as described in \autoref{eq:4}.
In the ablation study, we set the threshold \(\tau\) to the average value of the similarity weights (e.g., 0.05 in the case of 20 clients), providing a simple heuristic to evaluate the effect of thresholding.

From \autoref{tab:stoc_mnist_px_py} to \autoref{tab:stoc_cifar10_pyx_pxy}, the results show that accuracy remains largely stable with thresholding enabled or disabled.  
In future work, we expect that more principled or adaptive strategies for selecting the threshold \(\tau\) could further enhance aggregation quality—particularly in highly imbalanced or noisy distribution settings.

\begin{table}[htbp]
\centering
\scriptsize
\renewcommand{\arraystretch}{1.2}
\setlength{\tabcolsep}{4pt}
\begin{tabular}{lcccccc}
\toprule
\textbf{Non-IID Type} & \multicolumn{3}{c}{\textbf{$P(X)$}} & \multicolumn{3}{c}{\textbf{$P(Y)$}} \\
\cmidrule(lr){2-4} \cmidrule(lr){5-7}
\textbf{Non-IID Level} & \textbf{Low} & \textbf{Medium} & \textbf{High} & \textbf{Low} & \textbf{Medium} & \textbf{High} \\
\midrule
\# $M=1$, threshold off  & 87.88  ±  1.07 & 87.29  ±  1.48 & 85.30  ±  1.34 & 84.56  ±  9.61 & 79.69  ±  10.61 & 79.85  ±  10.41 \\
\# $M=2$, threshold off  & 88.10  ±  0.88 & 87.55  ±  1.61 & 85.89  ±  0.30 & 87.61  ±  6.53 & 83.81  ±  8.62 & 78.91  ±  10.86 \\
\# $M=3$, threshold off  & 88.67  ±  0.30 & 86.50  ±  2.48 & 85.81  ±  0.36 & 86.24  ±  8.27 & 85.94  ±  5.23 & 80.11  ±  9.65 \\
\# $M=1$, threshold on   & 87.62  ±  0.98 & 87.21  ±  1.40 & 85.71  ±  0.52 & 86.05  ±  8.60 & 76.37  ±  10.30 & 73.26  ±  11.61 \\
\# $M=2$, threshold on   & 87.53  ±  1.00 & 85.61  ±  4.75 & 85.48  ±  0.46 & 86.95  ±  7.53 & 81.59  ±  8.67 & 75.75  ±  7.90 \\
\# $M=3$, threshold on   & 88.20  ±  0.42 & 85.98  ±  2.10 & 85.43  ±  0.27 & 86.80  ±  9.44 & 79.89  ±  8.26 & 68.67  ±  12.67 \\
\bottomrule
\end{tabular}
\vspace{0.2em}
\caption{
Test accuracy on MNIST under varying numbers of Monte Carlo masks ($M=1,2,3$) and with thresholding enabled or disabled, across different non-IID levels in $P(X)$ and $P(Y)$.
}
\label{tab:stoc_mnist_px_py}
\end{table}

\begin{table}[htbp]
\centering
\scriptsize
\renewcommand{\arraystretch}{1.2}
\setlength{\tabcolsep}{4pt}
\begin{tabular}{lcccccc}
\toprule
\textbf{Non-IID Type} & \multicolumn{3}{c}{\textbf{$P(Y|X)$}} & \multicolumn{3}{c}{\textbf{$P(X|Y)$}} \\
\cmidrule(lr){2-4} \cmidrule(lr){5-7}
\textbf{Non-IID Level} & \textbf{Low} & \textbf{Medium} & \textbf{High} & \textbf{Low} & \textbf{Medium} & \textbf{High} \\
\midrule
\# $M=1$, threshold off  & 84.10  ±  2.90 & 76.40  ±  3.70 & 73.88  ±  1.75 & 90.02  ±  0.47 & 88.64  ±  1.21 & 89.67  ±  1.14 \\
\# $M=2$, threshold off  & 84.40  ±  3.36 & 78.71  ±  3.78 & 77.12  ±  3.07 & 88.70  ±  2.12 & 90.06  ±  0.23 & 88.53  ±  0.96 \\
\# $M=3$, threshold off  & 85.04  ±  2.44 & 80.09  ±  2.14 & 78.38  ±  2.62 & 88.81  ±  1.94 & 89.02  ±  1.59 & 88.29  ±  1.12 \\
\# $M=1$, threshold on   & 84.09  ±  2.87 & 76.54  ±  3.68 & 73.76  ±  1.75 & 89.38  ±  0.87 & 88.24  ±  2.52 & 88.88  ±  0.93 \\
\# $M=2$, threshold on   & 84.50  ±  3.25 & 78.59  ±  3.68 & 76.99  ±  3.10 & 89.80  ±  0.41 & 88.26  ±  2.04 & 86.03  ±  3.24 \\
\# $M=3$, threshold on   & 85.02  ±  2.43 & 79.90  ±  2.03 & 78.05  ±  2.57 & 90.05  ±  0.69 & 89.47  ±  1.35 & 88.86  ±  1.23 \\
\bottomrule
\end{tabular}
\vspace{0.2em}
\caption{
Test accuracy on MNIST under varying numbers of Monte Carlo masks ($M=1,2,3$) and with thresholding enabled or disabled, across different non-IID levels in $P(Y|X)$ and $P(X|Y)$.
}
\label{tab:stoc_mnist_pyx_pxy}
\end{table}

\begin{table}[htbp]
\centering
\scriptsize
\renewcommand{\arraystretch}{1.2}
\setlength{\tabcolsep}{4pt}
\begin{tabular}{lcccccc}
\toprule
\textbf{Non-IID Type} & \multicolumn{3}{c}{\textbf{$P(X)$}} & \multicolumn{3}{c}{\textbf{$P(Y)$}} \\
\cmidrule(lr){2-4} \cmidrule(lr){5-7}
\textbf{Non-IID Level} & \textbf{Low} & \textbf{Medium} & \textbf{High} & \textbf{Low} & \textbf{Medium} & \textbf{High} \\
\midrule
\# $M=1$, threshold off  & 40.25  ±  0.79 & 32.76  ±  2.43 & 30.97  ±  1.05 & 39.54  ±  5.29 & 34.73  ±  5.06 & 31.16  ±  3.31 \\
\# $M=2$, threshold off  & 40.11  ±  0.57 & 32.41  ±  2.78 & 31.05  ±  1.73 & 38.73  ±  6.90 & 35.62  ±  5.37 & 32.24  ±  9.06 \\
\# $M=3$, threshold off  & 40.13  ±  0.73 & 32.94  ±  1.78 & 31.61  ±  1.69 & 41.01  ±  7.51 & 41.33  ±  4.71 & 33.94  ±  7.60 \\
\# $M=1$, threshold on   & 39.98  ±  0.58 & 31.48  ±  2.45 & 30.62  ±  1.00 & 38.81  ±  5.73 & 38.91  ±  1.38 & 30.24  ±  3.50 \\
\# $M=2$, threshold on   & 40.06  ±  0.68 & 32.23  ±  2.54 & 30.54  ±  1.93 & 39.01  ±  8.81 & 35.74  ±  5.19 & 29.95  ±  8.30 \\
\# $M=3$, threshold on   & 40.19  ±  0.60 & 32.10  ±  1.57 & 30.76  ±  1.28 & 39.98  ±  8.64 & 41.48  ±  3.04 & 31.64  ±  5.89 \\
\bottomrule
\end{tabular}
\vspace{0.2em}
\caption{
Test accuracy on CIFAR-10 under varying numbers of Monte Carlo masks ($M=1,2,3$) and with thresholding enabled or disabled, across different non-IID levels in $P(X)$ and $P(Y)$.
}
\label{tab:stoc_cifar10_px_py}
\end{table}

\begin{table}[htbp]
\centering
\scriptsize
\renewcommand{\arraystretch}{1.2}
\setlength{\tabcolsep}{4pt}
\begin{tabular}{lcccccc}
\toprule
\textbf{Non-IID Type} & \multicolumn{3}{c}{\textbf{$P(Y|X)$}} & \multicolumn{3}{c}{\textbf{$P(X|Y)$}} \\
\cmidrule(lr){2-4} \cmidrule(lr){5-7}
\textbf{Non-IID Level} & \textbf{Low} & \textbf{Medium} & \textbf{High} & \textbf{Low} & \textbf{Medium} & \textbf{High} \\
\midrule
\# $M=1$, threshold off  & 36.83  ±  1.91 & 34.16  ±  1.30 & 29.47  ±  0.50 & 41.56  ±  1.72 & 39.77  ±  1.30 & 38.94  ±  2.85 \\
\# $M=2$, threshold off  & 36.24  ±  0.36 & 34.20  ±  1.47 & 30.01  ±  0.99 & 43.91  ±  1.36 & 37.93  ±  1.41 & 37.59  ±  3.65 \\
\# $M=3$, threshold off  & 36.78  ±  1.74 & 34.48  ±  1.85 & 29.91  ±  1.00 & 41.25  ±  3.77 & 42.38  ±  1.82 & 38.91  ±  1.89 \\
\# $M=1$, threshold on   & 36.83  ±  1.53 & 34.23  ±  1.22 & 29.64  ±  0.55 & 40.22  ±  3.31 & 40.30  ±  0.99 & 37.93  ±  1.56 \\
\# $M=2$, threshold on   & 36.32  ±  0.67 & 34.49  ±  1.51 & 30.07  ±  1.22 & 41.62  ±  3.37 & 38.71  ±  2.85 & 34.21  ±  2.71 \\
\# $M=3$, threshold on   & 36.94  ±  1.42 & 34.55  ±  1.72 & 29.94  ±  0.93 & 38.17  ±  2.85 & 39.04  ±  2.68 & 37.25  ±  1.51 \\
\bottomrule
\end{tabular}
\vspace{0.2em}
\caption{
Test accuracy on CIFAR-10 under varying numbers of Monte Carlo masks ($M=1,2,3$) and with thresholding enabled or disabled, across different non-IID levels in $P(Y|X)$ and $P(X|Y)$.
}
\label{tab:stoc_cifar10_pyx_pxy}
\end{table}

\subsection{Ablation study on distance function} \label{app.dft}

We analyze the impact of the distance function \(\mathcal{D}(\cdot, \cdot)\), introduced in \autoref{eq:3}, on the overall performance of \textsc{Feroma}. Specifically, we compare two commonly used similarity measures: Euclidean distance and cosine distance. To isolate their effects, we evaluate four combinations by varying the choice of \(\mathcal{D}\) during training and test-time profile matching.

The evaluated settings are as follows:
\begin{itemize}[nosep,leftmargin=*]
  \item \textbf{E.E.}: Euclidean distance used in both training and testing.
  \item \textbf{C.C.}: Cosine distance used in both training and testing.
  \item \textbf{E.C.}: Euclidean distance used during training; cosine distance during testing.
  \item \textbf{C.E.}: Cosine distance used during training; Euclidean distance during testing.
\end{itemize}

These combinations allow us to assess whether consistency between training and testing distance functions is important, and whether certain metrics generalize better under mismatch.  
The ablation study is conducted on the FMNIST dataset using all four non-IID types described in the main experiments. For each type, we evaluate three non-IID levels (Low, Medium, High) and three distribution drift levels (\(5 / 20\), \(10 / 20\), \(20 / 20\)), following the same settings as detailed in \autoref{app.main_exp}.  
\autoref{tab:3x1_table} summarizes the overall performance across all combinations, and the result shows \textsc{Feroma} is robust despite the choices of \(\mathcal{D}\).
We adopt the \textbf{C.E.} configuration (cosine distance for training and Euclidean distance for testing) for the remainder of our experiments, as it consistently achieves slightly better performance.  
Detailed results for each non-IID type are provided in \autoref{tab:dft_px} through \autoref{tab:dft_pxy}.

\begin{table}[htbp]
\centering
\scriptsize
\renewcommand{\arraystretch}{1.2}
\setlength{\tabcolsep}{6pt}
\begin{tabular}{lccc}
\toprule
\textbf{\# Drifting} & \textbf{5 / 20} & \textbf{10 / 20} & \textbf{20 / 20} \\
\midrule
E.C. & 75.911  ±  3.12 & 76.419  ±  3.94 & 75.930  ±  5.75 \\
E.E. & 77.833  ±  3.43 & 77.079  ±  3.76 & 76.603  ±  4.59 \\
C.E. & 77.904  ±  3.42 & 77.357  ±  3.93 & 76.891  ±  4.73 \\
C.C. & 75.864  ±  3.01 & 76.302  ±  3.78 & 75.809  ±  5.75 \\
\bottomrule
\end{tabular}
\vspace{0.2em}
\caption{
Performance comparison among different distance functions across all non-IID types and levels on the FMNIST dataset.
}
\label{tab:3x1_table}
\end{table}

\begin{table}[htbp]
\centering
\tiny
\renewcommand{\arraystretch}{1.2}
\setlength{\tabcolsep}{4pt}
\begin{tabular}{lccccccccc}
\toprule
\textbf{Non-IID Level} & \multicolumn{3}{c}{\textbf{Low}} & \multicolumn{3}{c}{\textbf{Medium}} & \multicolumn{3}{c}{\textbf{High}} \\
\cmidrule(lr){2-4} \cmidrule(lr){5-7} \cmidrule(lr){8-10}
\textbf{\# Drifting} & \textbf{5 / 20} & \textbf{10 / 20} & \textbf{20 / 20} & \textbf{5 / 20} & \textbf{10 / 20} & \textbf{20 / 20} & \textbf{5 / 20} & \textbf{10 / 20} & \textbf{20 / 20} \\
\midrule
E.C. & 73.73  ±  0.48 & 74.89  ±  0.43 & 72.13  ±  0.43 & 73.76  ±  0.73 & 75.11  ±  0.67 & 71.58  ±  0.35 & 73.43  ±  0.75 & 75.09  ±  0.48 & 72.08  ±  0.62 \\
E.E. & 73.63  ±  0.53 & 74.87  ±  0.57 & 71.98  ±  0.25 & 73.69  ±  0.70 & 75.01  ±  0.63 & 71.30  ±  0.52 & 73.65  ±  0.69 & 75.01  ±  0.50 & 71.56  ±  0.77 \\
C.E. & 73.69  ±  0.46 & 74.87  ±  0.57 & 72.95  ±  0.25 & 73.70  ±  0.60 & 75.04  ±  0.70 & 71.23  ±  0.59 & 73.55  ±  0.72 & 75.01  ±  0.47 & 71.71  ±  0.62 \\
C.C. & 73.79  ±  0.41 & 74.84  ±  0.49 & 72.26  ±  0.37 & 73.83  ±  0.68 & 75.12  ±  0.60 & 71.62  ±  0.34 & 73.45  ±  0.76 & 75.03  ±  0.51 & 71.99  ±  0.65 \\
\bottomrule
\end{tabular}
\vspace{0.2em}
\caption{
Performance comparison among different distance functions across non-IID type $P(X)$ on the FMNIST dataset.
}
\label{tab:dft_px}
\end{table}

\begin{table}[h!]
\centering
\tiny
\renewcommand{\arraystretch}{1.2}
\setlength{\tabcolsep}{4pt}
\begin{tabular}{lccccccccc}
\toprule
\textbf{Non-IID Level} & \multicolumn{3}{c}{\textbf{Low}} & \multicolumn{3}{c}{\textbf{Medium}} & \multicolumn{3}{c}{\textbf{High}} \\
\cmidrule(lr){2-4} \cmidrule(lr){5-7} \cmidrule(lr){8-10}
\textbf{\# Drifting} & \textbf{5 / 20} & \textbf{10 / 20} & \textbf{20 / 20} & \textbf{5 / 20} & \textbf{10 / 20} & \textbf{20 / 20} & \textbf{5 / 20} & \textbf{10 / 20} & \textbf{20 / 20} \\
\midrule
E.C. & 94.11  ±  2.60 & 82.89  ±  7.27 & 83.09  ±  4.43 & 95.05  ±  3.45 & 90.20  ±  6.38 & 84.29  ±  8.43 & 94.07  ±  6.25 & 90.46  ±  6.10 & 85.09  ±  14.84 \\
E.E. & 94.49  ±  2.70 & 88.30  ±  6.23 & 86.22  ±  9.08 & 94.44  ±  3.67 & 90.22  ±  6.84 & 78.27  ±  9.74 & 93.99  ±  6.29 & 90.00  ±  5.63 & 81.07  ±  12.77 \\
C.E. & 94.46  ±  2.65 & 88.38  ±  6.34 & 86.22  ±  9.08 & 94.43  ±  3.63 & 90.33  ±  6.92 & 81.61  ±  10.57 & 94.11  ±  6.08 & 90.05  ±  5.73 & 83.92  ±  13.43 \\
C.C. & 93.82  ±  2.98 & 82.93  ±  7.32 & 82.98  ±  3.56 & 95.01  ±  3.48 & 90.10  ±  6.27 & 83.51  ±  7.60 & 93.88  ±  6.57 & 90.34  ±  5.91 & 83.97  ±  14.66 \\
\bottomrule
\end{tabular}
\vspace{0.2em}
\caption{
Performance comparison among different distance functions across non-IID type $P(Y)$ on the FMNIST dataset.
}
\label{tab:dft_py}
\end{table}

\begin{table}[h!]
\centering
\tiny
\renewcommand{\arraystretch}{1.2}
\setlength{\tabcolsep}{4pt}
\begin{tabular}{lccccccccc}
\toprule
\textbf{Non-IID Level} & \multicolumn{3}{c}{\textbf{Low}} & \multicolumn{3}{c}{\textbf{Medium}} & \multicolumn{3}{c}{\textbf{High}} \\
\cmidrule(lr){2-4} \cmidrule(lr){5-7} \cmidrule(lr){8-10}
\textbf{\# Drifting} & \textbf{5 / 20} & \textbf{10 / 20} & \textbf{20 / 20} & \textbf{5 / 20} & \textbf{10 / 20} & \textbf{20 / 20} & \textbf{5 / 20} & \textbf{10 / 20} & \textbf{20 / 20} \\
\midrule
E.C. & 72.52  ±  1.69 & 66.83  ±  2.61 & 65.28  ±  1.21 & 71.00  ±  2.71 & 69.14  ±  1.41 & 63.29  ±  1.54 & 72.41  ±  1.63 & 67.01  ±  2.00 & 63.03  ±  1.81 \\
E.E. & 73.68  ±  0.57 & 72.26  ±  1.37 & 71.24  ±  1.31 & 71.62  ±  1.63 & 72.10  ±  1.01 & 70.21  ±  2.00 & 72.55  ±  1.72 & 66.59  ±  1.82 & 62.99  ±  2.07 \\
C.E. & 73.68  ±  0.80 & 72.44  ±  1.18 & 71.54  ±  0.85 & 71.83  ±  1.70 & 72.38  ±  1.03 & 70.63  ±  1.45 & 72.46  ±  1.72 & 66.87  ±  2.14 & 64.00  ±  1.92 \\
C.C. & 72.49  ±  1.69 & 67.14  ±  1.95 & 65.11  ±  1.15 & 70.62  ±  2.79 & 68.61  ±  1.40 & 63.47  ±  1.57 & 72.48  ±  1.64 & 66.74  ±  1.66 & 63.08  ±  2.01 \\
\bottomrule
\end{tabular}
\vspace{0.2em}
\caption{
Performance comparison among different distance functions across non-IID type $P(Y|X)$ on the FMNIST dataset.
}
\label{tab:dft_pyx}
\end{table}

\begin{table}[h!]
\centering
\tiny
\renewcommand{\arraystretch}{1.2}
\setlength{\tabcolsep}{4pt}
\begin{tabular}{lccccccccc}
\toprule
\textbf{Non-IID Level} & \multicolumn{3}{c}{\textbf{Low}} & \multicolumn{3}{c}{\textbf{Medium}} & \multicolumn{3}{c}{\textbf{High}} \\
\cmidrule(lr){2-4} \cmidrule(lr){5-7} \cmidrule(lr){8-10}
\textbf{\# Drifting} & \textbf{5 / 20} & \textbf{10 / 20} & \textbf{20 / 20} & \textbf{5 / 20} & \textbf{10 / 20} & \textbf{20 / 20} & \textbf{5 / 20} & \textbf{10 / 20} & \textbf{20 / 20} \\
\midrule
E.C. & 78.55  ±  1.92 & 75.55  ±  3.96 & 74.62  ±  2.53 & 75.14  ±  5.05 & 77.27  ±  3.61 & 76.74  ±  3.40 & 79.02  ±  1.49 & 73.06  ±  6.13 & 70.34  ±  7.06 \\
E.E. & 78.81  ±  1.67 & 79.22  ±  1.98 & 78.65  ±  1.03 & 78.82  ±  1.43 & 78.38  ±  1.14 & 78.32  ±  1.46 & 79.33  ±  1.53 & 79.84  ±  1.76 & 78.92  ±  0.84 \\
C.E. & 78.88  ±  1.68 & 79.11  ±  1.99 & 78.22  ±  0.88 & 78.81  ±  1.39 & 78.42  ±  1.19 & 78.14  ±  1.56 & 79.55  ±  1.41 & 79.75  ±  1.83 & 78.86  ±  0.94 \\
C.C. & 78.74  ±  1.78 & 75.15  ±  3.95 & 74.21  ±  2.44 & 75.17  ±  5.05 & 76.95  ±  3.40 & 76.79  ±  3.44 & 78.98  ±  1.49 & 73.12  ±  6.37 & 70.23  ±  7.08 \\
\bottomrule
\end{tabular}
\vspace{0.2em}
\caption{
Performance comparison among different distance functions across non-IID type $P(X|Y)$ on the FMNIST dataset.
}
\label{tab:dft_pxy}
\end{table}

\newpage
\section{Illustrations of the Dynamic Aggregation Strategy}
\label{app.reb_1}

This appendix visualizes the dynamic aggregation behavior of \textsc{Feroma} using 20 heatmaps, each corresponding to a single client. Each heatmap shows the evolution of profile distances to other clients across training rounds, illustrating how aggregation weights adapt over time. Dynamic aggregation begins at round~6; in this round, all distances are initialized to a uniform value (0.05 for 20 clients) due to the absence of previous-round profiles for comparison.

\begin{figure}[htbp]
  \centering
  \begin{minipage}{0.49\textwidth}
    \centering
    \includegraphics[width=\textwidth]{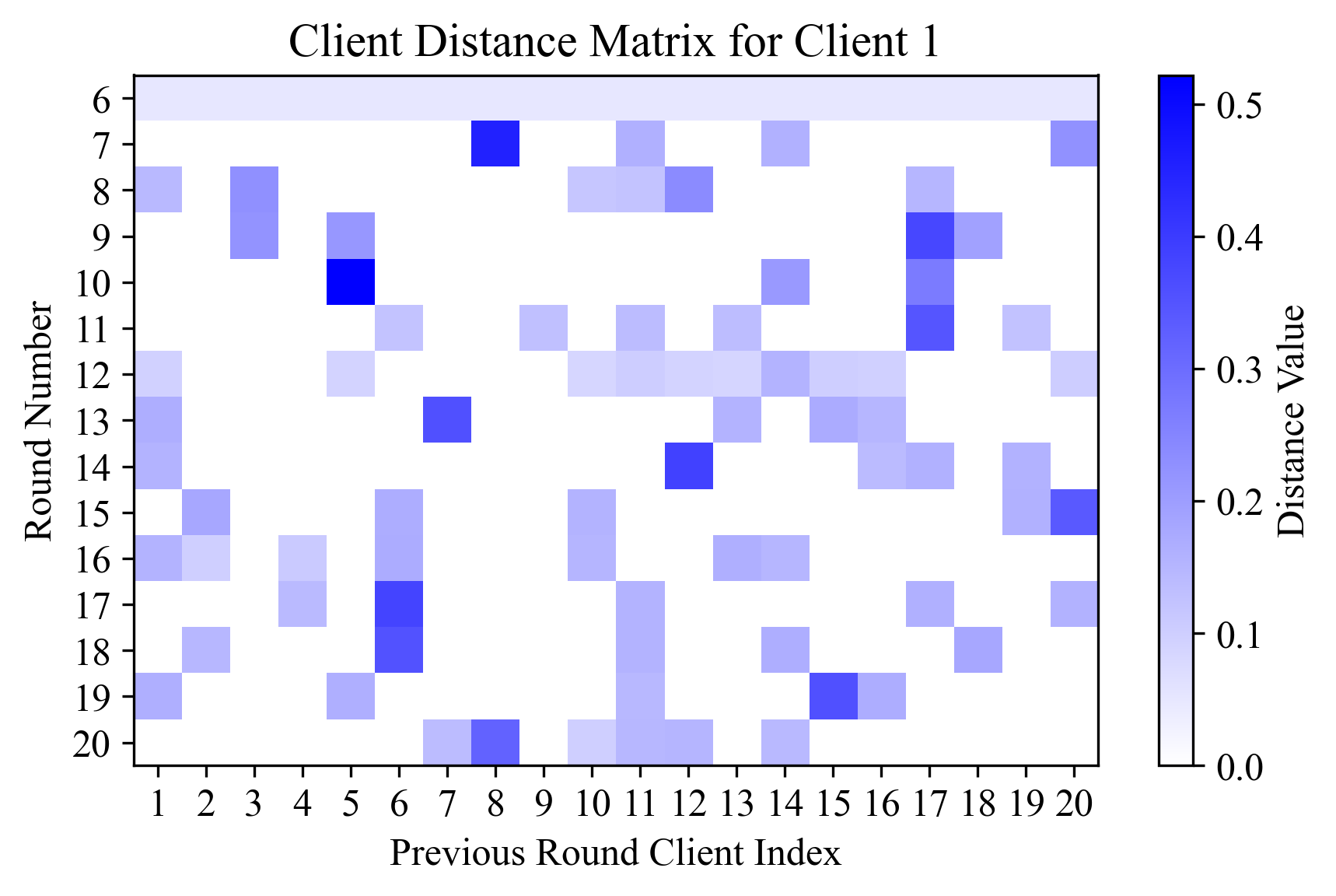}
  \end{minipage}
  \hfill
  \begin{minipage}{0.49\textwidth}
    \centering
    \includegraphics[width=\textwidth]{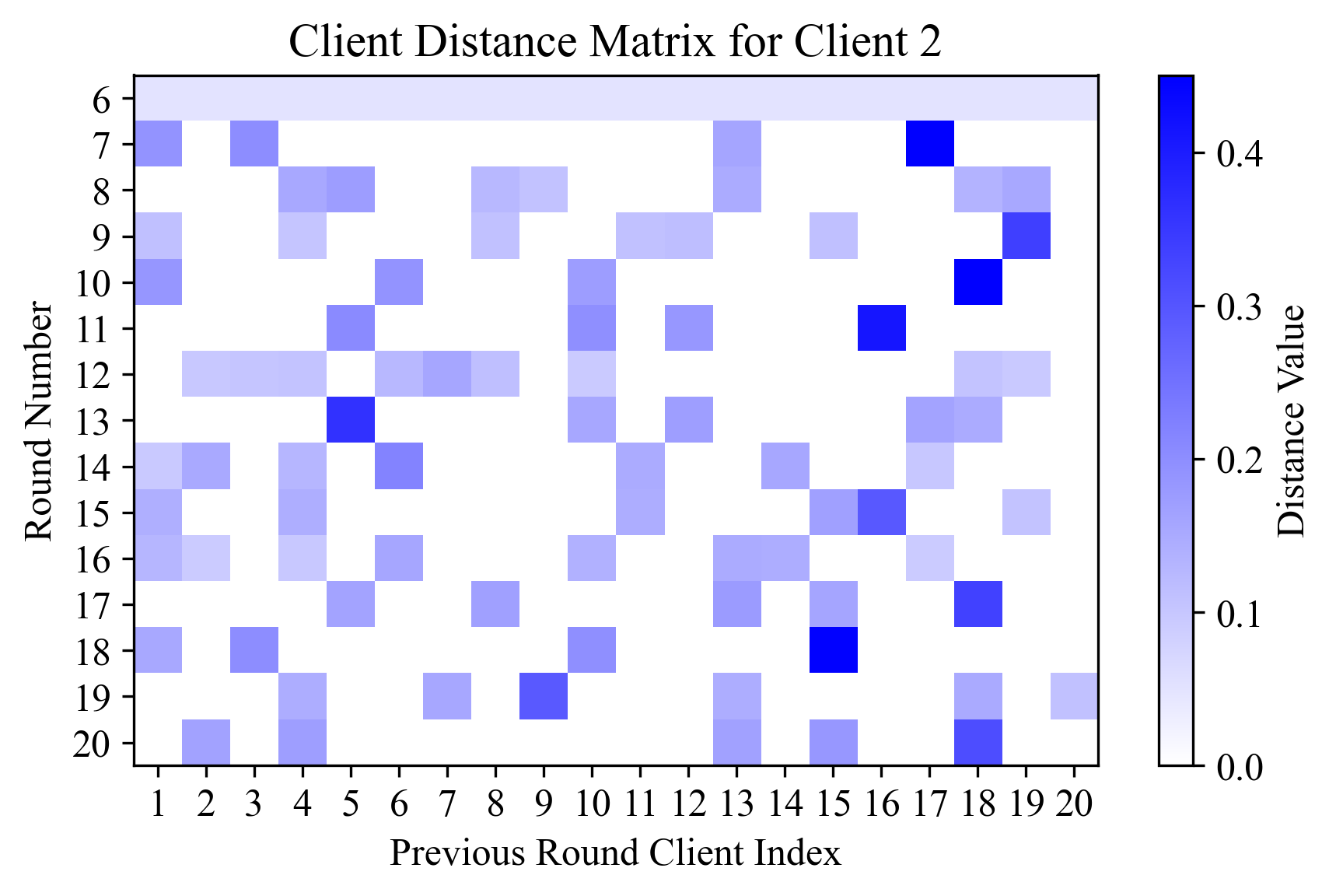}
  \end{minipage}
\end{figure}

\begin{figure}[htbp]
  \centering
  \begin{minipage}{0.49\textwidth}
    \centering
    \includegraphics[width=\textwidth]{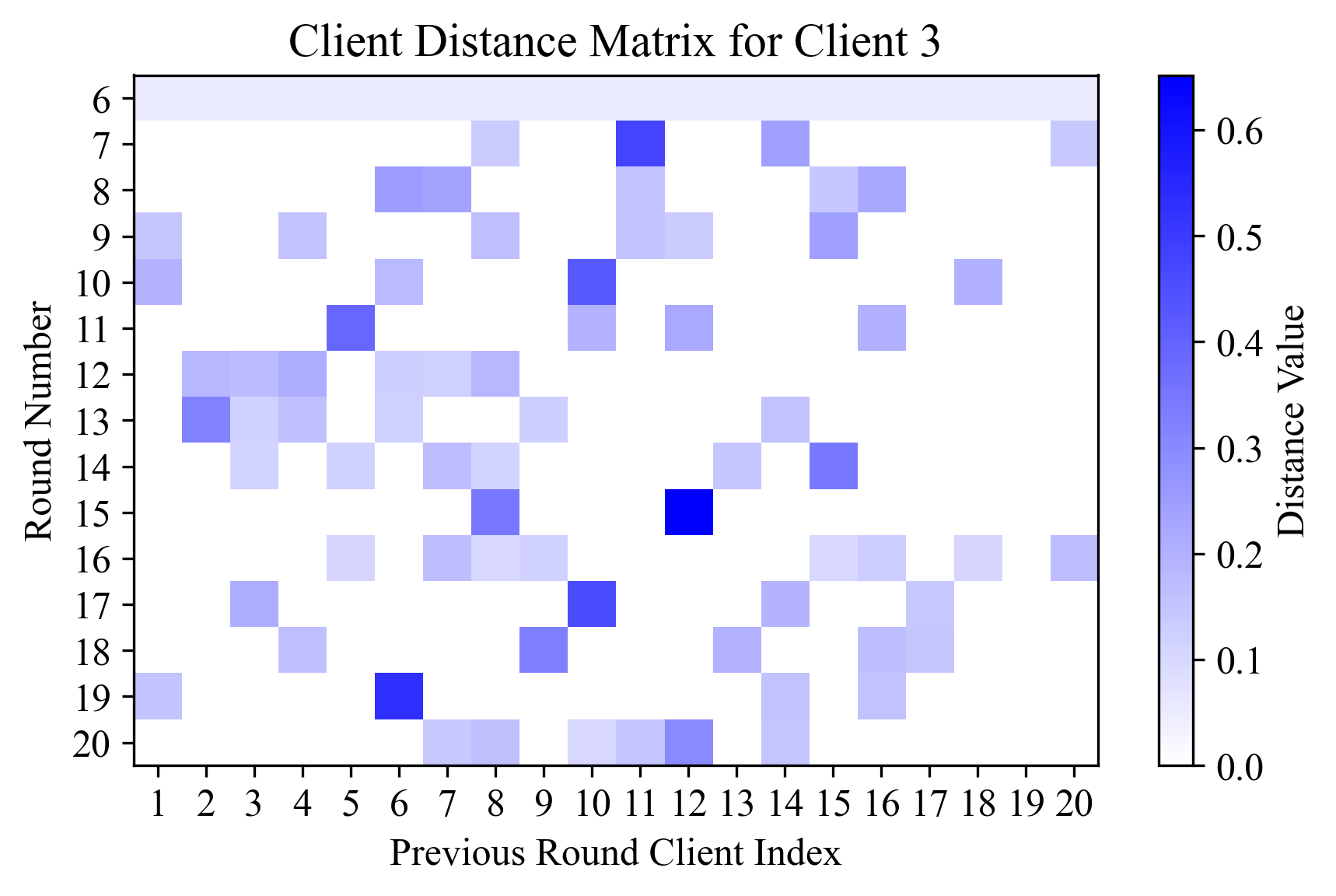}
  \end{minipage}
  \hfill
  \begin{minipage}{0.49\textwidth}
    \centering
    \includegraphics[width=\textwidth]{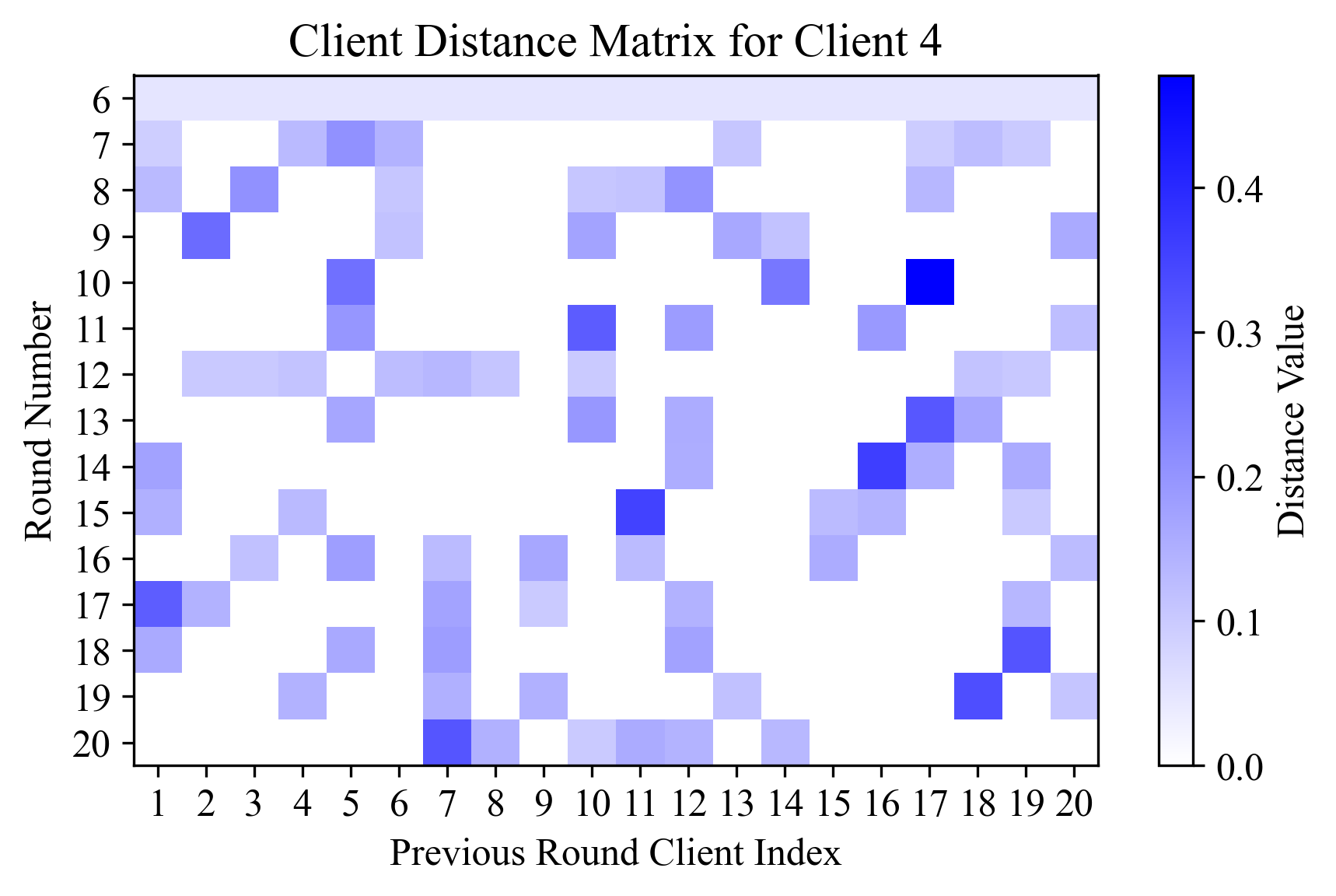}
  \end{minipage}
\end{figure}

\begin{figure}[htbp]
  \centering
  \begin{minipage}{0.49\textwidth}
    \centering
    \includegraphics[width=\textwidth]{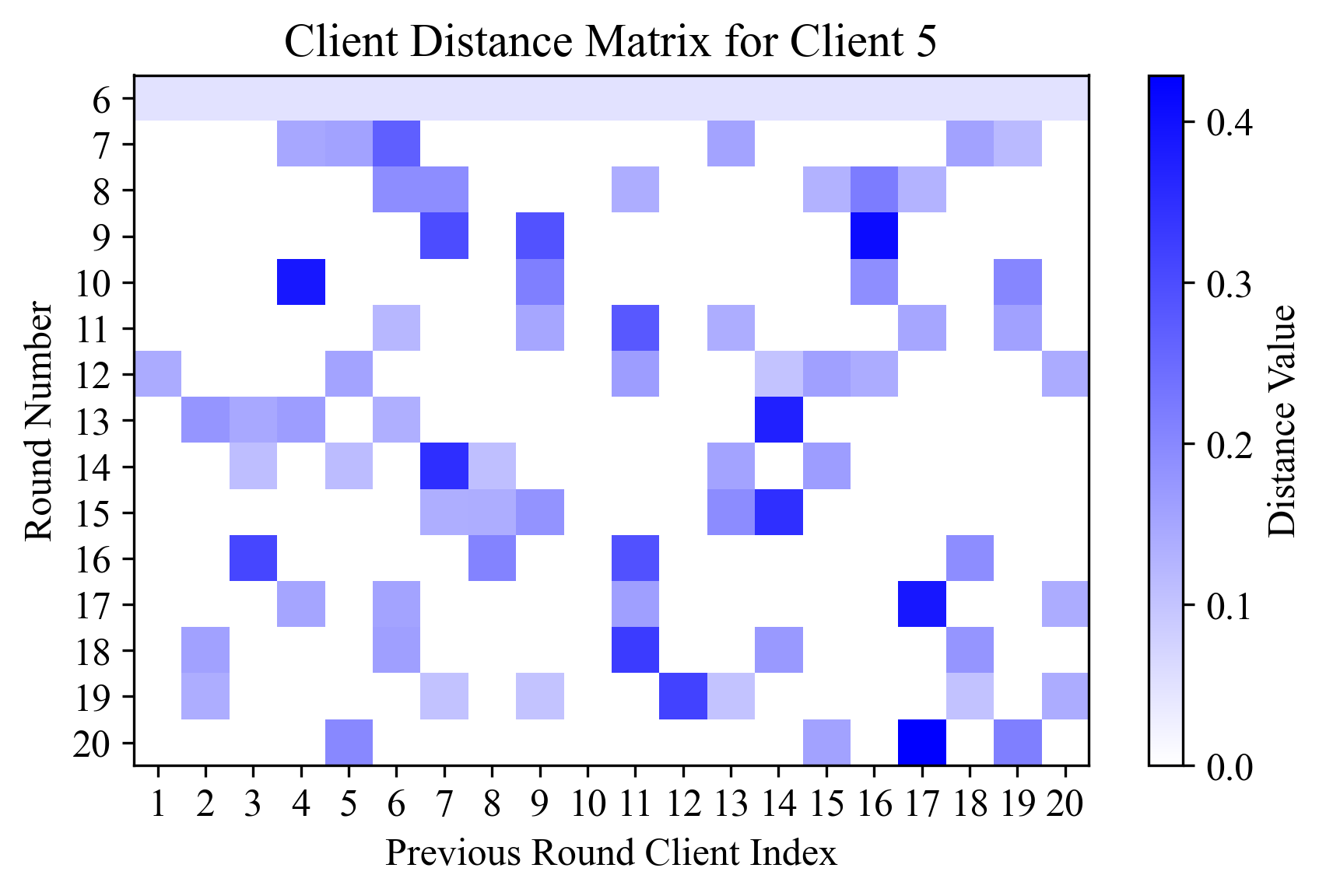}
  \end{minipage}
  \hfill
  \begin{minipage}{0.49\textwidth}
    \centering
    \includegraphics[width=\textwidth]{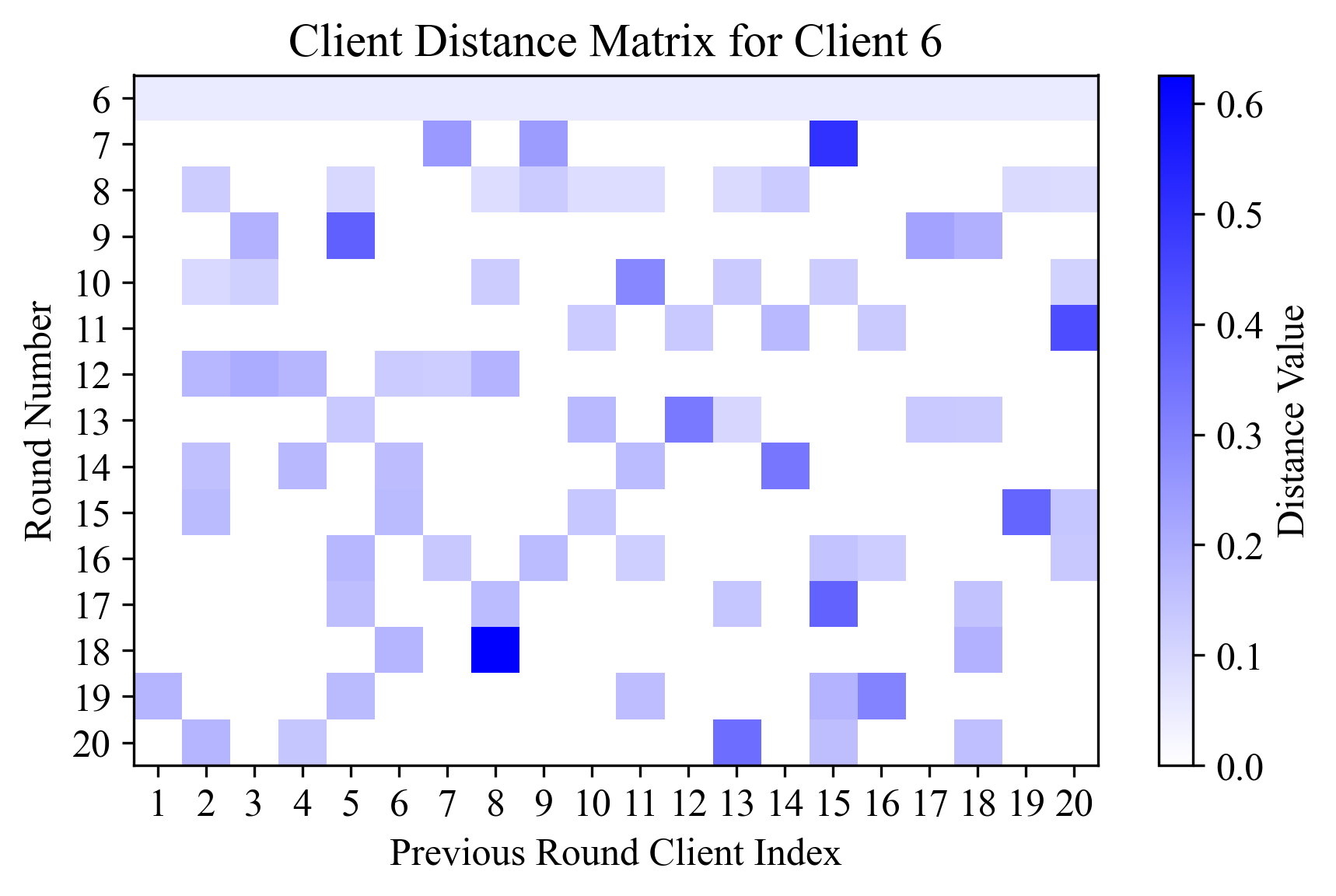}
  \end{minipage}
\end{figure}

\begin{figure}[htbp]
  \centering
  \begin{minipage}{0.49\textwidth}
    \centering
    \includegraphics[width=\textwidth]{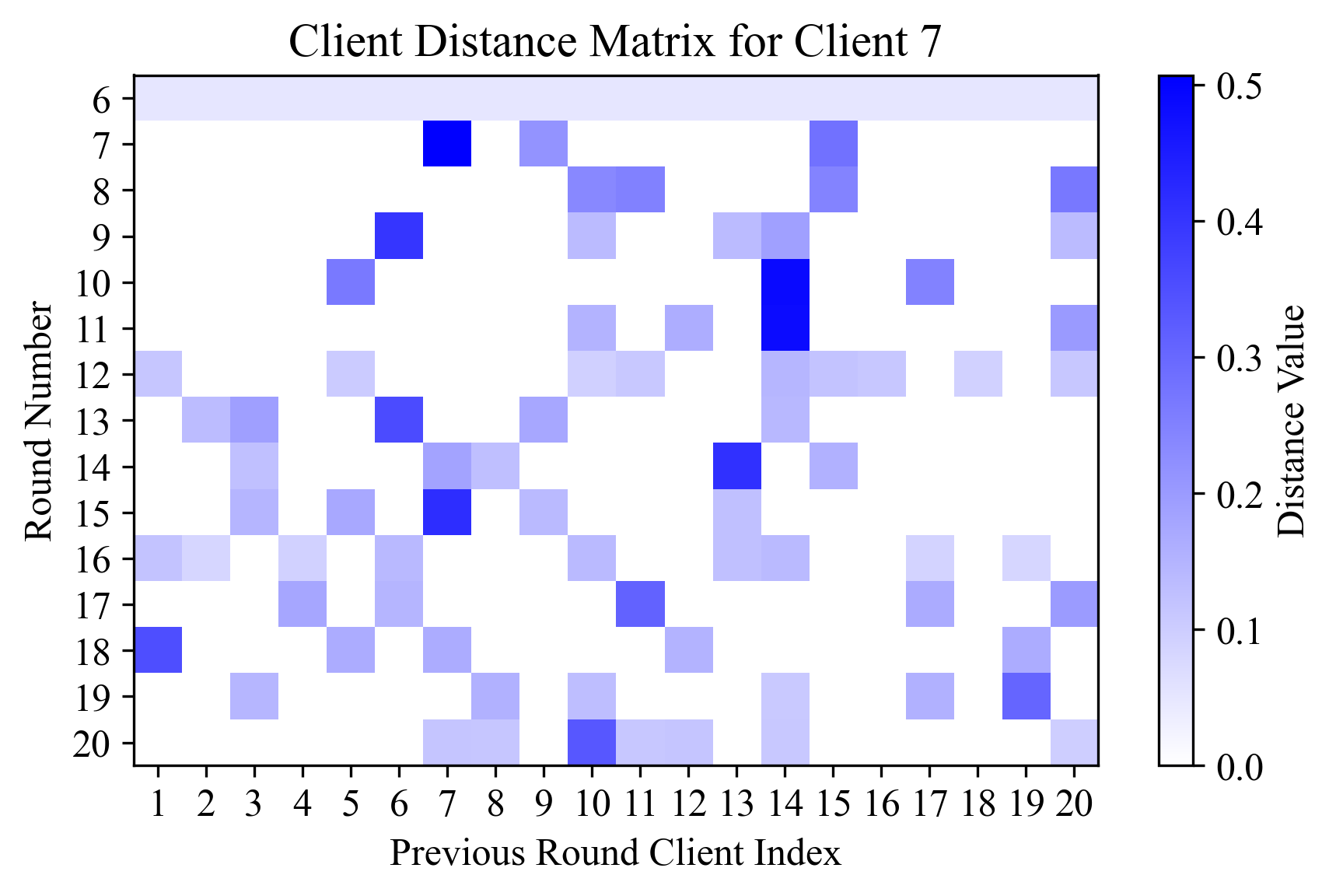}
  \end{minipage}
  \hfill
  \begin{minipage}{0.49\textwidth}
    \centering
    \includegraphics[width=\textwidth]{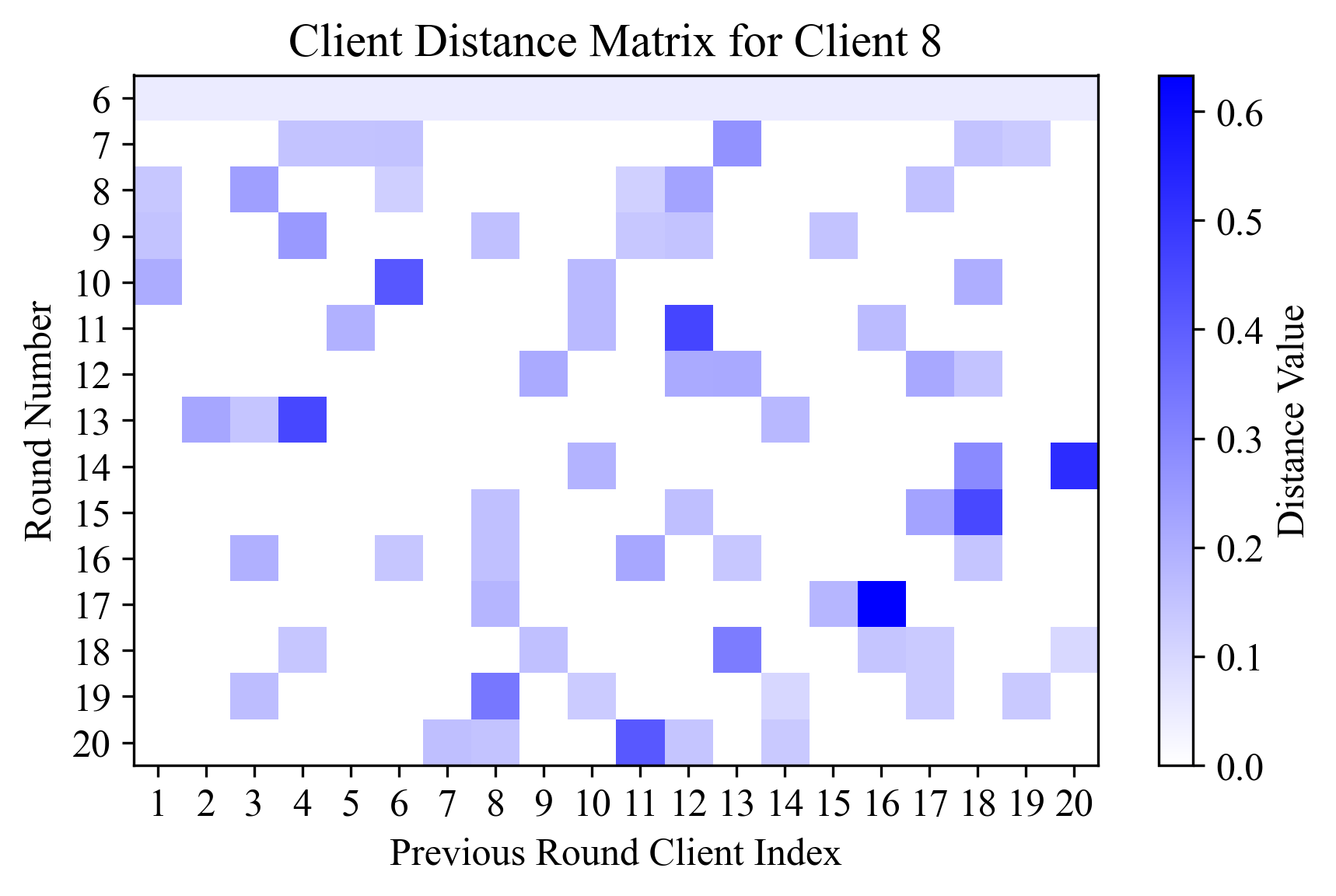}
  \end{minipage}
\end{figure}

\begin{figure}[htbp]
  \centering
  \begin{minipage}{0.49\textwidth}
    \centering
    \includegraphics[width=\textwidth]{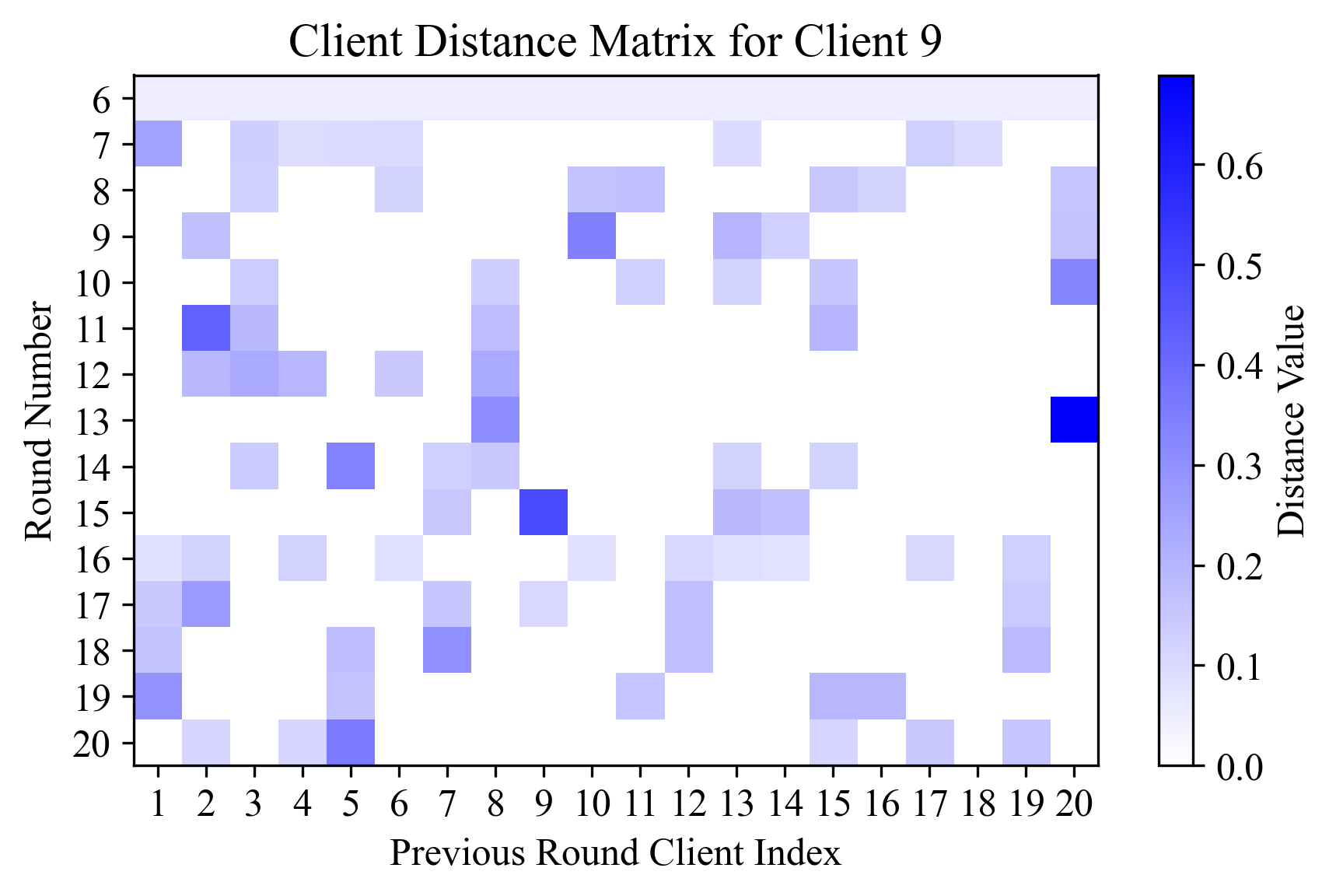}
  \end{minipage}
  \hfill
  \begin{minipage}{0.49\textwidth}
    \centering
    \includegraphics[width=\textwidth]{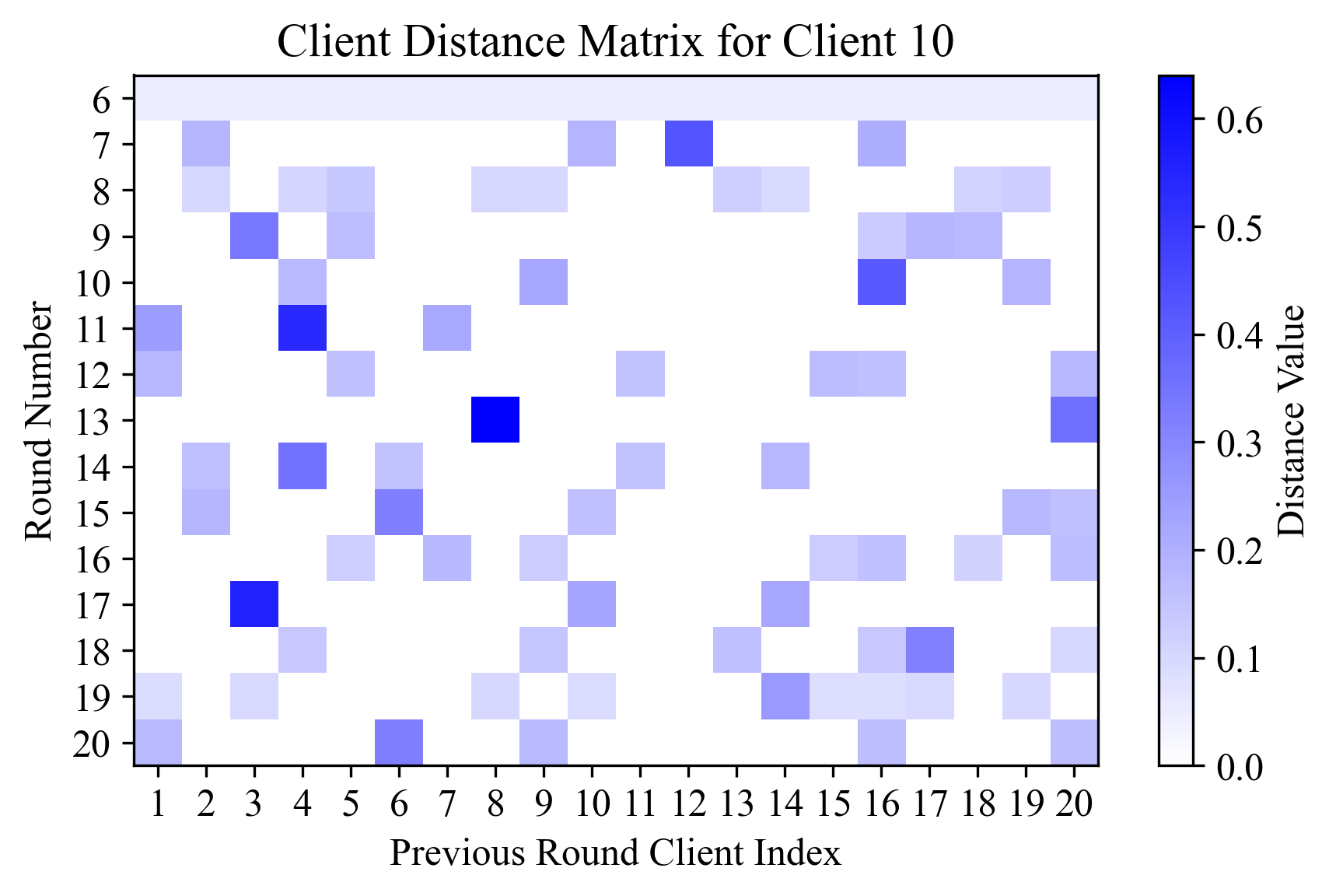}
  \end{minipage}
\end{figure}

\begin{figure}[htbp]
  \centering
  \begin{minipage}{0.49\textwidth}
    \centering
    \includegraphics[width=\textwidth]{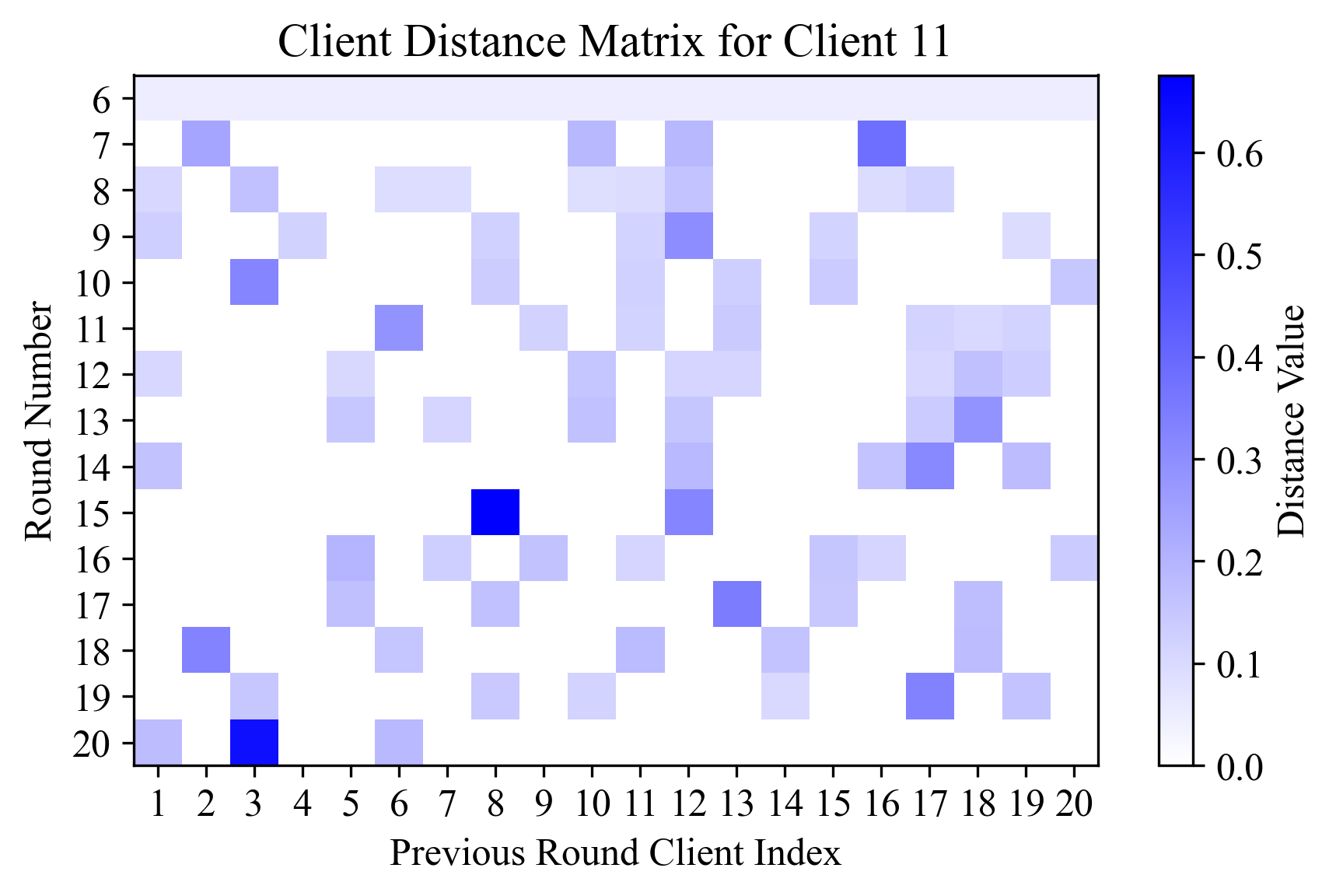}
  \end{minipage}
  \hfill
  \begin{minipage}{0.49\textwidth}
    \centering
    \includegraphics[width=\textwidth]{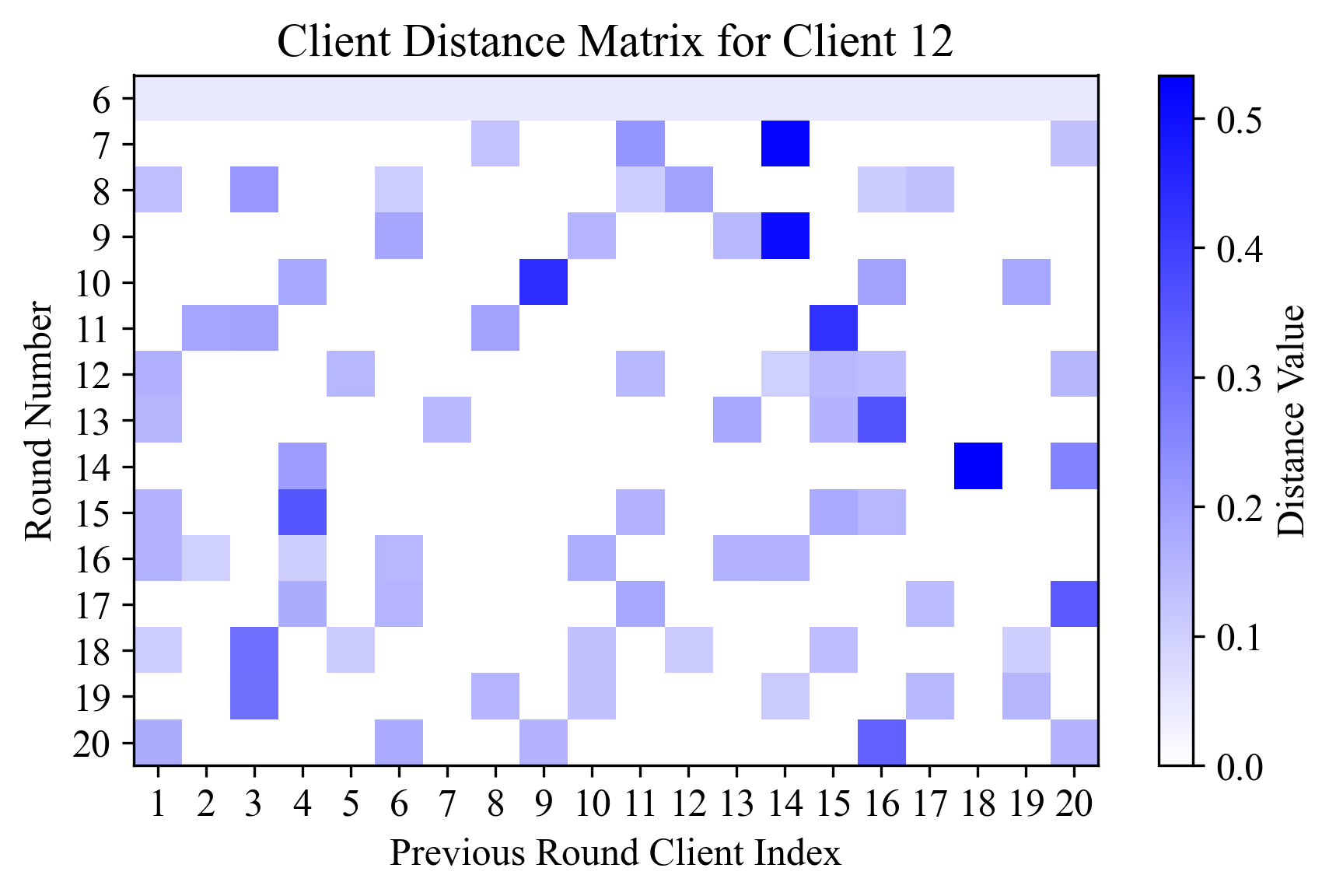}
  \end{minipage}
\end{figure}

\begin{figure}[htbp]
  \centering
  \begin{minipage}{0.49\textwidth}
    \centering
    \includegraphics[width=\textwidth]{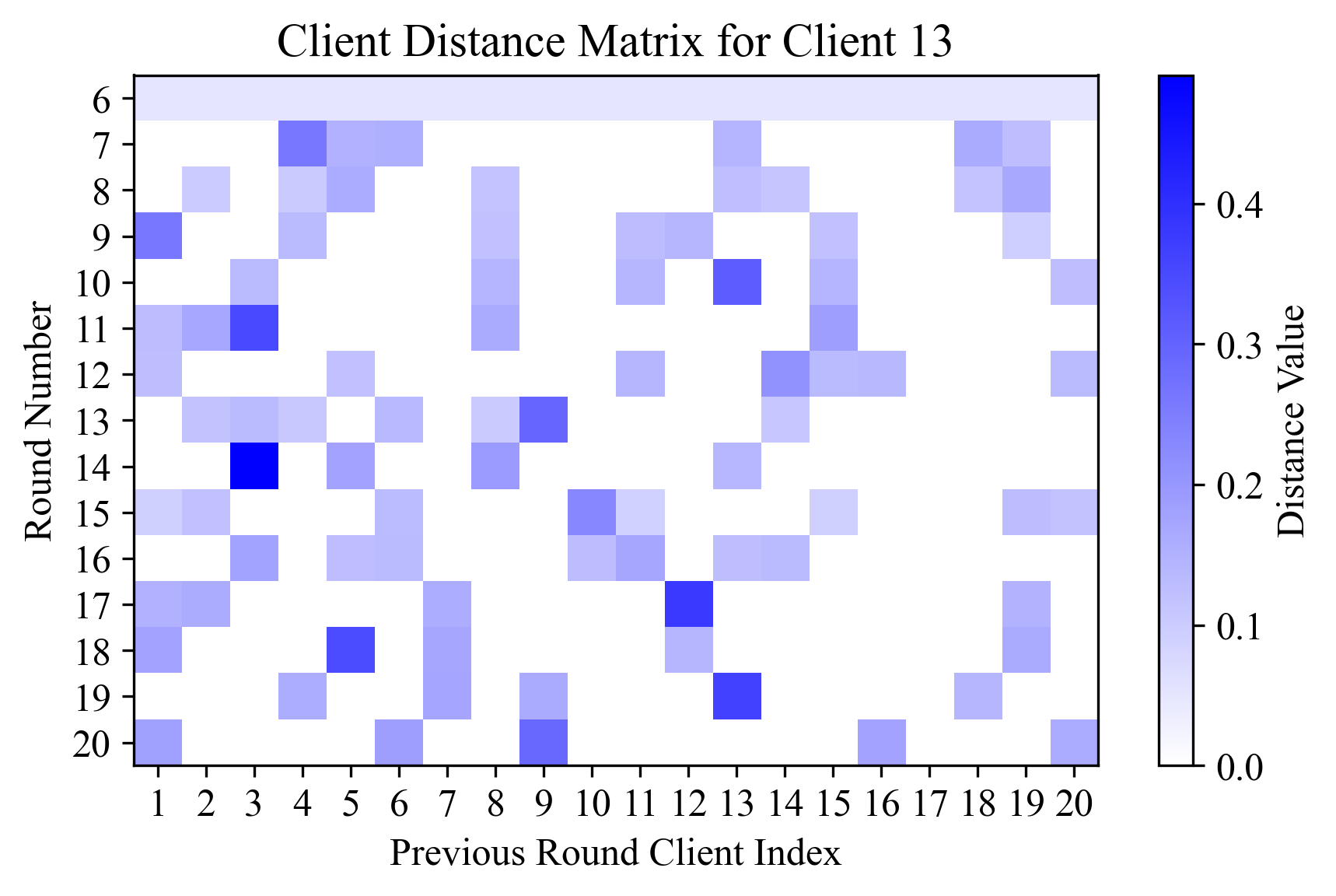}
  \end{minipage}
  \hfill
  \begin{minipage}{0.49\textwidth}
    \centering
    \includegraphics[width=\textwidth]{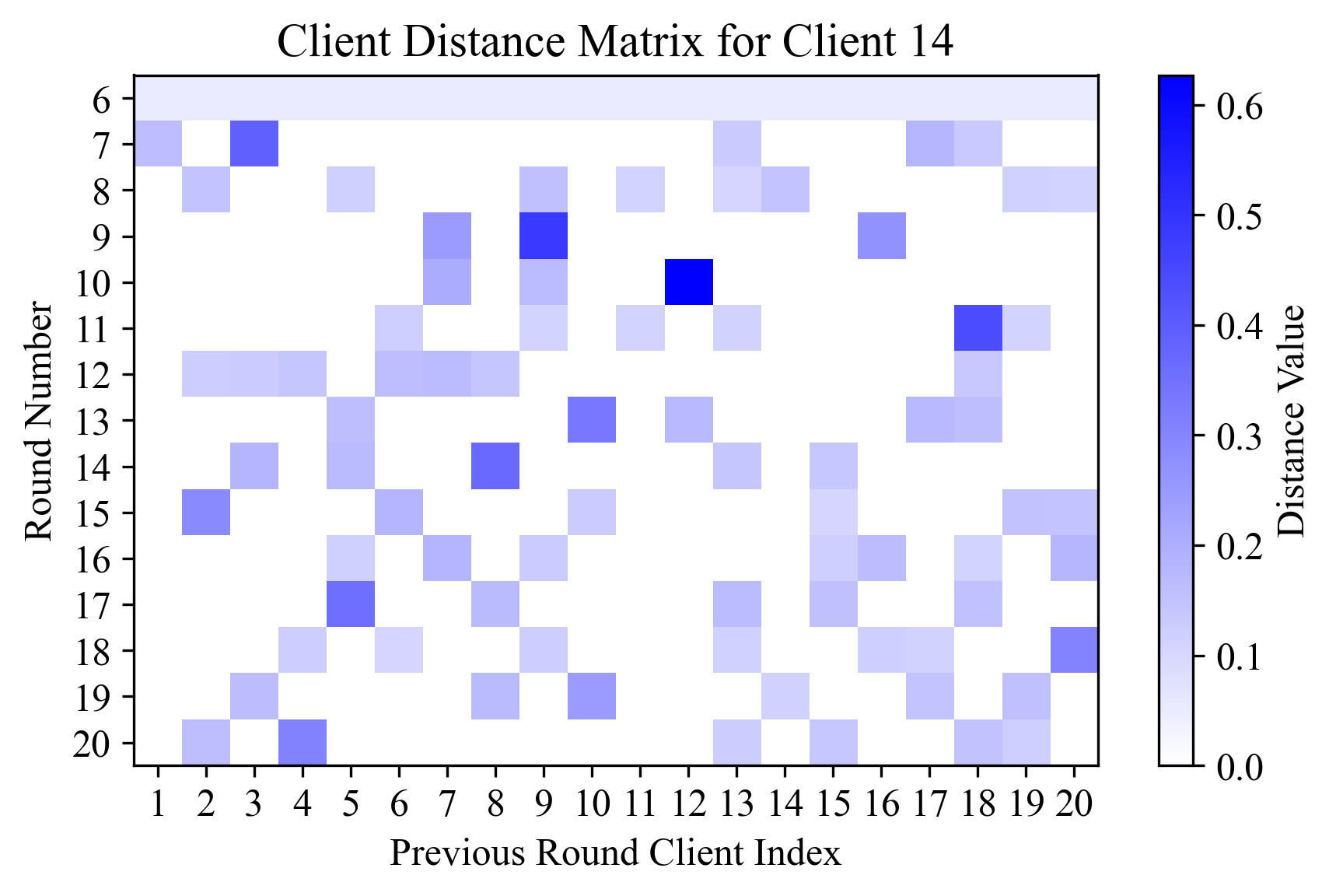}
  \end{minipage}
\end{figure}

\begin{figure}[htbp]
  \centering
  \begin{minipage}{0.49\textwidth}
    \centering
    \includegraphics[width=\textwidth]{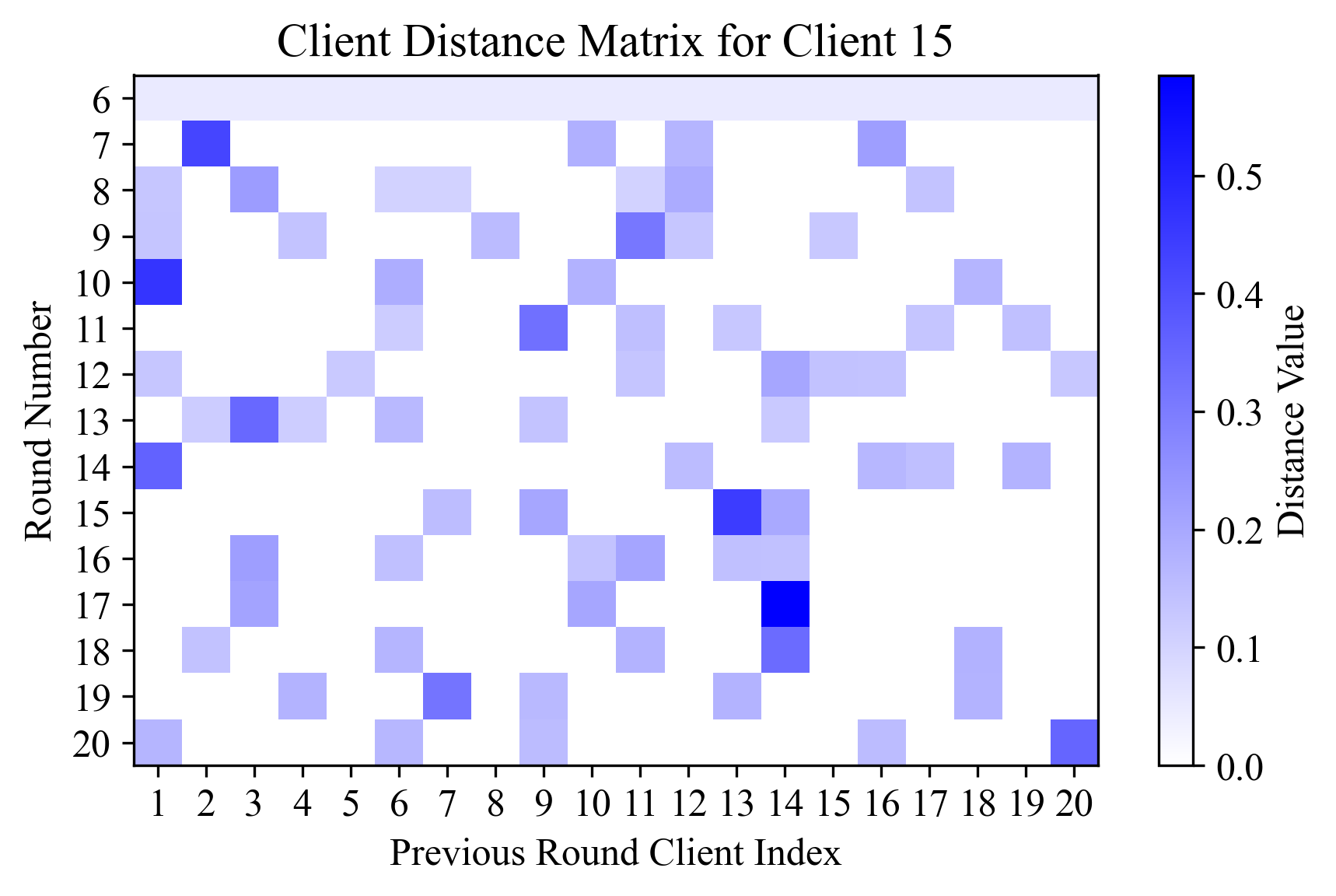}
  \end{minipage}
  \hfill
  \begin{minipage}{0.49\textwidth}
    \centering
    \includegraphics[width=\textwidth]{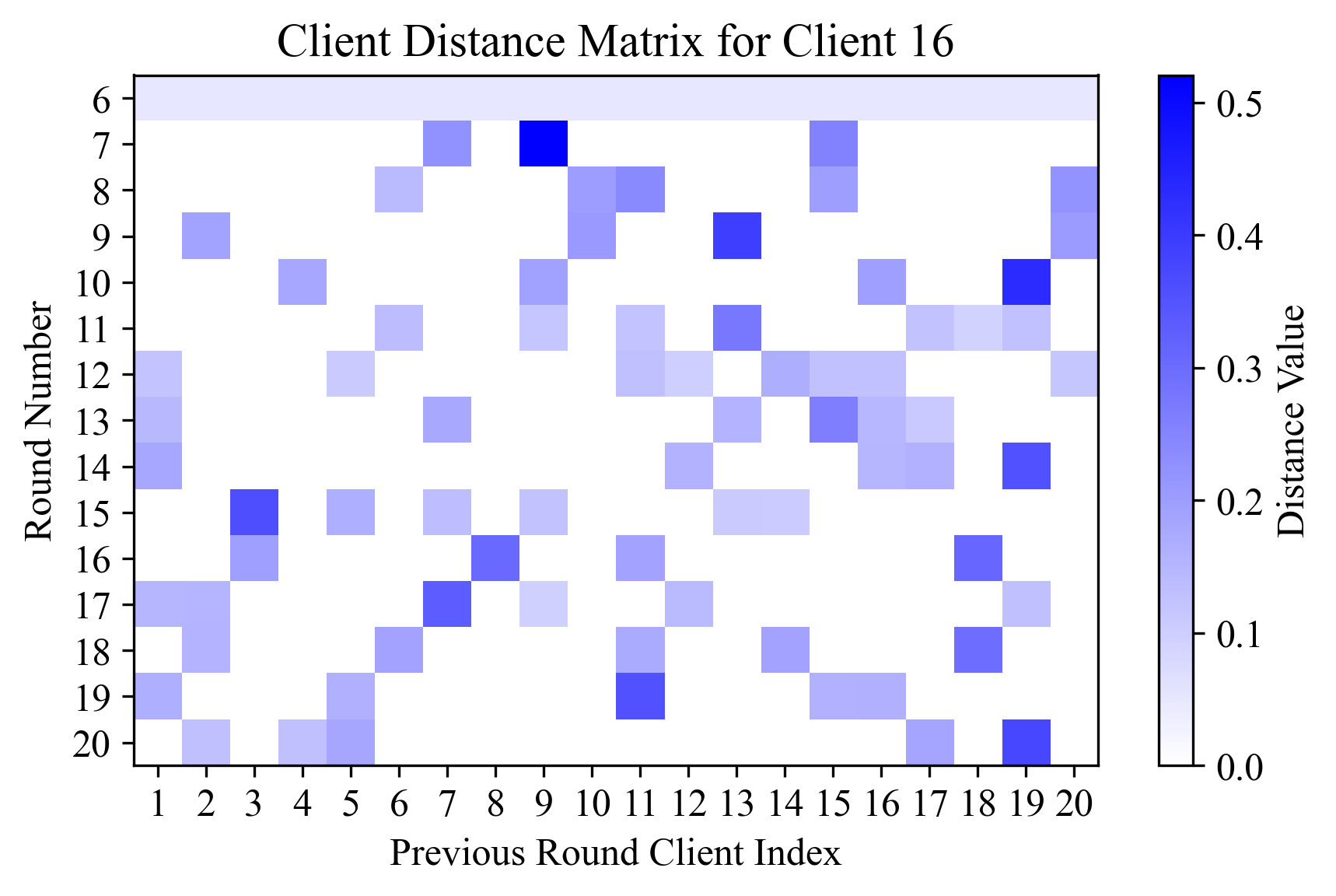}
  \end{minipage}
\end{figure}

\begin{figure}[htbp]
  \centering
  \begin{minipage}{0.49\textwidth}
    \centering
    \includegraphics[width=\textwidth]{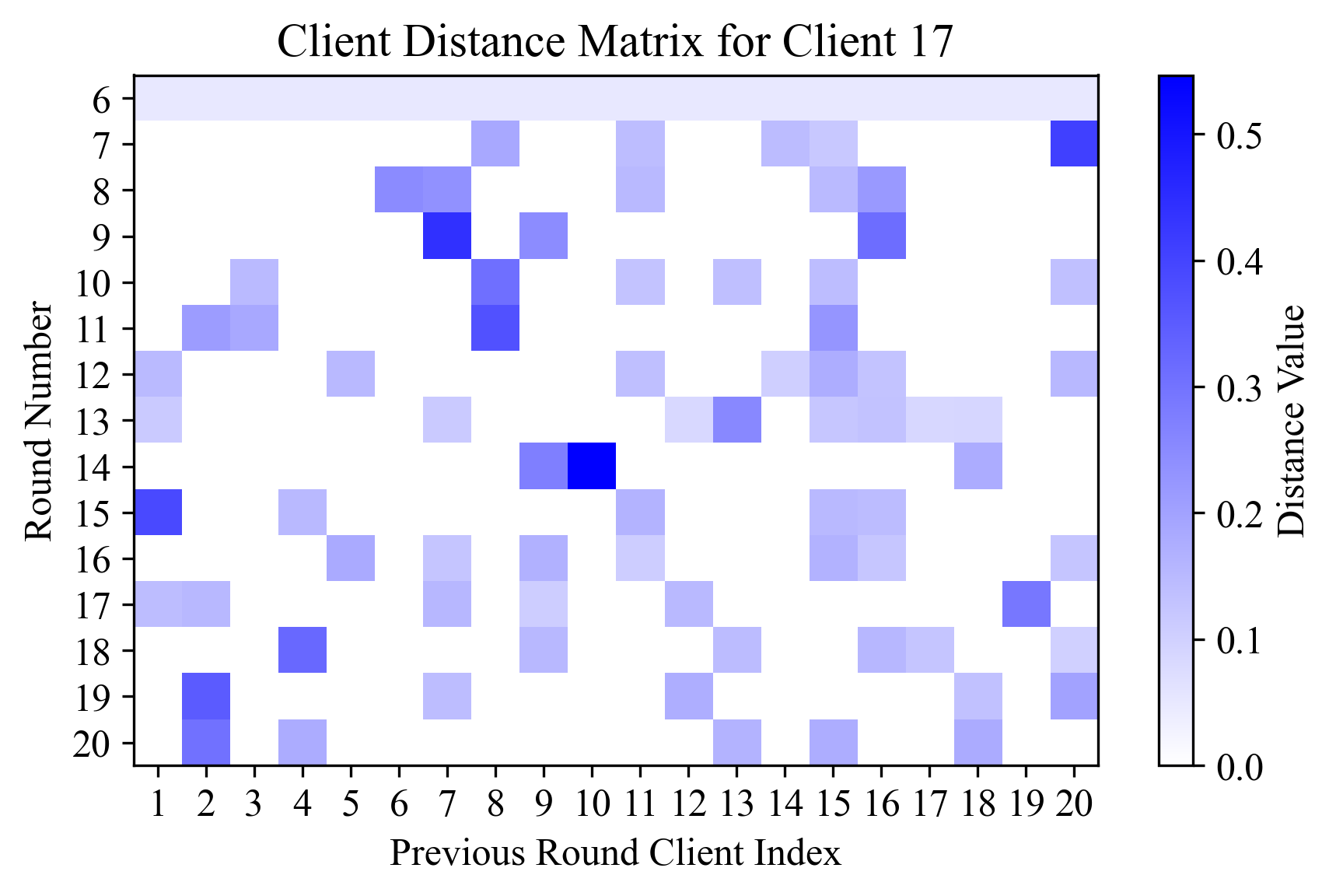}
  \end{minipage}
  \hfill
  \begin{minipage}{0.49\textwidth}
    \centering
    \includegraphics[width=\textwidth]{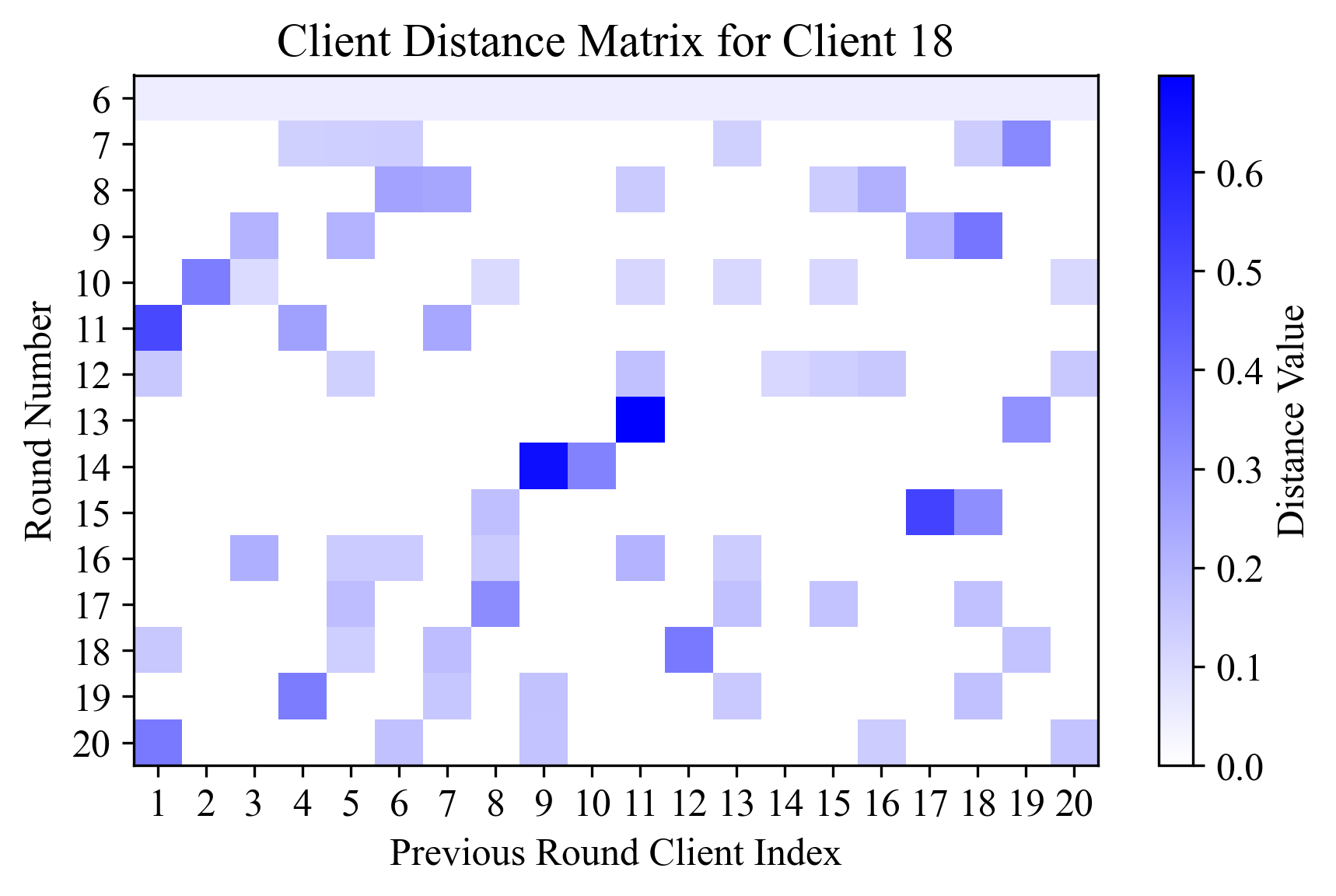}
  \end{minipage}
\end{figure}

\begin{figure}[htbp]
  \centering
  \begin{minipage}{0.49\textwidth}
    \centering
    \includegraphics[width=\textwidth]{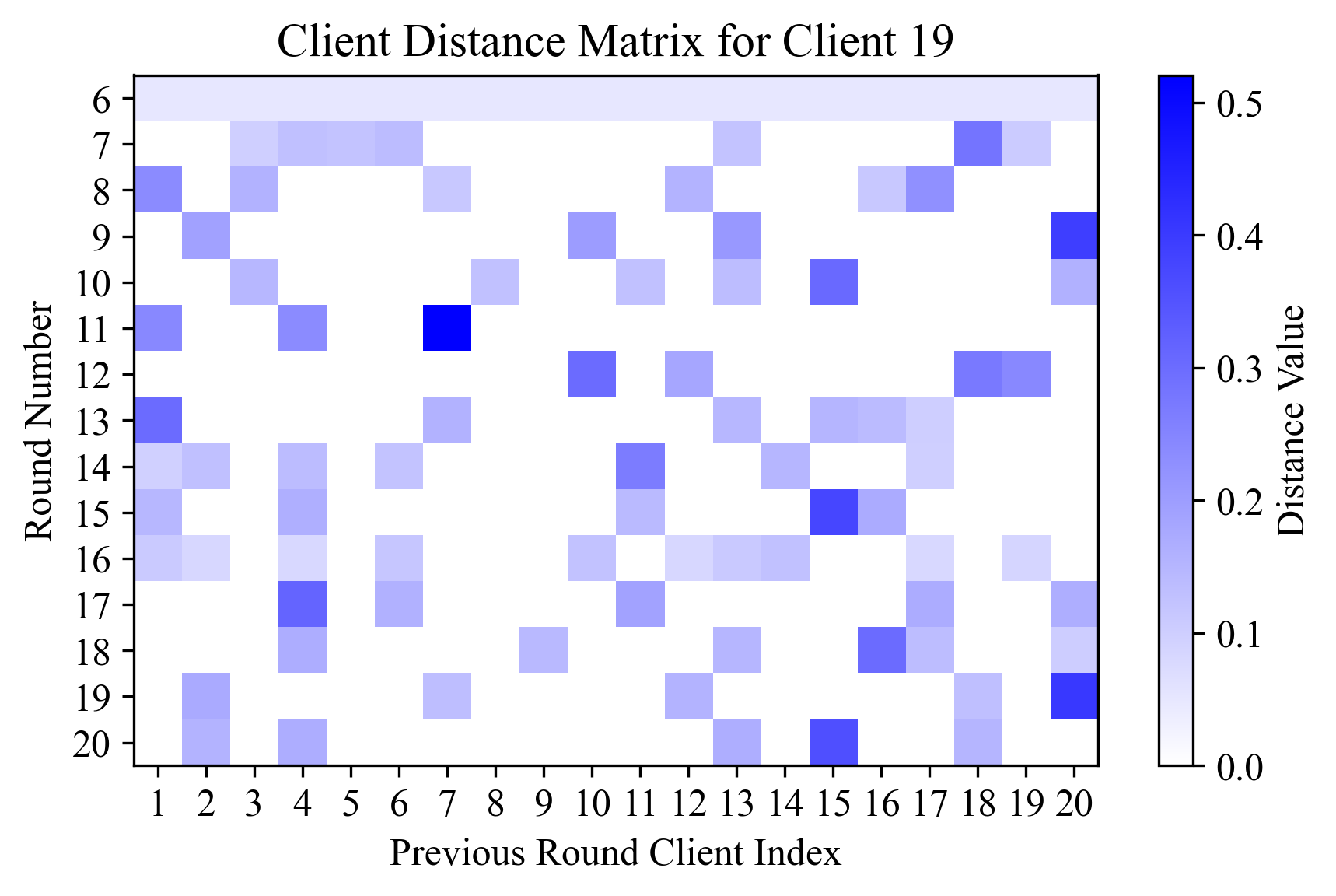}
  \end{minipage}
  \hfill
  \begin{minipage}{0.49\textwidth}
    \centering
    \includegraphics[width=\textwidth]{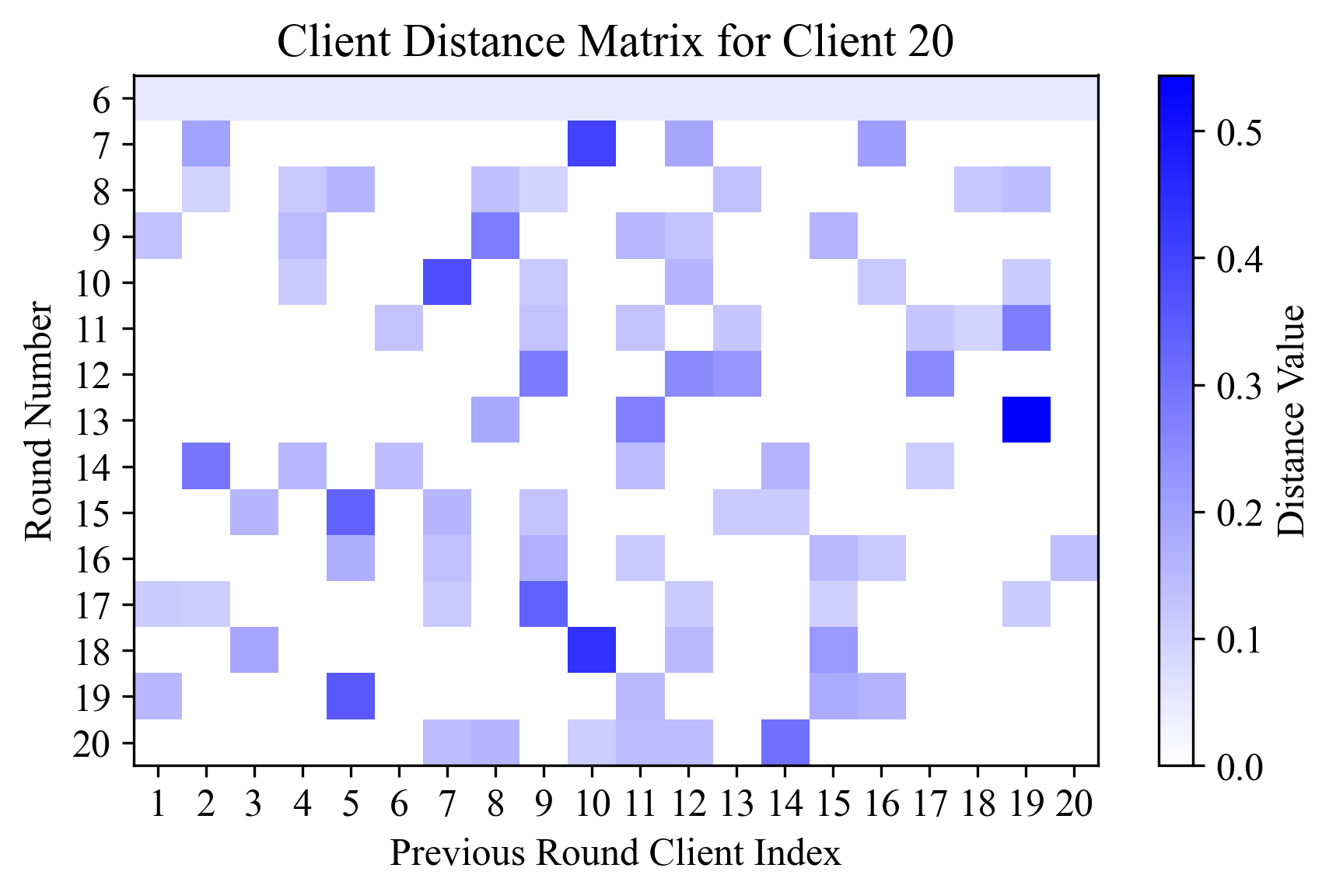}
  \end{minipage}
\end{figure}

\newpage
\section{Evaluation with Higher Number of Communication Rounds} \label{app.reb_2}

This appendix reports additional experiments on CIFAR-100 with 20 clients under medium non-IID heterogeneity of type $P(X)$, using 100 and 200 communication rounds with drift introduced every 10 rounds. We plot per-client test accuracy over time to examine long-horizon training behavior. While FedAvg exhibits large performance fluctuations and unstable convergence at each drift event, \textsc{Feroma} maintains steady accuracy improvements with only minor transient variations, demonstrating stable adaptation under prolonged dynamic non-IID conditions.

\begin{figure}[htbp]
  \centering
  \includegraphics[width=0.9\textwidth]{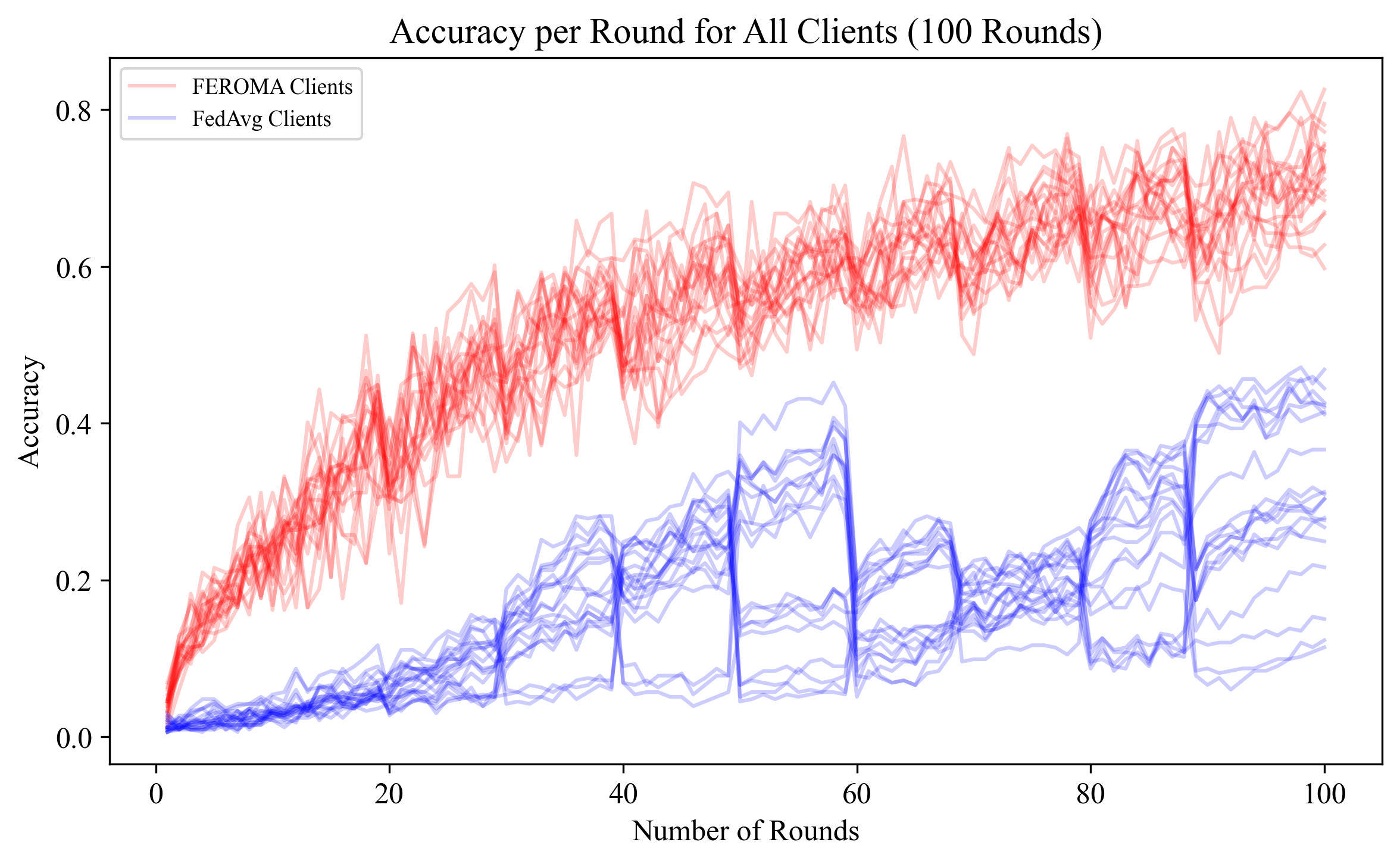}
\end{figure}

\begin{figure}[htbp]
  \centering
  \includegraphics[width=0.9\textwidth]{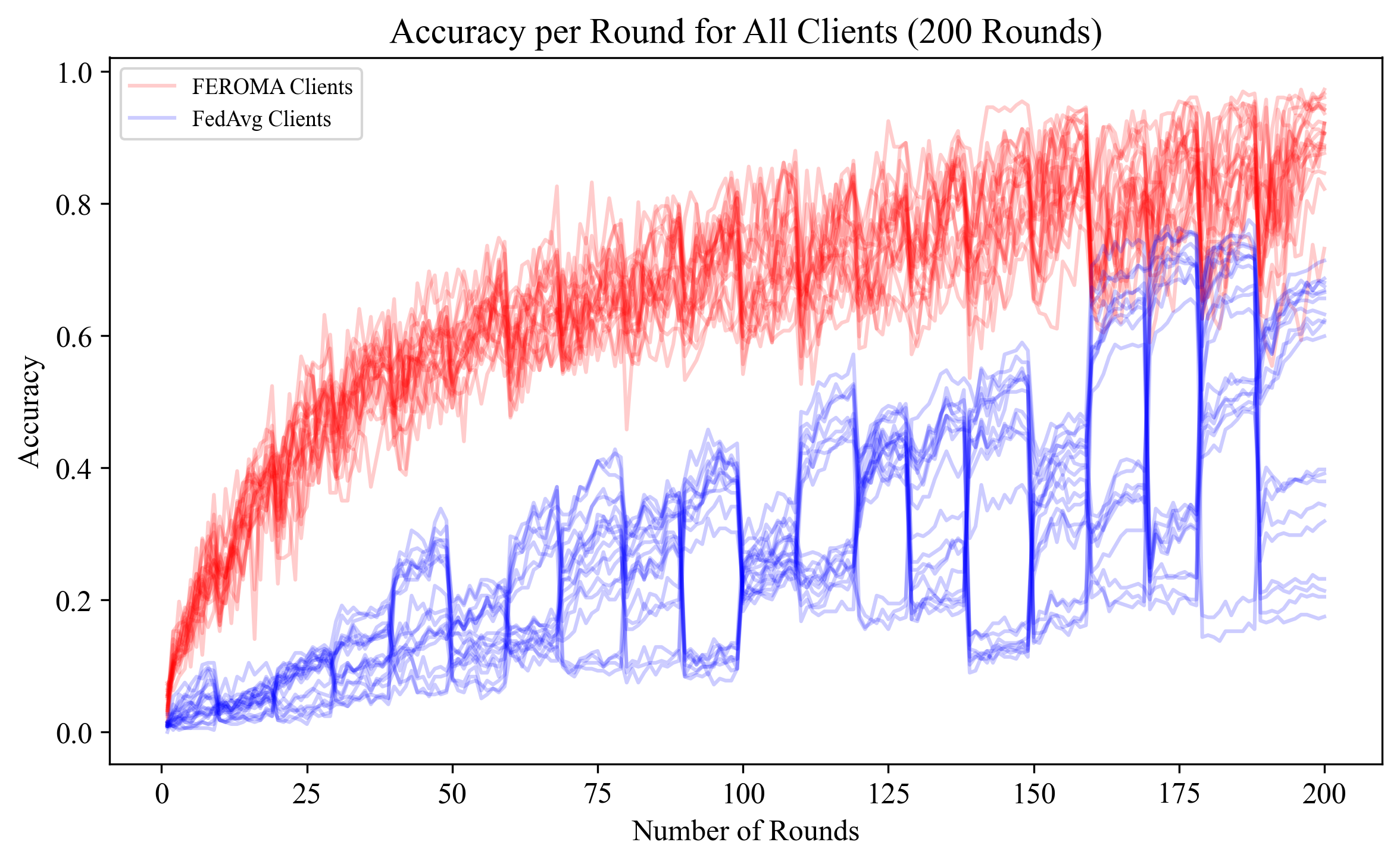}
\end{figure}

\newpage

\section{Large Language Model Usage Disclosure} \label{app.add.llm}

We disclose the use of Large Language Models (LLMs) in the preparation of this manuscript. Specifically, we used Claude (Anthropic) and GPT-4o solely for writing assistance and polishing. LLMs were used exclusively for:
\begin{enumerate}
    \item grammar correction and sentence structure improvement,
    \item clarity enhancement and readability optimization,
    \item consistency in technical terminology and notation, and
    \item general writing style refinement.
\end{enumerate}
No content, ideas, analyses, or experimental results were generated by LLMs. All suggestions were carefully reviewed, edited, and approved by the authors before incorporation. The authors retain full responsibility for the entire content, including any errors or inaccuracies.

\end{document}